\documentclass[11pt]{article}

\usepackage[margin=1in]{geometry}

\usepackage{microtype}
\usepackage{graphicx}
\usepackage{subcaption}
\usepackage{booktabs}
\usepackage{bm}
\usepackage{hyperref}
\usepackage[numbers]{natbib}
\usepackage{amsmath}
\usepackage{amssymb}
\usepackage{mathtools}
\usepackage{amsthm}
\usepackage{thmtools}
\usepackage[shortlabels]{enumitem}
\usepackage[capitalize,noabbrev]{cleveref}
\usepackage{algorithm}
\usepackage{algorithmic}
\usepackage{float}   % needed for [H]
\theoremstyle{plain}
\newtheorem{theorem}{Theorem}[section]
\newtheorem{proposition}[theorem]{Proposition}
\newtheorem{lemma}[theorem]{Lemma}
\newtheorem{fact}[theorem]{Fact}
\newtheorem{corollary}[theorem]{Corollary}
\theoremstyle{definition}
\newtheorem{definition}[theorem]{Definition}
\newtheorem{assumption}[theorem]{Assumption}
\theoremstyle{remark}
\newtheorem{remark}[theorem]{Remark}

\usepackage[textsize=tiny]{todonotes}
\newcommand{\norm}[1]{\lVert{#1}\rVert}
\newcommand{\abs}[1]{\lvert{#1}\rvert}
\newcommand{\scalarp}[1]{\langle{#1}\rangle}
\newcommand{\pair}[2]{{\langle{{#1},{#2}}\rangle}}
\DeclareMathOperator*{\argmin}{argmin}

\DeclareMathOperator{\prox}{prox}
\newcommand{\HH}{X}
\newcommand{\bHH}{\bm{X}}
\newcommand{\R}{\mathbb{R}}

\newcommand{\x}{x}
\newcommand{\m}{d}
\newcommand{\bx}{\bm{x}}
\newcommand{\by}{\bm{y}}
\newcommand{\y}{y}
\newcommand{\buu}{\bm{g}}
\newcommand{\uu}{g}
\newcommand{\w}{w}
\newcommand{\iter}{n}
\newcommand{\ite}{k}
\newcommand{\N}{\mathbb{N}}
\newcommand{\RX}{\left]-\infty, +\infty\right]}
\newcommand{\reg}{\varphi}
\newcommand{\Id}{\mathrm{Id}}
\newcommand{\W}{W}
\newcommand{\balpha}{\boldsymbol{\alpha}}

\title{AdaGrad-Diff: A New Version of the Adaptive Gradient Algorithm}

\author{
Matia Bojovi\'c$^{1,2}$ \and
Saverio Salzo$^{1,3}$ \and 
Massimiliano Pontil$^{1,4}$
}

\date{} % remove date

\begin{document}

\maketitle

\footnotetext[1]{
Computational Statistics and Machine Learning, Istituto
Italiano di Tecnologia, Genoa, Italy}
\footnotetext[2]{
Department of Mathematics, University of Genoa, Genoa, Italy}
\footnotetext[3]{DIAG, Sapienza University of Rome, Rome, Italy}
\footnotetext[4]{Department of Computer Science, University College London, London, UK}

\begin{abstract}
%\massi{Ho messo una variazione del titolo. Un'altra possibilita' e' tenere solo "A New Version..." e dire AG-Diff nella'abstract}  
%\matia{Secondo me può essere  conveniente tenere AdaGrad nel titolo, dato che è una key word molto ricercata}
\noindent
Vanilla gradient methods are often highly sensitive to the choice of stepsize, which typically requires manual tuning. Adaptive methods alleviate this issue and have therefore become widely used. Among them, AdaGrad has been particularly influential. In this paper, we propose an AdaGrad-style adaptive method in which the adaptation is driven by the cumulative squared norms of successive gradient differences rather than gradient norms themselves. The key idea is that when gradients vary little across iterations, the stepsize is not unnecessarily reduced, while significant gradient fluctuations, reflecting curvature or instability, lead to automatic stepsize damping. Numerical experiments demonstrate that the proposed method is more robust than AdaGrad in several practically relevant settings.

% Gradient Descent and its stochastic variants remain among the most widely used first-order methods in machine learning. A central limitation of vanilla gradient methods, however, is that their update rules are agnostic to the geometry of the problem, leaving a highly sensitive stepsize to be manually tuned. Adaptive methods have become extremely popular precisely because they ease this burden. Among these, AdaGrad has had a profound influence, inspiring a large family of adaptive algorithms.
% We propose an AdaGrad-style adaptive variant, where the adaptation is driven not by the cumulative
% gradient squared norms, but by the cumulative squared norms of successive gradient differences. The key idea is that if gradients remain stable across iterations, the method does not shrink the stepsize unnecessarily. However, when gradients fluctuate significantly, indicating curvature, noise, or instability, the accumulated differences grow and the method automatically dampens the stepsize. In this way, the proposed method focuses on stabilizing learning in volatile directions while retaining efficiency in smooth, predictable ones. Preliminary numerical experiments indicate that the proposed variant is a viable alternative to AdaGrad in the scenarios we examined.
\end{abstract}

\section{Introduction} 

Gradient-based optimization methods are among the most widely used tools in machine learning due to their simplicity and broad applicability. In particular, Gradient Descent (GD) and its stochastic variants remain the canonical first-order methods for large-scale optimization problems. GD generates iterates according to
$\bx^{\iter+1} = \bx^\iter - \eta_\iter \buu^\iter$, starting from an arbitrary initial point $\bx^1$, where $\buu^\iter$ denotes a (sub)gradient of the objective function evaluated at $\bx^\iter$ and $\eta_\iter$ is the stepsize. The stepsize $\eta_\iter$ is the key parameter governing convergence speed and stability. Its choice is notoriously delicate: too small a value leads to slow progress, while too large a value may prevent convergence altogether. \\

\noindent
Classic convergence analysis of the GD algorithm for convex nonsmooth functions relies on conditions on 
the positive stepsizes $ \eta_\iter $ \cite{robbins&monro}. 
In particular, sufficient conditions are that $ (\eta_\iter)_{\iter\in \N}$ is a deterministic sequence 
of non-negative numbers that are square summable but whose sum diverges.  %satisfies
%\begin{equation*}   \sum_{\iter=1}^{\infty} \eta_\iter = \infty    \qquad\text{and}\qquad   \sum_{\iter=1}^{\infty} \eta_\iter^2 < \infty.
%\end{equation*}
In the finite-horizon setting, \citet{nesterov2018lectures} showed that the choice 
\begin{equation}\label{optimal_stepsize_Nesterov}
    (\eta_\ite)_{1\leq \ite\leq \iter} \equiv \frac{D}{G\sqrt{\iter}}
\end{equation}
is worst-case optimal in the complexity class, where $D := \lVert \bx^1 - \bx^\star \rVert$ being the starting distance from a minimizer $\bx^\star$, $G$ is the Lipschitz constant and $\iter$ is the number of total iterations. 
Similarly, in the convex $L$-Lipschitz smooth case, a sufficient condition for the convergence of GD is that the stepsizes satisfy
\begin{equation}\label{constraints_Lsmooth_analysis}
    \sup_{n\in\N}\eta_n < \frac{2}{L}.
\end{equation}
However, the quantities appearing in \eqref{optimal_stepsize_Nesterov} and \eqref{constraints_Lsmooth_analysis} are typically unknown in practice, making principled stepsize selection difficult and often requiring extensive hyperparameter tuning. \\

\noindent
To overcome these limitations, a broad body of work has focused on adaptive gradient methods, which automatically adjust stepsizes based on the observed optimization history. Among these methods, AdaGrad \cite{duchi2011adaptive} has played a particularly influential role, giving rise to a large family of adaptive algorithms. Its central idea is to replace a fixed or pre-scheduled stepsize by a sequence obtained by rescaling a base parameter $\eta>0$ using accumulated information from past gradients. One of the defining features of AdaGrad is that the stepsize adaptation is performed at the level of individual components. In its most general form, the iterates satisfy
\begin{equation}\label{adagrad_uncostrained_update}
x_i^{\iter+1} = x_i^\iter - \frac{\eta}{\w_i^\iter}\,\uu_i^\iter,
\end{equation}
where each $\w_i^\iter$ is a nondecreasing quantity that cumulates the squares of the $i$-th components of the gradients along the optimization trajectory. As $\w_i^\iter$ grows, the corresponding scaling factor $\eta/\w_i^\iter$ decreases, automatically moderating progress in directions where gradients accumulate rapidly. This component-wise adaptation enables AdaGrad to adjust to the geometry of the problem, alleviating the need to pre-specify quantities such as the Lipschitz constants $G$ and $L$ introduced above.\\

\noindent
Originally, convergence analysis was established in the $G$-Lipschitz continuous setting, and was later extended to the $L$-Lipschitz smooth case. Remarkably, in this extension, there are no restrictions on the choice of the stepsize parameter $\eta$ appearing in \eqref{adagrad_uncostrained_update}, in contrast to the constraints in \eqref{constraints_Lsmooth_analysis}. However, although convergence of the method is guaranteed for any $\eta > 0$, this does not imply that all choices yield comparable performance in practice. The scalar $\eta$ therefore remains a hyperparameter requiring tuning, and poorly chosen values can significantly degrade optimization performance.\\

\noindent
In this paper, we depart from the classical AdaGrad framework and introduce
a new adaptive policy, based on a difference-based mechanism, for a proximal gradient algorithm. We address 
%aimed at reducing sensitivity to the choice of the stepsize parameter $\eta$, while retaining the efficiency of the optimization. We address  
composite optimization problems of the form
\begin{equation}
    \min_{\bx\in \bHH}  F(\bx) := f(\bx) + \reg(\bx), 
\label{eq:main_problem}
\end{equation}
where $\bHH = \bigoplus_{i=1}^\m\HH_i$ is the direct sum of $\m\geq 1$ Hilbert spaces
$\HH_1, \dots, \HH_\m$, $f\colon \bHH\to \R$ is a convex function
and $\reg\colon \bHH\to \RX$ is an extended real-valued convex and lower semicontinuous function.
A special case of \eqref{eq:main_problem} is that of $\bHH= \R^\m = \R\oplus \cdots \oplus \R$ ($\m$ times), endowed with the standard Euclidean scalar product, $f$ representing a loss function and $\reg$ an appropriate regularizer.
%In particular \matia{Da qua in poi lo metterei nella sezione esperimenti} the loss function $f$ may have the form $f(\x) = \frac{1}{N}\sum_{j=1}^N \ell(\x, z_i)$, where $z_i = (a_i, b_i)$ represents an input-output pair drawn from an (unknown) underlying distribution $\mathcal{D}$. Some possibilities for $\ell$ include
%\begin{itemize}   \item \textbf{Least-squares:} $\ell(\x, (a, b)) = (b - \scalarp{\x, a})^2$, with $a \in \R^n$, and $b \in \R$.     \item \textbf{Hinge loss:} $\ell(\x, (a, b)) = \max(0, 1 - b(\scalarp{\x, b}))$, with $a \in \R^n$ and $b \in \{+1, -1\}$. \item \textbf{Logistic regression:} $\ell(\x, (a, b)) = \log(1 + \exp(-b(\scalarp{a,x})))$, with $x \in \R^n$, $y \in \{+1, -1\}$. \end{itemize}
%Examples of the regularization term $\reg(\x)$ include:
%\begin{itemize}   \item \textbf{$\ell_1$-regularization:} $\reg(\x) = \lambda \|\x\|_1$ with $\lambda > 0$. \item \textbf{$\ell_2$-regularization:} $\reg(\x) = (\sigma/2) \|\x\|_2^2$ with $\sigma > 0$. \item \textbf{Convex constraints:} $\reg(\x)=\iota_C(\x)$ is the \textit{indicator function} of a closed convex set $C$, i.e., $\reg(\x) = 0$ if $\x \in C$ and $+\infty$ otherwise. \end{itemize}

\subsection{Related works}
Gradient-based optimization methods form the foundation of machine learning and signal processing. Among these approaches, adaptive methods have proven especially effective in practice, with AdaGrad serving as a foundational algorithm that has inspired several extensions. The original AdaGrad algorithm, designed for stochastic online optimization, was proposed simultaneously by \cite{duchi2011adaptive} and \cite{streeter2010less}.\\

\noindent
Subsequently, a number of modifications of AdaGrad have been proposed, particularly with the aim of avoiding the continual decay of the stepsizes and the need of tuning the parameter $\eta$.
In this line of works, RMSProp \cite{hinton} replace AdaGrad’s cumulative accumulation of squared gradients with exponential moving averages to prevent excessively rapid stepsize decay.
AdaDelta \cite{zeiler2012adadelta} improves RMSProp by removing the parameter $\eta$, by leveraging approximate Hessian information.  Next, Adam \cite{kingma2014adam} further combines RMSProp-like aggregation of the gradients with momentum, yielding an optimizer that has become widely used and effective in practice. 
In the spirit of AdaDelta, which seeks to remove the need for a manually chosen stepsize $\eta$, several works have developed parameter-free algorithms based on the AdaGrad framework. Building on the theoretically optimal stepsize in \eqref{optimal_stepsize_Nesterov}, \citet{ivgi2023dog}, and \citet{defazio2023learning} introduce additional adaptivity in the numerator of the stepsize, namely setting $\eta_\iter = \gamma_\iter/\w^\iter$, where the sequence $(\gamma_\iter)_{\iter\in\N}$ serves as an estimate of the constant $D$. 
Other methods achieve last-iterate convergence rates in the $L$-Lipschitz smooth convex setting by appropriately reweighting the cumulative sum of the squared gradients \cite{liu2022convergence}. 
Finally, certain approaches tackle undesirable scale-dependence by removing the square-root in the AdaGrad update \cite{choudhury2024remove}, extending applicability to $L$-Lipschitz smooth, potentially nonconvex objectives. These contributions highlight that, 
carefully designed variants of the stepsize updating rule in AdaGrad may substantially improve performances in real applications.\\

\noindent
Our contribution belongs to this stream of works. Indeed, we propose a new modification of AdaGrad’s adaptivity rule that is motivated by stability considerations arising in practice, while simultaneously providing theoretical guarantees. More specifically, instead of accumulating sums of squared gradients, we compute cumulative sums of squared gradient differences. This proposal does not alter the underlying update structure and retains AdaGrad’s practical appeal, while addressing limitations identified in both empirical and theoretical settings.\\

\noindent
Of particular relevance to the present work are two recent results on vanilla AdaGrad for $L$-Lipschitz smooth convex objectives in a finite-dimensional setting. \citet{traore2021sequential} establish, for the first time, the convergence of the iterate. On the other hand, \citet{liu2022convergence} derive explicit convergence rates in the objective gap, although with an exponential dependence on the dimension of the space.\\

\noindent
We conclude this section by briefly discussing the stochastic setting.
One main difficulty in handling  stochastic gradients in AdaGrad-like schemes 
is that the stepsizes become stochastic random variables which appear multiplicatively in front of the gradients. To overcome this issue appropriate modifications of the stepsize updating rule have been proposed. \citet{li2019convergence} remove the most recent gradient from the cumulative sum of squared gradients, ensuring that $\eta_{\iter}$ is conditionally independent on the current gradient $\buu^\iter$.
On the other hand, \citet{ward2020adagrad} analyze the vanilla AdaGrad scheme and introduce, in their analysis, a proxy stepsize $\tilde{\eta}_\iter$ that is conditionally independent on $\buu^\iter$. This construction allows them to decouple the gradient and the stepsize. 
Building on the same technique, \citet{defossez2020simple} employ a similar argument to establish theoretical guarantees for Adam.
Finally, \citet{faw2022power} further generalize the results of \citet{ward2020adagrad} by relaxing the assumptions on the gradient noise.

\subsection{Contributions}
% We propose a new algorithm based on the AdaGrad step-size rule with a single key innovation: we accumulate the norms of successive gradient differences instead of gradient norms. The intuition is straightforward: when the function’s landscape remains relatively stable along the optimization trajectory, that is, when the gradient does not fluctuate significantly, the method maintains larger stepsizes without unnecessary penalty. Conversely, when gradients change significantly between iterations, the accumulated differences trigger stepsize reduction to stabilize the optimization dynamics. 
In this section, we collect the main contributions. 
\begin{itemize}
\item We propose a new algorithm based on a modification of the AdaGrad stepsize rule with a single key innovation: we accumulate the norms of successive gradient differences instead of gradient norms. 
\item We provide convergence rates in the objective gap for problem \eqref{eq:main_problem} when $f$ is either $G$-Lipschitz continuous or $L$-Lipschitz smooth, retaining $\mathcal{O}\left(1/\sqrt{n}\right)$ or $\mathcal{O}\left(1/n\right)$ decay, respectively.
\item We provide (weak) convergence of the iterates to a minimizer for problem \eqref{eq:main_problem} when $f$ is $L$-Lipschitz smooth. To the best of our knowledge, in the composite setting, such guarantee has not been established for AdaGrad, even for finite dimension.
\item Finally, we evaluate our algorithm on a range of problems. 
Experiments show that AdaGrad-Diff consistently performs well over a broader range of values of the parameter $\eta$ compared to AdaGrad. Moreover, the algorithm naturally mitigates suboptimal choices of $\eta$: when $\eta$ is too small, the effective stepsize adapts to allow more aggressive progress, while overly large values of $\eta$ are moderated through the same mechanism. Overall, this adaptive behavior enhances stability without sacrificing efficiency and substantially reduces the need for extensive hyperparameter tuning.
% In the cases considered, the method automatically reduces the stepsize in directions where gradients change rapidly between iterations, while maintaining larger stepsizes when the gradient remains stable. The experiments show that this behavior improves stability without sacrificing efficiency and makes the algorithm less sensitive to the choice of $\eta$, reducing the need for extensive hyperparameter tuning.
\end{itemize}

\subsection{Organization of the paper}

The remainder of this section introduces the notation used throughout the paper. In Section~\ref{2}, we present the algorithmic framework and state the main theoretical results. Section~\ref{3} develops the key preparatory lemmas and technical tools required for the analysis. Section~\ref{4} reports numerical experiments illustrating the performance of the proposed method. Some proofs and additional experimental results are deferred to the Appendix.
\subsection{Notation}

We denote by $\N$ the set of natural numbers excluding zero and set
$\R_+ = \left[0,+\infty\right[$ and $\R_{++} = \left]0,+\infty\right[$.
For every $\m\in \N$, we also set $[\m]= \{1,\dots, \m\}$. If $\HH$
is a Hilbert space we denote by $\scalarp{\cdot, \cdot}$ and $\norm{\cdot}$
the scalar product and norm of $\HH$ respectively. 
%We denote by $\Sb(\HH)$ the space of self-adjoint and bounded linear operators on $\HH$.
The Loewner ordering on linear bounded self-adjoint operators on $\HH$
 is defined as $\W_1 \preceq \W_2 \Leftrightarrow\ (\forall\,\x\in \HH)\ \scalarp{\W_1\x,\x} \leq \scalarp{\W_2\x,\x}$. When $\W\succeq 0$ we say that $\W$ is positive.
We denote by $\Gamma_0(\HH)$ the set of extended real-valued functions
$f\colon \HH\to \RX$ which are proper convex and lower semicontinuous.
If $f\in \Gamma_0(\HH)$ its subdifferential is the set-valued mapping
$\partial f\colon \HH\to 2^\HH$ such that, for every $\x\in \HH$, $\partial f(\x)=\{\uu\in \HH\,\vert\, \forall\,\y\in \HH\colon f(\y)\geq f(\x) + \scalarp{\y-\x, \uu}\}$.
If $\HH_1, \dots, \HH_{\m}$
are Hilbert spaces their direct sum is the vector space $\bHH= \prod_{i=1}^\m \HH_i$  
endowed with the scalar product $\scalarp{\bx, \by} = \sum_{i=1}^\m \scalarp{\x_i, \y_i}$, where
$\bx=(\x_i)_{1\leq i\leq \m}$ and $\by=(\y_i)_{1\leq i\leq \m}$. This Hilbert space is denoted by $\bigoplus_{i=1}^\m \HH_i$. If $\w_1, \cdots, \w_m\in \R_{++}$ and $\Id_i$ denote the identity operator of $\HH_i$ we denote by $\W = \bigoplus_{i=1}^\m \w_i \Id_i\colon \bHH\to \bHH$ the block-diagonal operator $\bx \mapsto (\w_i \x_i)_{1\leq i\leq \m}$, which is positive self-adjoint bounded operator with $\alpha \Id \preceq \W_\iter\preceq \beta\Id$, where $0<\alpha:= \min_{1\leq i\leq \m}\w_i \leq \max_{1\leq i\leq \m}\w_i=\norm{\W_n}=:\beta$. Thus, we define the scalar product on $\bHH= \bigoplus_{i=1}^\m \HH_i$, as
$\scalarp{\bx,\by}_{\W}= \scalarp{\W \bx, \by}=\sum_{i=1}^\m \w_i \scalarp{x_i, y_i}$. 
%The corresponding norm is $\norm{\bx}_{\W}= (\scalarp{\W \bx, \bx})^{1/2} = (\sum_{i=1}^\m w_i \norm{x_i}^2)^{1/2}$.

\section{Algorithm and Main Results}\label{2}

In this section we make precise the setting, we present the algorithm, and finally the main results of the paper.

\begin{assumption}\label{ass:0}
$\bHH$ is a Hilbert space which is the direct sum of $\m\geq 1$ Hilbert spaces $\HH_1, \dots, \HH_\m$.
The function $f\colon \bHH\to \R$ is convex and the function $g\in\Gamma_0(\bHH)$. Moreover, $F := f+g$ and $\argmin F\neq \varnothing$. We denote the minimal value of $F$ by $F_\star$.
\end{assumption}
\noindent
We consider two additional alternative assumptions for $f$.\\
\begin{assumption}\label{ass:1}
$f$ is $G$-Lipschitz continuous with $G>0$, i.e.,
\begin{equation*}
    \forall\,\bx,\by\in \bHH\colon \abs{f(\bx)-f(\by)}\leq G\norm{\bx-\by}.
\end{equation*}
\end{assumption}

\begin{assumption}\label{ass:2}
$f$ is $L$-Lipschitz smooth with $L>0$, i.e., it is differentiable and
\begin{equation*}
    \forall\,\bx,\by\in \bHH\colon \norm{\nabla f(\bx)-\nabla f(\by)}\leq L\norm{\bx-\by}.
\end{equation*}
\end{assumption}
\noindent
We now introduce the algorithm.
We consider a variable metric setting defined by a sequence $(\W_\iter)_{\iter\in \N}$ of positive definite block-diagonal bounded linear operators
\begin{equation*}
\W_\iter\colon \bHH\to \bHH\colon \bx\mapsto 
%(\w^{(\iter)}_i x_i)_{1\leq i\leq \m},
\begin{bmatrix}
\w_1^n \x^1\\
\vdots\\
\w_\m^n \x^\m\\
\end{bmatrix}
\end{equation*}
where $\w_i^\iter>0$ are properly set.
For every $\iter\in \N$, $\W_\iter$ defines the norm
\begin{equation}
    \|\bx\|_{\W_\iter} := \sqrt{\scalarp{\W_\iter \bx, \bx}} = \sqrt{\sum_{i=1}^\m \w_i^{\iter} \norm{x_i}^2}.
\end{equation}
\noindent
Then, for the current iteration $\bx^\iter$, given a parameter $\eta_\iter > 0$ and 
 $\buu^\iter\in \partial f(\bx^\iter)$, the \emph{proximal (sub)gradient} update in the geometry induced by $\W_\iter$ is defined as
 \begin{align}
    \label{eq:prox-update-general}
    \nonumber\bx^{\iter+1} \!&= 
    \argmin_{\bx}\left\{\scalarp{\buu^\iter, \bx-\bx^\iter} + \reg(\bx) + \frac{\norm{\bx-\bx^\iter}_{\W_\iter}^2}{2\eta_\iter} \right\}\\
    &=\prox^{\W_n}_{\eta_\iter \reg}(\bx^\iter - \eta_\iter \W_\iter^{-1}\buu^\iter).
\end{align}
A classical choice for the diagonal geometry is the \emph{AdaGrad metric}
\begin{equation*}
%\label{AdaGrad_stepsizes}
(\forall\,i\in [\m])\  \   \w_{i}^\iter
    := \varepsilon+\sqrt{\sum_{\ite=1}^\iter \norm{\uu^\ite_{i}}^2},\ \ \eta_\iter \equiv \eta,\ \varepsilon > 0,
\end{equation*}
where, for every $\ite\in \N$, $\buu^\ite = (\uu_i^\ite)_{1\leq i\leq \m}$.
When $\reg \equiv 0$, the proximal gradient step above reduces to the standard (unconstrained) AdaGrad update, namely
\begin{equation}
(\forall\,i\in [\m])\ \    \x_i^{\iter+1} = \x_{i}^\iter - \frac{\eta}{\varepsilon +   \sqrt{\sum_{\ite=1}^\iter \norm{\uu_{i}^\ite}^2}} \uu_{i}^\iter.
\end{equation}
For general $\reg$, the update \eqref{eq:prox-update-general} combines adaptive geometry with regularization.\\

\noindent
Our method follows the composite and geometry-aware structure of \eqref{eq:prox-update-general}. The key departure from AdaGrad is in the construction of the metric $\W_\iter$: rather than accumulating squared subgradient norms, we accumulate the squared norms of successive subgradient differences. 
Specifically, the positive definite block-diagonal operator $\W_\iter$ is defined by the following parameters
\begin{equation}\label{AdaDiff}
(\forall\,i\in [\m])\ \ \w_{i}^{\iter} := \varepsilon+\sqrt{\sum_{\ite=1}^\iter \norm{\uu_{i}^\ite - \uu_{i}^{\ite-1}}^2}\;, 
\end{equation}
\noindent
where we use the convention that $\buu^0 = 0$. This construction directly encodes our key insight: when subgradients remain stable across iterations, the accumulated differences grow slowly and the stepsize remains large; on the contrary when subgradients fluctuate significantly, the accumulated differences grow rapidly and the stepsize reduces to stabilize the optimization. 
The resulting algorithm maintains the composite structure and diagonal geometry of AdaGrad while providing refined adaptation that distinguishes between stable and volatile directions during optimization. 

\begin{algorithm}[H]
\caption{AdaGrad-Diff}\label{adadiff}
\begin{algorithmic}[1]
\STATE \textbf{Input:} $\bx^1 \in \bHH$, $\eta > 0$ and $\varepsilon > 0$.
\STATE Set $\buu^0 = 0\in \HH$. 
\FOR{$\iter = 1,2, \dots,$}
    \STATE Let $\buu^\iter \in \partial f(\bx^\iter)$
    \STATE $(\forall\, i\in [\m])\ \w^\iter_i = \varepsilon+\left(\sum_{\ite=1}^\iter \norm{\uu_i^\ite - \uu_i^{\ite-1}}^2\right)^{1/2}$  
    \STATE $\W_\iter = \bigoplus_{i=1}^\m \w_i^\iter \Id_i\phantom{\Big(}$
    \STATE $\bx^{\iter+1} = \prox^{\W_\iter}_{\eta\reg} (\bx^\iter - \eta \W_\iter^{-1} \buu^\iter )$
\ENDFOR
\end{algorithmic}
\end{algorithm}
\noindent
We now present the main results
\begin{theorem}\label{thm_nonsmooth}
Under Assumptions~\ref{ass:0} and \ref{ass:1}, let $(\bx^\iter)_{n\in\N}$ be the sequence generated by Algorithm~\ref{adadiff} and suppose that $(\bx^\iter)_{n\in\N}$ is bounded.\footnote{This can be ensured if $\mathrm{dom}\,\reg$ is bounded.} Define, for every $n\in \N$, $\bar{\bx}^\iter = (\iter-1)^{-1}\sum_{\ite=2}^\iter \bx^\ite$. Then, the following rate of convergence hold
    \begin{equation}
        F(\bar{\bx}^\iter) - F_\star = \mathcal{O}\left(\frac{1}{\sqrt{\iter}}\right).
    \end{equation}
\end{theorem}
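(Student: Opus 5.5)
Since the iterates are bounded and \(\bx^\star\in\argmin F\), we may set \(D_\infty:=\sup_{\iter}\max_i\norm{x_i^\iter-x_i^\star}<\infty\). The proof follows the standard AdaGrad analysis for proximal subgradient methods, adapted to the metric \eqref{AdaDiff}. Write \(\bx^{\iter+1}\) through its optimality condition: \(0\in \buu^\iter+\partial\reg(\bx^{\iter+1})+\eta^{-1}\W_\iter(\bx^{\iter+1}-\bx^\iter)\). Combining this with convexity of \(f\) (via \(\buu^\iter\)) and of \(\reg\) at \(\bx^{\iter+1}\), we get for \(\bx=\bx^\star\)
\begin{equation*}
f(\bx^\iter)+\reg(\bx^{\iter+1})-F_\star\le \frac{\norm{\bx^\iter-\bx^\star}_{\W_\iter}^2-\norm{\bx^{\iter+1}-\bx^\star}_{\W_\iter}^2-\norm{\bx^{\iter+1}-\bx^\iter}_{\W_\iter}^2}{2\eta}+\scalarp{\buu^\iter,\bx^\iter-\bx^{\iter+1}}.
\end{equation*}
To turn the left side into \(F(\bx^{\iter+1})-F_\star\), we use \(f(\bx^{\iter+1})\le f(\bx^\iter)+\scalarp{\buu^{\iter+1},\bx^{\iter+1}-\bx^\iter}\). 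The linear terms then combine into \(\scalarp{\buu^{\iter+1}-\buu^\iter,\bx^{\iter+1}-\bx^\iter}\). This pairing with gradient \emph{differences} is exactly what the new metric controls, and it is why the average starts at \(\ite=2\).

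By Young's inequality blockwise, \(\scalarp{\uu_i^{\iter+1}-\uu_i^\iter,x_i^{\iter+1}-x_i^\iter}\le \frac{w_i^\iter}{2\eta}\norm{x_i^{\iter+1}-x_i^\iter}^2+\frac{\eta}{2w_i^\iter}\norm{\uu_i^{\iter+1}-\uu_i^\iter}^2\). The first part cancels the negative term. The second part is bounded using \(w_i^\iter\) rather than \(w_i^{\iter+1}\), which is the main obstacle: there is an index mismatch. I would handle it with \(\norm{\uu_i^{\iter+1}-\uu_i^\iter}\le 2G\), so that \(w_i^{\iter+1}\le w_i^\iter+2G\) and \(w_i^{\iter+1}/w_i^\iter\le 1+2G/\varepsilon\). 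Hence \(\frac{1}{w_i^\iter}\le \frac{C}{w_i^{\iter+1}}\) with \(C=1+2G/\varepsilon\). The standard lemma \(\sum_{\ite}a_\ite/\sqrt{\sum_{j\le \ite}a_j}\le 2\sqrt{\sum_\ite a_\ite}\) then gives, for each block, \(\sum_{\ite=1}^{\iter-1}\norm{\uu_i^{\ite+1}-\uu_i^\ite}^2/w_i^{\ite+1}\le 2\sqrt{\sum_{\ite\le \iter}\norm{\uu_i^\ite-\uu_i^{\ite-1}}^2}\le 2\cdot 2G\sqrt{\iter}\).

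Next we sum the telescoping distance terms. Since \(w_i^\ite\) is nondecreasing,
\begin{equation*}
\sum_{\ite=1}^{\iter-1}\Bigl(\norm{\bx^\ite-\bx^\star}_{\W_\ite}^2-\norm{\bx^{\ite+1}-\bx^\star}_{\W_\ite}^2\Bigr)\le \norm{\bx^1-\bx^\star}_{\W_1}^2+\sum_{\ite=2}^{\iter-1}\sum_i (w_i^\ite-w_i^{\ite-1})\norm{x_i^\ite-x_i^\star}^2\le \sum_{i=1}^\m D_\infty^2\, w_i^{\iter-1}.
\end{equation*}
Each \(w_i^{\iter}\le \varepsilon+2G\sqrt{\iter}\), so this is also \(\mathcal O(\sqrt{\iter})\). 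Boundedness of the iterates is used precisely here.

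Finally we sum \(F(\bx^{\ite+1})-F_\star\) over \(\ite=1,\dots,\iter-1\), which gives a total of \(\mathcal O\bigl(\m(D_\infty^2/\eta+\eta C)G\sqrt{\iter}\bigr)\). Dividing by \(\iter-1\) and applying Jensen to the convex \(F\) at \(\bar\bx^\iter=(\iter-1)^{-1}\sum_{\ite=2}^\iter\bx^\ite\) yields \(F(\bar\bx^\iter)-F_\star=\mathcal O(1/\sqrt{\iter})\).
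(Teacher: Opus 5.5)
Your proof is correct, and most of it matches the paper's argument. Your one-step inequality (optimality condition, three-point identity, the swap via $f(\bx^{\iter+1})\le f(\bx^\iter)+\scalarp{\buu^{\iter+1},\bx^{\iter+1}-\bx^\iter}$, then blockwise Young) is exactly inequality \eqref{lemma1_bound} of Lemma~\ref{lemma1}. Your telescoping of the distance terms under bounded iterates is Corollary~\ref{Proposition1}.

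The genuine difference is in the step you flag as the main obstacle. Young produces weights $1/\w_i^{\ite}$, while Lemma~\ref{lemma2} needs $1/\w_i^{\ite+1}$.

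\begin{itemize}
\item \emph{Your route (multiplicative).} From $\w_i^{\ite+1}\le \w_i^\ite+2G$ you get $1/\w_i^\ite\le(1+2G/\varepsilon)/\w_i^{\ite+1}$.
\item \emph{The paper's route (additive), in Proposition~\ref{main_theorem}.} It writes $1/\w_i^{\ite-1}=1/\w_i^{\ite}+(1/\w_i^{\ite-1}-1/\w_i^{\ite})$ and sends the first piece to Lemma~\ref{lemma2}. The second piece is bounded by a telescoping sum, $4G^2\sum_{\ite}(1/\w_i^{\ite-1}-1/\w_i^{\ite})\le 4G^2/\w_i^1$.
\end{itemize}

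Your version is a one-liner, but the factor $1+2G/\varepsilon$ multiplies the leading $\sqrt{\iter}$ term. Using $\w_i^\ite\ge\w_i^1$ improves it only to $1+2G/\w_i^1$. This is a very poor constant for small $\varepsilon$ such as the $10^{-8}$ used in the experiments. The paper's split keeps the coefficient $\eta$ in front of the $\sqrt{\iter}$ term. It confines the dependence on $1/\w_i^1$, the one discussed under Limitations, to an additive $\mathcal{O}(1/\iter)$ term.

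A minor point: you bound each block separately by $2G\sqrt{\iter}$, which costs a factor $\m$. Cauchy--Schwarz across blocks, $\sum_{i}\bigl(\sum_{\ite\le\iter}\norm{\uu_i^\ite-\uu_i^{\ite-1}}^2\bigr)^{1/2}\le\sqrt{\m}\,\bigl(\sum_{\ite\le\iter}\norm{\buu^\ite-\buu^{\ite-1}}^2\bigr)^{1/2}$, gives only $\sqrt{\m}$, as in the paper. This affects constants only, not the $\mathcal{O}(1/\sqrt{\iter})$ rate.
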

\begin{theorem}\label{thm_smooth}
Under Assumptions~\ref{ass:0} and \ref{ass:2}, let $(\bx^\iter)_{n\in\N}$ be the sequence generated by Algorithm~\ref{adadiff} and define, for every $n\in \N$, $\bar{\bx}^\iter = (\iter-1)^{-1}\sum_{\ite=2}^\iter \bx^\ite$. Then, the following rate of convergence hold
    \begin{equation}
    \label{eq:rate2}
        F(\bar{\bx}^\iter) - F_\star = \mathcal{O}\left(\frac{1}{\iter}\right).
    \end{equation}
Moreover, $\bx^n \rightharpoonup \bx^*$ for some $\bx^*\in \argmin F$.
\end{theorem}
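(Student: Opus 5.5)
We prove Theorem~\ref{thm_smooth}, using the variable-metric proximal inequality together with two facts: the accumulated weights $\w_i^\iter$ stay bounded, and the weighted distances to any minimizer are quasi-Fejér.

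\textbf{Basic inequality.} From the optimality condition of the prox step, $-\buu^\ite - \eta^{-1}\W_\ite(\bx^{\ite+1}-\bx^\ite) \in \partial\reg(\bx^{\ite+1})$. Combining this with the descent lemma for $f$ and convexity of $f$ gives, for any $\bx\in\bHH$,
\begin{equation*}
F(\bx^{\ite+1}) - F(\bx) \le \frac{1}{2\eta}\Big(\norm{\bx^\ite-\bx}_{\W_\ite}^2 - \norm{\bx^{\ite+1}-\bx}_{\W_\ite}^2\Big) - \frac{1}{2\eta}\norm{\bx^{\ite+1}-\bx^\ite}_{\W_\ite}^2 + \frac{L}{2}\norm{\bx^{\ite+1}-\bx^\ite}^2 .
\end{equation*}
Sum this over $\ite$. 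Since $\W_{\ite+1}\succeq\W_\ite$, the telescoping produces the extra terms $\frac{1}{2\eta}\sum_i(\w_i^{\ite+1}-\w_i^\ite)\norm{x_i^{\ite+1}-x_i}^2$.

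\textbf{Key step: bounded weights.} The main obstacle is to show $\sup_\iter \w_i^\iter<\infty$ for every $i$, without any boundedness assumption on the iterates. By $L$-smoothness,
\begin{equation*}
\norm{\uu_i^\ite-\uu_i^{\ite-1}}^2 \le L^2\norm{\bx^\ite-\bx^{\ite-1}}^2 \le \frac{L^2}{\varepsilon}\sum_j \w_j^{\ite-1}\norm{x_j^\ite-x_j^{\ite-1}}^2 .
\end{equation*}
This bounds the growth of the squared weights by the negative terms in the basic inequality, which are weighted by $\W_{\ite-1}$. The plan is as follows.
\begin{itemize}
\item Suppose some $\w_i^\iter\to\infty$. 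Then, once $\w_j^\ite \ge 2\eta L$ in the relevant blocks, the negative term dominates the $\frac{L}{2}\norm{\cdot}^2$ term, leaving a residual $-\frac{1}{4\eta}\norm{\bx^{\ite+1}-\bx^\ite}_{\W_\ite}^2$.
\item The telescoped extra terms are controlled by $\sum_\ite (\w^{\ite+1}_i-\w^\ite_i)\norm{x_i^{\ite+1}-x_i}^2$. Using $\w^{\ite+1}_i-\w^\ite_i \le \norm{\uu_i^{\ite+1}-\uu_i^\ite}^2/\w_i^{\ite+1}$, these should be absorbable.
\item This gives a contradiction with $\sum_\ite \norm{\uu_i^\ite-\uu_i^{\ite-1}}^2 \gtrsim (\w_i^\iter)^2\to\infty$, in the spirit of Traoré--Pauwels for vanilla AdaGrad.
\end{itemize}
The standard device is $\sum_\ite a_\ite/\sqrt{\sum_{j\le \ite}a_j}\le 2\sqrt{\sum a}$, which shows $(\w_i^\iter)^2 \lesssim C + C'\w_i^\iter$ and hence boundedness. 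Carrying this out requires handling the blocks whose weights stay bounded separately from those that diverge, with a possible "stepsize below $1/L$ eventually" splitting. I expect this to be the hardest part.

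\textbf{Conclusion.} Once $\w_i^\iter\uparrow \bar w_i<\infty$, the extra terms are summable: $\sum_\ite(\w_i^{\ite+1}-\w_i^\ite)\norm{x_i^{\ite+1}-x_i}^2 \le (\bar w_i-\varepsilon)\sup_\ite\norm{x_i^\ite-x_i}^2$. Boundedness of the iterates comes from the quasi-Fejér inequality
\begin{equation*}
\norm{\bx^{\ite+1}-\bx^\star}_{\W_{\ite+1}}^2 \le \Big(1+\max_i\frac{\w_i^{\ite+1}-\w_i^\ite}{\w_i^\ite}\Big)\norm{\bx^\ite-\bx^\star}_{\W_\ite}^2 ,
\end{equation*}
which uses $F(\bx^{\ite+1})\ge F_\star$ and holds for large $\ite$, where the negative term dominates; the multiplicative factors have a convergent product. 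Summing the basic inequality with $\bx=\bx^\star$ then gives $\sum_\ite (F(\bx^{\ite+1})-F_\star)<\infty$, bounded by a constant. Jensen's inequality applied to $\bar\bx^\iter$ yields \eqref{eq:rate2}.

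\textbf{Weak convergence.} Apply the variable-metric quasi-Fejér lemma (Combettes--V\~u) with metrics $\W_\iter\to\bar\W$, where $\W_{\iter+1}\preceq(1+\delta_\iter)\W_\iter$ and $\sum\delta_\iter<\infty$. This gives that $\norm{\bx^\iter-\bx^\star}_{\W_\iter}$ converges for every $\bx^\star\in\argmin F$. Moreover $F(\bx^\iter)\to F_\star$, so every weak cluster point is a minimizer by weak lower semicontinuity of $F$. The Opial-type argument then yields $\bx^\iter\rightharpoonup\bx^*\in\argmin F$.
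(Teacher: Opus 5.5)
Your overall architecture (bounded weights, quasi-Fej\'er boundedness of the iterates, telescoping, Combettes--V\~u) matches the paper's. The gap is the step everything rests on: $\sup_n w_i^n<\infty$ for every $i$, equivalently $\sum_k\norm{\buu^{k+1}-\buu^k}^2<\infty$ (the paper's Proposition~\ref{proposition5}). You leave it unproved, and the route you sketch does not close.

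Your inequality comes from the descent lemma, so its error term $\frac L2\norm{\bx^{k+1}-\bx^k}^2$ is in \emph{iterate} differences and involves all blocks. In a block $j$ whose weight stays below $\eta L$, nothing absorbs $\frac L2\norm{x_j^{k+1}-x_j^k}^2$. Boundedness of $w_j^k$ only gives $\sum_k\norm{g_j^{k+1}-g_j^k}^2<\infty$; it says nothing about $\sum_k\norm{x_j^{k+1}-x_j^k}^2$. This is exactly where the analogy with Traor\'e--Pauwels breaks: for unconstrained vanilla AdaGrad the displacement $\eta g_j^k/w_j^k$ is the accumulated quantity, whereas here it is not. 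Because $L$-smoothness couples the blocks, these uncontrolled displacements enter $\norm{g_i^{k+1}-g_i^k}^2\le L^2\norm{\bx^{k+1}-\bx^k}^2$ for the divergent block $i$, and no contradiction follows.

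The sketch is also circular. Absorbing the telescoped terms $\sum_k(w_i^{k+1}-w_i^k)\norm{x_i^{k+1}-x_i}^2$ needs bounded iterates, which you only obtain later from a quasi-Fej\'er inequality that itself presupposes bounded weights. (Separately, $w_i^{k+1}-w_i^k\le\norm{g_i^{k+1}-g_i^k}^2/w_i^{k+1}$ is false when $v_i^k<\varepsilon$, e.g.\ $v_i^k=0$.)

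The missing idea is to keep the error in \emph{gradient} differences, block by block. The paper replaces the descent lemma by the co-coercivity inequality $f(\bx^{k+1})-f(\bx^k)-\pair{\bx^{k+1}-\bx^k}{\nabla f(\bx^k)}\le\pair{\bx^{k+1}-\bx^k}{\nabla f(\bx^{k+1})-\nabla f(\bx^k)}-\frac{1}{2L}\norm{\nabla f(\bx^{k+1})-\nabla f(\bx^k)}^2$. It then absorbs the cross term into $-\norm{\bx^{k+1}-\bx^k}_{\W_k}^2$ by Young's inequality in the $\W_k$-metric. With $\bx=\bx^k$, \eqref{lemma1_bound2} gives $2(F(\bx^{k+1})-F(\bx^k))\le\sum_i(\eta/w_i^k-1/L)\norm{g_i^{k+1}-g_i^k}^2$, which splits cleanly:
\begin{itemize}
\item blocks with $\eta/w_i^k\le 1/(2L)$ eventually get coefficient $\le -1/(2L)$;
\item the remaining blocks satisfy $w_i^k<2\eta L$ for all $k$, hence $\sum_k\norm{g_i^{k+1}-g_i^k}^2<(2\eta L-\varepsilon)^2$ by definition.
\end{itemize}
Telescoping $F$ then finishes, with no boundedness of the iterates needed. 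An alternative also works: use \eqref{lemma1_bound} with $\bx=\bx^k$, whose error $\eta^2\norm{\buu^{k+1}-\buu^k}_{\W_k^{-1}}^2$ is likewise block-separable, and apply $\norm{\buu^{k+1}-\buu^k}^2\le L^2\norm{\bx^{k+1}-\bx^k}^2$ only to the divergent blocks, whose coefficients $\eta^2/w_i^k\to0$.

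The same defect affects your later steps. ``The negative term dominates for large $k$'' requires $\min_i w_i^k\ge\eta L$ eventually, which need not happen. So your quasi-Fej\'er inequality without an additive term is unjustified as written, and so is the final summation, which needs $\sum_k\norm{\bx^{k+1}-\bx^k}^2<\infty$. The paper instead uses \eqref{lemma1_bound} with the summable error $\alpha_k$ of Proposition~\ref{prop:Fejer}, and bounds the rate through Proposition~\ref{main_theorem}. Once those pieces are in place, your cluster-point argument via $F(\bx^n)\to F_\star$ and weak lower semicontinuity is correct, and simpler than the paper's Proposition~\ref{prop:clusterpoints}.
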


\section{Convergence Analysis}\label{3}
\label{sec:analysis}
In this section we provide the main arguments used for proving the two theorems showed in the previous section. All the proofs can be found in Appendix \ref{A}. 
In what follows we assume that the standing Assumption~\ref{ass:0} holds and that the sequences
 $(\bx^\iter)_{n\in\N}$ and $(\buu^\iter)_{n\in \N}$ are generated according to the update in \eqref{eq:prox-update-general} with a general sequence of stepsizes $(\eta_\iter)_{\iter\in\N}$.\\

\noindent
We start with a fundamental result concerning two basic inequalities related to the proximal gradient step.

\begin{lemma}[{\bf The basic inequalities}]\label{lemma1}
%Under Assumption~\ref{ass:0}, 
%let $(\x^\iter)_{\iter\in \N}$ be the iterates generated Algorithm~\ref{adadiff}, 
Let $\bx\in \bHH$ and $\iter \in \N$. Then 
\begin{equation}\label{lemma1_bound}
    2\eta_\iter(F(\bx^{\iter+1}) - F(\bx))  
    \le \norm{\bx^\iter - \bx}_{\W_\iter}^2- \norm{\bx^{\iter+1} - \bx}_{\W_\iter}^2 + \eta_\iter^2 \norm{\buu^{\iter+1} - \buu^{\iter}}_{\W_\iter^{-1}}^2.
    \end{equation}
Moreover, under Assumption~\ref{ass:2}, it holds
\begin{align}\label{lemma1_bound2}
    \nonumber 2\eta_\iter(F(\bx^{\iter+1}) - F(\bx))  
    &\le \norm{\bx^\iter - \bx}_{\W_\iter}^2
    - \norm{\bx^{\iter+1} - \bx}_{\W_\iter}^2 \\
    &\qquad + \eta_\iter^2 
    \norm{\buu^{\iter+1} - \buu^{\iter}}_{\W_\iter^{-1}}^2 
    - \frac{\eta_\iter}{L}
    \norm{\buu^{\iter+1} - \buu^{\iter}}^2 .
\end{align}
\end{lemma}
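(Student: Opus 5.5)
The plan is to combine the variational characterization of the proximal step with two convexity inequalities for $f$: one at $\bx^\iter$ and one at $\bx^{\iter+1}$. The terms involving $\bx$ should then cancel, leaving only the cross term $\scalarp{\buu^{\iter+1}-\buu^\iter,\bx^{\iter+1}-\bx^\iter}$. A three-point identity in the $\W_\iter$-norm and Young's inequality then finish the argument. For \eqref{lemma1_bound2} I will replace the convexity inequality at $\bx^{\iter+1}$ by its strengthened version for $L$-smooth convex functions.

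\textbf{Step 1 (prox optimality).} From \eqref{eq:prox-update-general}, the first-order optimality condition reads
$\eta_\iter^{-1}\W_\iter(\bx^\iter-\bx^{\iter+1})-\buu^\iter\in\partial\reg(\bx^{\iter+1})$.
Hence for every $\bx$,
\begin{equation*}
\reg(\bx^{\iter+1})-\reg(\bx)\le \tfrac{1}{\eta_\iter}\scalarp{\bx^\iter-\bx^{\iter+1},\bx^{\iter+1}-\bx}_{\W_\iter}+\scalarp{\buu^\iter,\bx-\bx^{\iter+1}}.
\end{equation*}

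\textbf{Step 2 (split $f$ through $\bx^\iter$).} Write $f(\bx^{\iter+1})-f(\bx)=[f(\bx^{\iter+1})-f(\bx^\iter)]+[f(\bx^\iter)-f(\bx)]$ and bound each bracket by convexity:
\begin{itemize}
\item the subgradient inequality at $\bx^{\iter+1}$ gives $f(\bx^{\iter+1})-f(\bx^\iter)\le\scalarp{\buu^{\iter+1},\bx^{\iter+1}-\bx^\iter}$;
\item the subgradient inequality at $\bx^\iter$ gives $f(\bx^\iter)-f(\bx)\le\scalarp{\buu^\iter,\bx^\iter-\bx}$.
\end{itemize}
Adding these two bounds to Step 1, the $\buu^\iter$ terms combine as
\begin{equation*}
\scalarp{\buu^\iter,\bx^\iter-\bx}+\scalarp{\buu^\iter,\bx-\bx^{\iter+1}}=-\scalarp{\buu^\iter,\bx^{\iter+1}-\bx^\iter}.
\end{equation*}
Therefore
\begin{equation*}
F(\bx^{\iter+1})-F(\bx)\le \scalarp{\buu^{\iter+1}-\buu^\iter,\bx^{\iter+1}-\bx^\iter}+\tfrac{1}{\eta_\iter}\scalarp{\bx^\iter-\bx^{\iter+1},\bx^{\iter+1}-\bx}_{\W_\iter}.
\end{equation*}
This cancellation is the one delicate point. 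A naive approach uses only convexity at $\bx^{\iter+1}$ and leaves a term $\scalarp{\buu^{\iter+1}-\buu^\iter,\bx^{\iter+1}-\bx}$, which cannot be controlled. Passing through $\bx^\iter$ is exactly what turns it into the difference of consecutive iterates.

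\textbf{Step 3 (three-point identity and Young).} Multiply by $2\eta_\iter$ and use
\begin{equation*}
2\scalarp{\bx^\iter-\bx^{\iter+1},\bx^{\iter+1}-\bx}_{\W_\iter}=\norm{\bx^\iter-\bx}^2_{\W_\iter}-\norm{\bx^{\iter+1}-\bx}^2_{\W_\iter}-\norm{\bx^{\iter+1}-\bx^\iter}^2_{\W_\iter}.
\end{equation*}
For the cross term, apply Young's inequality in the dual pair $(\W_\iter^{-1},\W_\iter)$:
\begin{equation*}
2\eta_\iter\scalarp{\buu^{\iter+1}-\buu^\iter,\bx^{\iter+1}-\bx^\iter}\le\eta_\iter^2\norm{\buu^{\iter+1}-\buu^\iter}^2_{\W_\iter^{-1}}+\norm{\bx^{\iter+1}-\bx^\iter}^2_{\W_\iter}.
\end{equation*}
The two $\norm{\bx^{\iter+1}-\bx^\iter}^2_{\W_\iter}$ terms cancel, which yields \eqref{lemma1_bound}.

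\textbf{Smooth case.} Under Assumption~\ref{ass:2}, I replace the first bullet of Step 2 by the standard inequality for convex $L$-smooth functions,
\begin{equation*}
f(\by)\ge f(\bx)+\scalarp{\nabla f(\bx),\by-\bx}+\tfrac{1}{2L}\norm{\nabla f(\by)-\nabla f(\bx)}^2,
\end{equation*}
applied with $\by=\bx^\iter$ and the base point $\bx^{\iter+1}$. This gives
\begin{equation*}
f(\bx^{\iter+1})-f(\bx^\iter)\le\scalarp{\buu^{\iter+1},\bx^{\iter+1}-\bx^\iter}-\tfrac{1}{2L}\norm{\buu^{\iter+1}-\buu^\iter}^2.
\end{equation*}
After multiplying by $2\eta_\iter$, the extra term becomes $-\tfrac{\eta_\iter}{L}\norm{\buu^{\iter+1}-\buu^\iter}^2$. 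The rest of the argument is unchanged, giving \eqref{lemma1_bound2}.
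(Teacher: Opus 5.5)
Your proof is correct and follows essentially the same route as the paper. The paper first isolates a general one-step inequality (Lemma~\ref{lem:basicPGstep}) with the bracket $f(\bx^{\iter+1})-f(\bx^\iter)-\scalarp{\bx^{\iter+1}-\bx^\iter,\buu^\iter}$, which comes from exactly your splitting through $\bx^\iter$. It then bounds that bracket either by the subgradient inequality at $\bx^{\iter+1}$ or by the $\frac{1}{2L}$-strengthened inequality for convex $L$-smooth $f$, and finishes with the same Young step in the $(\W_\iter,\W_\iter^{-1})$ pairing.
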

\noindent
A comment is in order. There is a key departure from the classical bound provided for AdaGrad. Indeed in \cite{duchi2011adaptive} the following inequality is proved
\begin{align}\label{AdaGradBound}
    \nonumber 2\eta_\iter(F(\bx^{\iter}) - F(\bx))  & \le \lVert \bx^\iter - \bx \rVert_{\W_\iter}^2- \lVert \bx^{\iter+1} - \bx\rVert_{\W_\iter}^2  \\ & \qquad + \eta_\iter^2 \lVert \buu^{\iter} \rVert_{\W_\iter^{-1}}^2 + 2\eta_\iter(\reg(\bx^{\iter}) - \reg(\bx^{\iter+1})).
\end{align}
Our bound, on the other hand, involves the norm of successive gradient differences. \\
As a consequence of Lemma~\ref{lemma1}, we get the following Corollary.  

\begin{corollary}
\label{Proposition1}
Let $\bx\in \bHH$ and $\iter \in \N$ and set
%Under the same assumptions of Lemma~\ref{lemma1}, set 
$\Delta^{\iter-1}_{\max} = \max_{1\leq \ite\leq \iter-1} \max_{1\leq i\leq \m} \norm{\x^\ite_i - x_i}^2$. Then
\begin{align}\label{Proposition1_bound}
    \nonumber\sum_{\ite=2}^\iter\eta_{\ite-1}(F(\bx^\ite) - F(\bx)) &\le \frac{\varepsilon}{2}\lVert \bx^1 - \bx\rVert^2 + \frac{1}{2}\Delta_{\max}^{\iter-1} \sum_{i=1}^\m \w_i^{\iter-1} \\
     &\qquad  + \frac{1}{2}\sum_{\ite=2}^\iter\eta_{\ite-1}^2 \lVert \buu^{\ite} - \buu^{\ite-1} \rVert_{\W_{\ite-1}^{-1}}^2. 
    \end{align}
\end{corollary}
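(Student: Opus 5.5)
The plan is to apply Lemma~\ref{lemma1}, inequality \eqref{lemma1_bound}, at each index $\ite-1$ for $\ite=2,\dots,\iter$, divide by $2$, and sum. This gives
\begin{equation*}
\sum_{\ite=2}^\iter \eta_{\ite-1}(F(\bx^\ite)-F(\bx)) \le \frac12\sum_{\ite=1}^{\iter-1}\Bigl(\norm{\bx^\ite-\bx}_{\W_\ite}^2-\norm{\bx^{\ite+1}-\bx}_{\W_\ite}^2\Bigr) + \frac12\sum_{\ite=2}^\iter \eta_{\ite-1}^2\norm{\buu^\ite-\buu^{\ite-1}}_{\W_{\ite-1}^{-1}}^2 .
\end{equation*}
The last sum is already the third term of \eqref{Proposition1_bound}. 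The remaining work is to control the telescoping-type sum, in which the metric changes with $\ite$.

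To handle it, I will rewrite the sum by summation by parts:
\begin{equation*}
\sum_{\ite=1}^{\iter-1}\Bigl(\norm{\bx^\ite-\bx}_{\W_\ite}^2-\norm{\bx^{\ite+1}-\bx}_{\W_\ite}^2\Bigr) = \norm{\bx^1-\bx}_{\W_1}^2 + \sum_{\ite=2}^{\iter-1}\Bigl(\norm{\bx^\ite-\bx}_{\W_\ite}^2-\norm{\bx^{\ite}-\bx}_{\W_{\ite-1}}^2\Bigr) - \norm{\bx^\iter-\bx}_{\W_{\iter-1}}^2 .
\end{equation*}
The final negative term is dropped. Each middle term equals $\sum_{i=1}^\m(\w_i^\ite-\w_i^{\ite-1})\norm{\x_i^\ite-\x_i}^2$. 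The crucial point is that the weights in \eqref{AdaDiff} are nondecreasing in $\ite$, since they are cumulative sums of nonnegative quantities. Hence every coefficient $\w_i^\ite-\w_i^{\ite-1}$ is nonnegative, and each middle term is bounded by $\Delta_{\max}^{\iter-1}\sum_i(\w_i^\ite-\w_i^{\ite-1})$.

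For the first term, I write $\norm{\bx^1-\bx}_{\W_1}^2=\sum_i \w_i^1\norm{\x_i^1-\x_i}^2$ and split $\w_i^1=\varepsilon+(\w_i^1-\varepsilon)$. The $\varepsilon$ part gives $\varepsilon\norm{\bx^1-\bx}^2$. The remaining part has nonnegative coefficients and is bounded by $\Delta_{\max}^{\iter-1}\sum_i(\w_i^1-\varepsilon)$. Collecting everything, the weight sums telescope to
\begin{equation*}
\Delta_{\max}^{\iter-1}\sum_i(\w_i^{\iter-1}-\varepsilon)\le \Delta_{\max}^{\iter-1}\sum_i \w_i^{\iter-1}.
\end{equation*}
After multiplying by $1/2$, this yields exactly \eqref{Proposition1_bound}.

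The main obstacle is the bookkeeping of the varying metric: the indices must be arranged so that only the monotone increments of $\w_i^\ite$ appear. The $\varepsilon$ offset must also be isolated so that it gives the clean $\frac{\varepsilon}{2}\norm{\bx^1-\bx}^2$ term. Two edge cases need a check. When $\iter=1$ the sum is empty. When $\iter=2$ there are no middle terms, and the bound reduces to $\frac12\norm{\bx^1-\bx}_{\W_1}^2$ plus the gradient term, which is still covered since $\Delta_{\max}^{1}$ includes $\ite=1$.
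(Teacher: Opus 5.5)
Your proposal is correct and is essentially the paper's own proof. As in the paper, you sum \eqref{lemma1_bound} over $k=1,\dots,n-1$, regroup the varying-metric telescoping sum, discard $-\norm{\bx^n-\bx}_{\W_{n-1}}^2$, use that the weights $\w_i^k$ are nondecreasing to bound each increment term by $\Delta_{\max}^{n-1}$, and telescope to $\Delta_{\max}^{n-1}\sum_{i}(\w_i^{n-1}-\varepsilon)$ (the paper's $\Delta_{\max}^{n-1}\sum_i v_i^{n-1}$) before relaxing to the stated bound.
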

\noindent
From now on, we restrict our attention to Algorithm \ref{adadiff}.
\begin{proposition}\label{main_theorem}
Let $(\bx^\iter)_{\iter\in \N}$ be generated by Algorithm~\ref{adadiff}.
%Under Assumption~\ref{ass:0}, 
%let $(\x^\iter)_{\iter\in \N}$ be the iterates generated Algorithm~\ref{adadiff}, 
%Let $\x\in \HH$. Then for every $\iter \in \N$
Let $\bx\in \bHH$, $\iter \in \N$ and set
$\Delta^{\iter-1}_{\max} = \max_{1\leq \ite\leq \iter-1} \max_{1\leq i\leq \m} \norm{\x^\ite_i - x_i}^2$. 
Then
\begin{align}
\nonumber F(\bar{\bx}^{\iter}) - F(\bx)  &\le \frac{\varepsilon}{2\eta(\iter-1)}\lVert \bx^1 - \bx\rVert^2 +\frac{\sqrt{\m}}{\iter-1}\left[\frac{1}{2\eta}\Delta^{\iter-1}_{\max} + \eta \right]
\sqrt{\sum_{\ite=1}^\iter \norm{\buu^{\ite-1}- \buu^\ite}^2}\\
& \qquad + \frac{\eta}{2(\iter-1)} \sum_{i=1}^\m\sum_{\ite=2}^{\iter} \left(\frac{1}{\w_i^{\ite-1}} - \frac{1}{\w_i^\ite}\right)\norm{\uu_i^\ite - \uu_i^{\ite-1}}^2. \label{extra}
\end{align}
\end{proposition}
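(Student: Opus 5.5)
We specialize Corollary~\ref{Proposition1} to Algorithm~\ref{adadiff}, where $\eta_\ite\equiv\eta$. Since $F$ is convex and $\bar{\bx}^\iter$ is the average of $\bx^2,\dots,\bx^\iter$, Jensen's inequality gives $(\iter-1)(F(\bar{\bx}^\iter)-F(\bx))\le\sum_{\ite=2}^\iter (F(\bx^\ite)-F(\bx))$. We then divide \eqref{Proposition1_bound} by $\eta(\iter-1)$. The first term becomes exactly $\frac{\varepsilon}{2\eta(\iter-1)}\norm{\bx^1-\bx}^2$. It remains to bound the two other terms in terms of $S_\iter:=\sqrt{\sum_{\ite=1}^\iter\norm{\buu^\ite-\buu^{\ite-1}}^2}$.

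\textbf{The $\Delta$-term.} Each $\w_i^{\iter-1}-\varepsilon=\big(\sum_{\ite\le \iter-1}\norm{\uu_i^\ite-\uu_i^{\ite-1}}^2\big)^{1/2}$. By Cauchy--Schwarz,
\[
\sum_i (\w_i^{\iter-1}-\varepsilon)\le \sqrt{\m}\,S_{\iter-1}\le\sqrt{\m}\,S_\iter .
\]
The leftover $\varepsilon\m\Delta^{\iter-1}_{\max}/2$ contribution is a nuisance. To absorb it, I would revisit the telescoping in the proof of Corollary~\ref{Proposition1}: there the increments $\w_i^{\ite}-\w_i^{\ite-1}$ are bounded by $\Delta_{\max}$, and only the initial $\varepsilon$ is paired with $\norm{\bx^1-\bx}^2$. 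Summing the increments therefore gives $\Delta_{\max}\sum_i(\w_i^{\iter-1}-\varepsilon)$ rather than $\Delta_{\max}\sum_i\w_i^{\iter-1}$, and I expect this is the intended reading of the Corollary. This yields the term $\frac{\sqrt{\m}}{2\eta(\iter-1)}\Delta^{\iter-1}_{\max}S_\iter$.

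\textbf{The gradient-difference sum.} Write $a_i^\ite:=\norm{\uu_i^\ite-\uu_i^{\ite-1}}^2$, so that
\[
\frac{\eta}{2}\sum_{\ite=2}^\iter\sum_i \frac{a_i^\ite}{\w_i^{\ite-1}}
=\frac{\eta}{2}\sum_{i,\ite}\frac{a_i^\ite}{\w_i^{\ite}}+\frac{\eta}{2}\sum_{i,\ite}\Big(\frac1{\w_i^{\ite-1}}-\frac1{\w_i^\ite}\Big)a_i^\ite .
\]
The second piece is exactly the last term of \eqref{extra}. For the first piece we use $\w_i^\ite\ge\sqrt{\sum_{j\le \ite}a_i^j}$ together with the standard AdaGrad lemma $\sum_{\ite}\frac{a^\ite}{\sqrt{\sum_{j\le \ite}a^j}}\le 2\sqrt{\sum_{\ite\le \iter}a^\ite}$, which follows from $\frac{b_\ite-b_{\ite-1}}{\sqrt{b_\ite}}\le 2(\sqrt{b_\ite}-\sqrt{b_{\ite-1}})$. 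This bounds the first piece by $\eta\sum_i\sqrt{\sum_{\ite\le\iter}a_i^\ite}\le\eta\sqrt{\m}\,S_\iter$, again by Cauchy--Schwarz. Dividing everything by $\eta(\iter-1)$ gives \eqref{extra}.

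\textbf{Main obstacle.} The key step is the splitting $1/\w_i^{\ite-1}=1/\w_i^\ite+(1/\w_i^{\ite-1}-1/\w_i^\ite)$. It is needed because $\w^{\ite-1}$ does not yet contain $a^\ite$, so the AdaGrad summation lemma cannot be applied to it directly. The other delicate point is the bookkeeping of $\varepsilon$ in the $\Delta$-term described above.
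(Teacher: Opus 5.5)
Your proposal is correct and follows the paper's proof essentially step by step. It uses the same ingredients: Corollary~\ref{Proposition1} with $\eta_\ite\equiv\eta$, the splitting $\W_{\ite-1}^{-1}=\W_\ite^{-1}+(\W_{\ite-1}^{-1}-\W_\ite^{-1})$, the AdaGrad summation Lemma~\ref{lemma2}, and Jensen's inequality. Your closing Cauchy--Schwarz step turns the paper's intermediate $\sum_{i}\sqrt{\sum_\ite\norm{\uu_i^\ite-\uu_i^{\ite-1}}^2}$ into the $\sqrt{\m}$ form of the statement. Your reading of the $\Delta$-term is also exactly what the paper does: its proof of the Corollary actually ends with $\Delta^{\iter-1}_{\max}\sum_i v_i^{\iter-1}=\Delta^{\iter-1}_{\max}\sum_i(\w_i^{\iter-1}-\varepsilon)$, and the Proposition's proof uses this sharper bound rather than the stated $\sum_i\w_i^{\iter-1}$.
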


\subsection{The $G$-Lipschitz continuous case}
\label{sec:analysis_a}
In this section, beyond Assumptions~\ref{ass:0}, we make also Assumptions~\ref{ass:1}.
% Moreover, we take  the sequence of stepsizes
% constant, say  $\eta_\iter\equiv\eta>0$.
\begin{proof}[Proof of Theorem~\ref{thm_nonsmooth}]
Considering Proposition~\ref{main_theorem}, we need just to control the two sums in bound \eqref{extra}. 
This is possible since the assumptions ensure that the subgradients of $f$ are bounded in norm by $G$.
More specifically, as for the first sum we have
\begin{align*}
%\label{eq:20260127a}
\sqrt{ \sum_{\ite=1}^\iter \norm{\buu^\ite - \buu^{\ite-1}}^2} \leq 2 G \sqrt{n}.
\end{align*}
On the other hand for the second sum in \eqref{extra} we have
\begin{align*}
\sum_{i=1}^\m\sum_{\ite=2}^{\iter} \left(\frac{1}{\w_i^{\ite-1}} - \frac{1}{\w_i^\ite}\right)\norm{\uu_i^\ite - \uu_i^{\ite-1}}^2& \leq 4 G^2 \sum_{i=1}^\m
\sum_{\ite=2}^{\iter} \left(\frac{1}{\w_i^{\ite-1}} - \frac{1}{\w_i^\ite}\right) \leq 4 G^2 \sum_{i=1}^\m \frac{1}{\w_i^1}.
\end{align*}
In the end, we have
\begin{align*}
F(\bar{\bx}^{\iter}) - F(\bx) & \leq \frac{\varepsilon}{2\eta(\iter-1)}\lVert \bx^1 - \bx\rVert^2 +  \frac{2 G\sqrt{n d}}{\iter-1} \left[\frac{1}{2\eta}\Delta^{\iter-1}_{\max} + \eta \right] +  \frac{\eta 4 G^2}{2(\iter-1)}  \sum_{i=1}^\m \frac{1}{\w_i^1}
\end{align*}
and the statement follows by choosing $\bx\in\argmin F$, considering that $(\Delta^{\iter-1}_{\max})_{\iter\in \N}$ is also bounded by assumption.
\end{proof}

\subsection{The $L$-Lipschitz smooth case}
\label{sec:analysis_b}
Here we make Assumptions~\ref{ass:0} and Assumptions~\ref{ass:2}. \\
The bound in Proposition~\ref{main_theorem} shows that in order to get a rate of convergence 
for $F(\bar{\bx}^\iter)-F_*$, it is necessary to bound the sums $\sum_{\ite=1}^{\iter}\norm{\buu^\ite-\buu^{\ite-1}}^2$. On the other hand,
the convergence of the iterates for proximal gradient methods classically follows from the so-called Fej\'er monotonicity property.
See \cite{combettes2013variable} and the Appendix \ref{A} for details. In this respect,
inequality \eqref{lemma1_bound} in Lemma~\ref{lemma1} with $\bx\in \argmin F$ yields 
\begin{equation*}
    \norm{\bx^{\iter+1} - \bx}_{\W_\iter}^2\leq 
     \norm{\bx^\iter - \bx}_{\W_\iter}^2 + \eta^2 \norm{\buu^{\iter+1} - \buu^{\iter}}_{\W_\iter^{-1}}^2. 
\end{equation*}
This already resembles the Fej\'er monotonicity property in a variable metric setting,
except that on the left-hand side, it should appear the norm with respect to $\W_{\iter+1}$
and that again the sequence $(\norm{\buu^{\iter+1} - \buu^{\iter}}_{\W_\iter^{-1}}^2)_{\iter\in \N}$ should be summable. The following results show that indeed it is possible to address these issues.

\begin{proposition}\label{proposition5}
We have that $\sum_{\iter=1}^\infty \lVert \buu^{\iter+1} - \buu^\iter \rVert^2<+\infty$.
\end{proposition}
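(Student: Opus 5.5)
The plan is to argue by contradiction, using the smooth basic inequality \eqref{lemma1_bound2} of Lemma~\ref{lemma1} with $\eta_\iter\equiv\eta$ and $\bx\in\argmin F$. The negative term $-\frac{\eta}{L}\norm{\buu^{\iter+1}-\buu^\iter}^2$ should eventually dominate the positive term $\eta^2\norm{\buu^{\iter+1}-\buu^\iter}^2_{\W_\iter^{-1}}$, because the weights $\w_i^\iter$ grow exactly when the gradient differences are not summable. Since $F(\bx^{\iter+1})-F_\star\ge 0$, the left-hand side of \eqref{lemma1_bound2} can be dropped. Splitting the right-hand side componentwise gives, for every $\iter$,
\begin{equation*}
\norm{\bx^{\iter+1}-\bx}^2_{\W_\iter}+\sum_{i=1}^\m\Big(\frac{\eta}{L}-\frac{\eta^2}{\w_i^\iter}\Big)\norm{\uu_i^{\iter+1}-\uu_i^\iter}^2\le\norm{\bx^\iter-\bx}^2_{\W_\iter}.
\end{equation*}

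Suppose $\sum_\iter\norm{\buu^{\iter+1}-\buu^\iter}^2=+\infty$. Let $I\subset[\m]$ be the nonempty set of blocks with $\sum_\iter\norm{\uu_i^{\iter+1}-\uu_i^\iter}^2=+\infty$. For $i\in I$ we have $\w_i^\iter\uparrow+\infty$ by \eqref{AdaDiff}, so there is $\iter_0$ with $\w_i^\iter\ge 2\eta L$ for all $i\in I$ and $\iter\ge\iter_0$. For those blocks the coefficient in the display is at least $\eta/(2L)$. For $i\notin I$ the coefficient is at least $-\eta^2/\varepsilon$, and the corresponding differences are summable, so these blocks contribute a finite constant $C$.

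Next I telescope from $\iter_0$ to $\iter$. Because the metric changes with $\iter$, I write $\norm{\bx^\ite-\bx}^2_{\W_\ite}=\norm{\bx^\ite-\bx}^2_{\W_{\ite-1}}+\sum_i(\w_i^\ite-\w_i^{\ite-1})\norm{x_i^\ite-x_i}^2$. With $\Delta_\iter=\max_{\ite\le\iter}\max_i\norm{x^\ite_i-x_i}^2$ this yields
\begin{equation*}
\norm{\bx^{\iter+1}-\bx}^2_{\W_\iter}+\frac{\eta}{2L}\sum_{i\in I}\sum_{\ite=\iter_0}^{\iter}\norm{\uu_i^{\ite+1}-\uu_i^\ite}^2\le C'+\Delta_\iter\sum_{i=1}^\m\w_i^\iter .
\end{equation*}
The second sum on the left is of order $\sum_{i\in I}(\w_i^{\iter+1})^2$, i.e. quadratic in the diverging weights. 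The right-hand side is only linear in the weights times $\Delta_\iter$, which should give the contradiction once $\Delta_\iter$ is controlled. The weights with $i\notin I$ stay bounded, so only the $i\in I$ terms matter.

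The main obstacle is that, unlike in Theorem~\ref{thm_nonsmooth}, boundedness of the iterates is not assumed, so $\Delta_\iter$ must be controlled from the same inequality. I would try a bootstrap. The first term on the left is at least $\varepsilon\norm{\bx^{\iter+1}-\bx}^2$, so
\begin{equation*}
\varepsilon\Delta_{\iter+1}\le C''+\Delta_\iter\sum_i\w_i^\iter-\frac{\eta}{2L}\sum_{i\in I}(\w_i^{\iter+1}-\varepsilon)^2 ,
\end{equation*}
which I would compare blockwise, keeping $\w_i^\ite\norm{x^{\ite+1}_i-x_i}^2$ weighted rather than passing to the crude $\varepsilon$-bound. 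The aim is to show that $\max_\ite\norm{\bx^\ite-\bx}^2_{\W_{\ite-1}}$ stays bounded: the quadratic negative term absorbs the linear growth $\Delta_\iter\sum_{i\in I}(\w_i^\ite-\w_i^{\ite-1})$ whenever $\w_i$ is large. If that works, $\Delta_\iter$ is bounded, the quadratic-versus-linear comparison forces $\w_i^\iter$ bounded for $i\in I$, and this contradicts the definition of $I$.

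If the bootstrap does not close, the fallback is a standard variable-metric Fej\'er argument in the spirit of \cite{combettes2013variable}. It uses $\W_{\ite}\preceq(1+\delta_\ite)\W_{\ite-1}$ with $\delta_\ite=\max_i(\w_i^\ite-\w_i^{\ite-1})/\w_i^{\ite-1}$, and it requires showing $\prod_\ite(1+\delta_\ite)<\infty$ on the stretches where the negative term is inactive.
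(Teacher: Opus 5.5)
Your setup is sound as far as it goes. The split into the set $I$ of blocks with non-summable differences and its complement is fine, and so is the bound $\eta/L-\eta^2/\w_i^\iter\ge\eta/(2L)$ for $i\in I$, $\iter\ge\iter_0$. The quadratic-versus-linear comparison would also finish the proof if $\Delta_\iter$ were bounded. But bounding $\Delta_\iter$ is exactly the step you leave open, and it is a genuine gap, not a technicality.

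The bootstrap does not close as written. Your recursion $\varepsilon\Delta_{\iter+1}\le C''+\Delta_\iter\sum_i\w_i^\iter-\cdots$ lets $\Delta$ grow by a factor of order $\sum_i\w_i^\iter/\varepsilon$ at each step. The natural weighted estimate, via $\norm{\bx^{\iter+1}-\bx}^2_{\W_{\iter+1}}\le(1+\delta_{\iter+1})\norm{\bx^{\iter+1}-\bx}^2_{\W_{\iter}}$, only bounds $\norm{\bx^\iter-\bx}^2_{\W_\iter}$ by a constant times $\prod_{i\in I}\w_i^\iter$. That already exceeds the quadratic gain $\sum_{i\in I}(\w_i^\iter)^2$ when three blocks diverge at comparable rates.

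The underlying reason is that once you take $\bx\in\argmin F$ and discard $F(\bx^{\iter+1})-F_\star\ge0$, all you keep is a contraction in a metric that changes with $\iter$. By itself that inequality lets distance migrate from heavily weighted blocks to lightly weighted ones. So it gives no handle on the drift of the metric unless the weights are already known to be bounded, which is the claim. The fallback is circular for the same reason: $\prod_\ite(1+\delta_\ite)<+\infty$ holds if and only if every $\w_i^\iter$ stays bounded, i.e., if and only if the proposition holds. In the paper, the summability of $\chi_\iter$ in Proposition~\ref{prop:Fejer}, and with it the boundedness of the iterates, is derived from this proposition, not conversely.

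The missing idea is to apply \eqref{lemma1_bound2} at the current iterate, $\bx=\bx^\ite$, rather than at a minimizer. The distance terms then reduce to $-\norm{\bx^{\ite+1}-\bx^\ite}^2_{\W_\ite}\le 0$, so the varying metric never enters. Dividing by $\eta$ gives the descent inequality
\begin{equation*}
2\big(F(\bx^{\ite+1})-F(\bx^\ite)\big)\le\sum_{i=1}^{\m}\Big(\frac{\eta}{\w_i^\ite}-\frac{1}{L}\Big)\norm{\uu_i^{\ite+1}-\uu_i^\ite}^2 .
\end{equation*}
Now use your own split. For $i\in I$ and $\ite\ge\iter_0$ the coefficient is at most $-1/(2L)$. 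For $i\notin I$ it is at most $\eta/\varepsilon$, and those differences are summable. Sum over $\iter_0\le\ite\le\iter$, with $\iter_0\ge 2$ so that $F(\bx^{\iter_0})<+\infty$, and use $F\ge F_\star$. This yields, uniformly in $\iter$,
\begin{equation*}
\frac{1}{2L}\sum_{i\in I}\sum_{\ite=\iter_0}^{\iter}\norm{\uu_i^{\ite+1}-\uu_i^\ite}^2\le 2\big(F(\bx^{\iter_0})-F_\star\big)+\frac{\eta}{\varepsilon}\sum_{i\notin I}\sum_{\ite=1}^{\infty}\norm{\uu_i^{\ite+1}-\uu_i^\ite}^2<+\infty ,
\end{equation*}
which contradicts the definition of $I$, with no bound on the iterates needed. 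This is the paper's proof. It runs directly rather than by contradiction, taking $I$ to be the blocks whose weight eventually reaches $2\eta L$. The remaining blocks then satisfy $v_i^\iter<2\eta L-\varepsilon$ for all $\iter$, so their differences are summable for free.
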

\noindent
The next result ensures that the sequence $(\bx^\iter)_{\iter\in \N}$ is 
quasi-Fej\'er  with respect to $\argmin F$ relative to $(\W_\iter)_{\iter\in \N}$.

\begin{proposition}
\label{prop:Fejer}
Under Assumptions~\ref{ass:0} and \ref{ass:2},
the sequence $(\bx^\iter)_{\iter\in \N}$ generated by Algorithm~\ref{adadiff} satisfies the following inequality  for every $\iter\in \N$ and $\bx\in \argmin F$
\begin{equation*}
\norm{\bx^{\iter+1}-\bx}^2_{\W_{\iter+1}} \leq (1+\chi_\iter) \norm{\bx^{\iter}-\bx}^2_{\W_{\iter}} + \alpha_\iter
\end{equation*}
where $(\chi_\iter)_{\iter\in \N}$ and $(\alpha_\iter)_{\iter\in \N}$ are summable, defined below
\begin{equation*}
\chi_\iter :=\max_{1\leq i\leq \m}\frac{\varepsilon + v_i^{\iter}}{\varepsilon + v_i^{\iter-1}}- 1, \quad
\alpha_\iter := \eta^2 \norm{\buu^{\iter+1} - \buu^{\iter}}_{\W_\iter^{-1}}^2.
\end{equation*}
\end{proposition}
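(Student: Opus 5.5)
Here $v_i^\iter := \sqrt{\sum_{\ite=1}^\iter \norm{\uu_i^\ite-\uu_i^{\ite-1}}^2}$, so that $\w_i^\iter = \varepsilon + v_i^\iter$.

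The plan is to start from inequality \eqref{lemma1_bound} of Lemma~\ref{lemma1} with $\eta_\iter\equiv\eta$ and $\bx\in\argmin F$. Since $F(\bx^{\iter+1})-F(\bx)\ge 0$, the left-hand side is nonnegative and we obtain
\begin{equation*}
\norm{\bx^{\iter+1}-\bx}_{\W_\iter}^2 \le \norm{\bx^\iter-\bx}_{\W_\iter}^2 + \alpha_\iter .
\end{equation*}
Two things remain. We must change the metric on the left from $\W_\iter$ to $\W_{\iter+1}$ and on the right to match the stated form. We must also verify the summability of $(\chi_\iter)$ and $(\alpha_\iter)$.

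For the change of metric, I would compare block by block. For each $i$,
\begin{equation*}
\w_i^{\iter+1}\norm{x_i^{\iter+1}-x_i}^2 = \frac{\w_i^{\iter+1}}{\w_i^\iter}\,\w_i^\iter\norm{x_i^{\iter+1}-x_i}^2 .
\end{equation*}
Summing over $i$ gives
\begin{equation*}
\norm{\bx^{\iter+1}-\bx}^2_{\W_{\iter+1}} \le \Big(\max_i \frac{\w_i^{\iter+1}}{\w_i^\iter}\Big)\norm{\bx^{\iter+1}-\bx}^2_{\W_\iter}.
\end{equation*}
Then I would plug in the previous inequality. The resulting factor multiplying $\alpha_\iter$ is harmless, since it is bounded. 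The index convention in the statement ($\chi_\iter$ written with $v^\iter/v^{\iter-1}$) suggests the authors shift indices, for instance by applying the ratio to the right-hand side $\norm{\bx^\iter-\bx}_{\W_\iter}$ instead. Alternatively the claim may be read with the index shifted by one. Either way, the constant in front of $\alpha_\iter$ can be absorbed by redefining $\alpha_\iter$ up to a bounded factor, which does not affect summability. I would follow the indexing that makes $\chi_\iter = \max_i(\w_i^{\iter}/\w_i^{\iter-1}) - 1$ appear exactly, adjusting where the ratio is inserted.

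For summability of $\alpha_\iter$, note that $\W_\iter^{-1}\preceq \varepsilon^{-1}\Id$. Hence $\alpha_\iter\le \eta^2\varepsilon^{-1}\norm{\buu^{\iter+1}-\buu^\iter}^2$, which is summable by Proposition~\ref{proposition5}. For $\chi_\iter$, Proposition~\ref{proposition5} (together with $\norm{\buu^1}$ being finite) gives $v_i^\iter\uparrow v_i^\infty<\infty$ for every $i$. Then
\begin{equation*}
\chi_\iter \le \sum_{i=1}^\m \frac{v_i^\iter - v_i^{\iter-1}}{\varepsilon},
\end{equation*}
because $\max_i a_i\le\sum_i a_i$ for $a_i\ge0$. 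The sum over $\iter$ telescopes to at most $\sum_i v_i^\infty/\varepsilon<\infty$.

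The main obstacle is only the bookkeeping of indices in the metric change, namely matching $\W_{\iter+1}$ against $\W_\iter$ exactly as stated. The substantive input, the summability of squared gradient differences, is already supplied by Proposition~\ref{proposition5}.
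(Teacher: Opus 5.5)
Your proof is the paper's own argument. It uses \eqref{lemma1_bound} with $\bx\in\argmin F$, a blockwise comparison of consecutive $\W$-norms through $\max_i w_i^{n}/w_i^{n-1}$, the telescoping bound $\chi_n\le\varepsilon^{-1}\sum_i(v_i^n-v_i^{n-1})$, and $\alpha_n\le\eta^2\varepsilon^{-1}\norm{\buu^{n+1}-\buu^n}^2$ together with Proposition~\ref{proposition5}.

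The index mismatch you flag is genuine, and the paper's proof has it too. The paper inserts the ratio on the right-hand side, $\norm{\bx^n-\bx}^2_{\W_n}\le(1+\chi_n)\norm{\bx^n-\bx}^2_{\W_{n-1}}$. This gives exactly $\norm{\bx^{n+1}-\bx}^2_{\W_n}\le(1+\chi_n)\norm{\bx^n-\bx}^2_{\W_{n-1}}+\alpha_n$, i.e.\ quasi-Fej\'er relative to the shifted metrics $(\W_{n-1})_{n\in\N}$ with $\W_0=\varepsilon\Id$. Your insertion on the left gives it relative to $(\W_n)_{n\in\N}$, with $\chi_{n+1}$ and $(1+\chi_{n+1})\alpha_n$. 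Either version feeds Fact~\ref{fact:fejer}, since both metric sequences are nondecreasing, bounded, and $\succeq\varepsilon\Id$.

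Do not try to obtain the displayed inequality verbatim, with $\chi_n$ measuring growth from $\W_{n-1}$ to $\W_n$ but $\W_{n+1}$ on the left, because it can fail. Take $\bHH=\R\oplus\R$, $\reg\equiv0$, and $f(\bx)=h(x_1-x_2)$ with $h$ the Huber function ($t^2/2$ on $[-1,1]$, $\abs{t}-1/2$ outside), so $L=2$. Take $\bx^1=(10,0)$ and $\eta=2.5(1+\varepsilon)$. One gets $\bx^2=(7.5,2.5)$, $\bx^3=(5,5)$, $\buu^1=\buu^2=(1,-1)$ and $\buu^3=0$. Hence $\chi_2=0$, $\W_2=(1+\varepsilon)\Id$, $\W_3=(\sqrt2+\varepsilon)\Id$ and $\alpha_2=12.5(1+\varepsilon)$. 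For $\bx=(M,M)\in\argmin F$ the claim at $n=2$ becomes $2(\sqrt2+\varepsilon)(M-5)^2\le(1+\varepsilon)\bigl(2(M-5)^2+25\bigr)$, which is false for large $M$. So state and prove one of the shifted versions; nothing downstream changes.
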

\noindent
Once we have the Fej\'er monotonicity property, the convergence of the iterates is ensured by the following result. 
\begin{proposition}
\label{prop:clusterpoints}
Every weak sequential cluster point of $(\bx^{\iter})_{\iter\in \N}$ belongs to $\argmin F$.
\end{proposition}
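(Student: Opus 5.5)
We take a weak sequential cluster point $\bx^\infty$ of $(\bx^\iter)_{\iter\in\N}$, with $\bx^{\iter_k}\rightharpoonup \bx^\infty$, and show that $0\in\partial F(\bx^\infty)$. The argument goes through the optimality condition of the proximal step. The update with $\eta_\iter\equiv\eta$ is characterized by
\begin{equation*}
\frac{1}{\eta}\W_\iter(\bx^\iter-\bx^{\iter+1}) - \buu^\iter \in \partial\reg(\bx^{\iter+1}).
\end{equation*}
Adding $\nabla f(\bx^{\iter+1})=\buu^{\iter+1}$ to both sides gives
\begin{equation*}
\bm{u}^{\iter} := \frac{1}{\eta}\W_\iter(\bx^\iter-\bx^{\iter+1}) + \buu^{\iter+1}-\buu^\iter \in \partial F(\bx^{\iter+1}).
\end{equation*}
Here we use the sum rule for the subdifferential, which holds because $f$ is differentiable and finite everywhere. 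The goal is then to show $\bm{u}^{\iter}\to 0$ strongly, and to pass to the limit using the weak--strong closedness of the graph of the maximal monotone operator $\partial F$.

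\textbf{Step 1: the metrics are uniformly bounded.} By Proposition~\ref{proposition5}, $\sum_\ite\norm{\buu^{\ite}-\buu^{\ite-1}}^2<+\infty$. Hence every $\w_i^\iter$ is nondecreasing and converges to a finite limit $\w_i^\infty\le \varepsilon+\big(\sum_\ite \norm{\buu^\ite-\buu^{\ite-1}}^2\big)^{1/2}$. Together with $\w_i^\iter\ge\varepsilon$, this gives $\varepsilon\Id\preceq\W_\iter\preceq \beta\Id$ uniformly in $\iter$. The same proposition also gives $\buu^{\iter+1}-\buu^\iter\to0$.

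\textbf{Step 2: $\norm{\bx^{\iter+1}-\bx^\iter}\to 0$.} This is the step I expect to be the main obstacle. It does not follow from the gradient differences alone, since $L$-smoothness bounds those from above by the iterate differences, not from below. The plan is to use the prox optimality condition to get a descent-type inequality. Test the subgradient inclusion for $\reg$ at the point $\bx^\iter$ and combine it with the descent lemma
\begin{equation*}
f(\bx^{\iter+1})\le f(\bx^\iter)+\scalarp{\buu^\iter,\bx^{\iter+1}-\bx^\iter}+\tfrac{L}{2}\norm{\bx^{\iter+1}-\bx^\iter}^2.
\end{equation*}
This yields
\begin{equation*}
F(\bx^{\iter+1})\le F(\bx^\iter) - \tfrac1\eta\norm{\bx^{\iter+1}-\bx^\iter}^2_{\W_\iter}+\tfrac L2\norm{\bx^{\iter+1}-\bx^\iter}^2.
\end{equation*}
This only gives decrease once $\varepsilon/\eta > L/2$, which need not hold. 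I would therefore use a different route. Apply Lemma~\ref{lemma1} with $\bx=\bx^\iter$, or with $\bx\in\argmin F$, and sum; the quasi-Fej\'er property of Proposition~\ref{prop:Fejer} makes $(\norm{\bx^\iter-\bx}_{\W_\iter})$ convergent and the iterates bounded. A refined version of the basic inequality keeps the strong-convexity term of the prox subproblem, namely $\tfrac{1}{2\eta}\norm{\bx^{\iter+1}-\bx^\iter}^2_{\W_\iter}$, which the proof of Lemma~\ref{lemma1} drops. Summing that inequality with $\bx\in\argmin F$, and using summability of $\alpha_\iter$ and $\chi_\iter$ together with the bounded iterates, should give $\sum_\iter\norm{\bx^{\iter+1}-\bx^\iter}^2<\infty$. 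If this fails, the fallback is a finite-time argument. Since $\w_i^\iter\to\w_i^\infty$, and $\varepsilon$ alone cannot force $\w_i^\iter>\eta L/2$, I would instead bound $\norm{\bx^{\iter+1}-\bx^\iter}$ through nonexpansiveness of the prox in the $\W_\iter$ metric and compare consecutive steps.

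\textbf{Step 3: conclusion.} With Steps 1 and 2 in hand, $\norm{\bm{u}^\iter}\le \tfrac{\beta}{\eta}\norm{\bx^\iter-\bx^{\iter+1}}+\norm{\buu^{\iter+1}-\buu^\iter}\to0$. Moreover $\bx^{\iter_k+1}=\bx^{\iter_k}+(\bx^{\iter_k+1}-\bx^{\iter_k})\rightharpoonup\bx^\infty$. The graph of $\partial F$ is sequentially closed in the weak$\times$strong topology, since $F\in\Gamma_0(\bHH)$ makes $\partial F$ maximal monotone. Passing to the limit along $(\bx^{\iter_k+1},\bm{u}^{\iter_k})$ therefore gives $0\in\partial F(\bx^\infty)$, that is, $\bx^\infty\in\argmin F$.
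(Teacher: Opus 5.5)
Your Steps 1 and 3 match the paper's argument: the same element $\frac1\eta\W_n(\bx^n-\bx^{n+1})+\buu^{n+1}-\buu^n\in\partial F(\bx^{n+1})$, the same norm bound via $\sup_n\norm{\W_n}<+\infty$, and the same weak--strong closedness of $\mathrm{gra}(\partial F)$. The gap is Step 2, which you rightly single out as the crux but never establish; you offer a route that ``should'' work and a vague fallback. The missing observation is that no refinement is needed: take $\bx=\bx^n$ in \eqref{lemma1_bound}. The first term on the right vanishes, and $-\norm{\bx^{n+1}-\bx}^2_{\W_n}$ becomes exactly the copy of $-\norm{\bx^{n+1}-\bx^n}^2_{\W_n}$ you wanted to keep. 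Using $\W_n\succeq\varepsilon\Id$,
\[
\varepsilon\norm{\bx^{n+1}-\bx^n}^2\le\norm{\bx^{n+1}-\bx^n}^2_{\W_n}\le 2\eta\big(F(\bx^n)-F(\bx^{n+1})\big)+\frac{\eta^2}{\varepsilon}\norm{\buu^{n+1}-\buu^n}^2 .
\]
You never need monotone decrease $F(\bx^{n+1})\le F(\bx^n)$. Summing over $2\le n\le N$ telescopes the first term to at most $2\eta\big(F(\bx^2)-F_\star\big)$, and the second term is summable by Proposition~\ref{proposition5}. Hence $\sum_n\norm{\bx^{n+1}-\bx^n}^2<+\infty$ and $\bx^{n+1}-\bx^n\to0$. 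In effect, Lemma~\ref{lemma1} is the descent-type inequality you were after: the uncontrolled $\frac L2\norm{\bx^{n+1}-\bx^n}^2$ of the descent lemma is replaced by a summable gradient-difference term. This is what the paper does. Its displayed bound on $\norm{\bm{u}^{n+1}}$ mixes $\norm{\bx^{n+1}-\bx^n}$ with its square, but its valid content is this square-summability, which is all your Step 3 needs.

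Your primary route can be repaired, but not as you describe it. The proof of Lemma~\ref{lemma1} does not drop $\norm{\bx^{n+1}-\bx^n}^2_{\W_n}$; it spends all of it in the Young--Fenchel step that absorbs $2\eta\langle\bx^n-\bx^{n+1},\buu^n-\buu^{n+1}\rangle$. To retain a fraction $(1-\theta)\norm{\bx^{n+1}-\bx^n}^2_{\W_n}$ you need a weighted Young inequality, paying $\eta^2/\theta$ in front of $\norm{\buu^{n+1}-\buu^n}^2_{\W_n^{-1}}$. Then take $\bx\in\argmin F$ and control the metric change $\norm{\bx^{n+1}-\bx}^2_{\W_{n+1}}-\norm{\bx^{n+1}-\bx}^2_{\W_n}$ using the monotone, bounded $\W_n$ and the bounded iterates (Proposition~\ref{prop:Fejer} and Fact~\ref{fact:fejer}). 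This also yields square-summability, but it leans on the quasi-Fej\'er machinery, whereas $\bx=\bx^n$ needs only Lemma~\ref{lemma1}, Proposition~\ref{proposition5} and $F\ge F_\star$. The prox-nonexpansiveness fallback is too vague to count. As submitted, the proof is incomplete at Step 2. With the telescoping argument inserted, Steps 1 and 3 go through as written, including the index shift $\bx^{n_k+1}\rightharpoonup\bx^\infty$.
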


\begin{figure*}[ht]
    \centering
    %\vskip 0.2in
    % Row 1
    \begin{subfigure}{0.374\textwidth}
        \includegraphics[width=\linewidth]{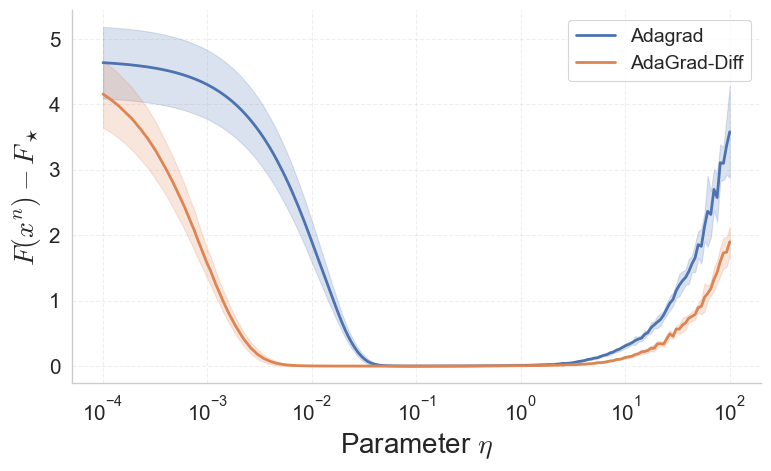}
    \end{subfigure}%
    \hspace{.5em}
    \begin{subfigure}{0.374\textwidth}
        \includegraphics[width=\linewidth]{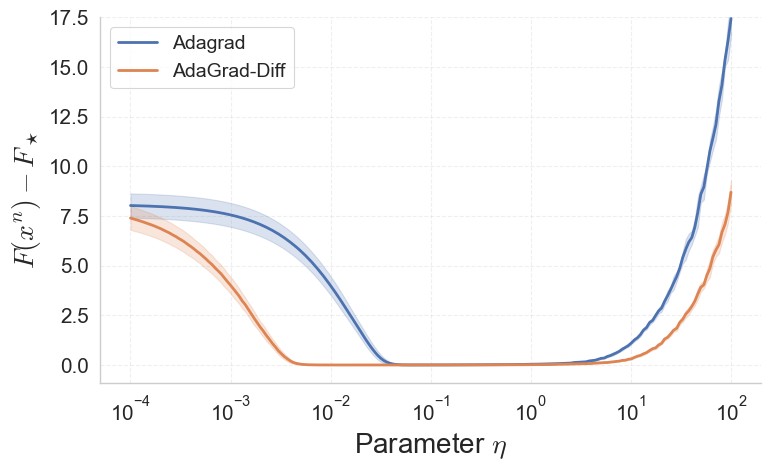}
    \end{subfigure}
    \vspace{0.3em}  % less vertical space
    % Row 2
    \begin{subfigure}{0.374\textwidth}
        \includegraphics[width=\linewidth]{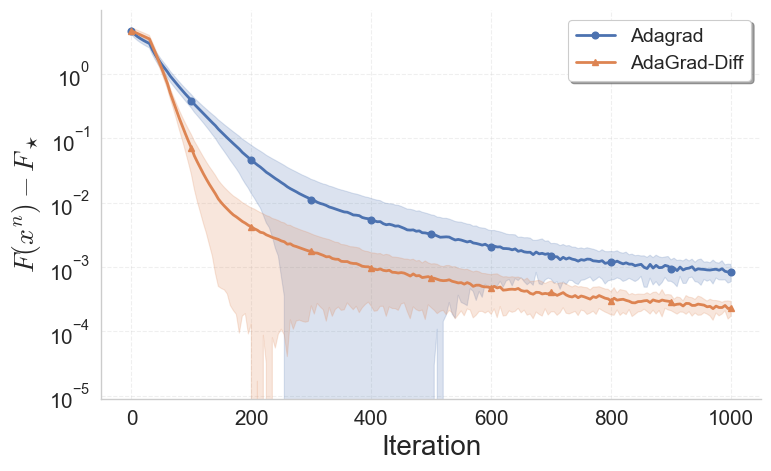}
        \caption{Hinge Loss on synthetic.}
    \end{subfigure}%
    \hspace{.5em}
    \begin{subfigure}{0.374\textwidth}
        \includegraphics[width=\linewidth]{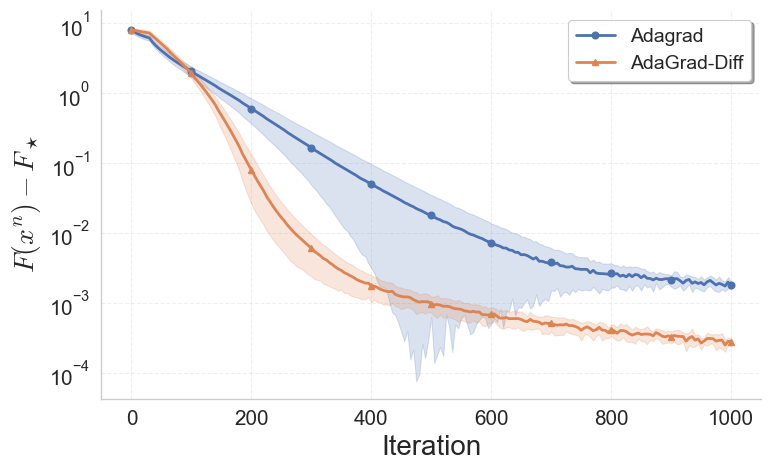}
        \caption{LAD Regression on synthetic.}
    \end{subfigure}
    \caption{On the first row, final optimality gaps after $\iter$ iterations across different choices for the parameter $\eta$, illustrating the sensitivity of the two methods in the setting with nonsmooth losses. On the second row, the minimization performance of AdaGrad and AdaGrad-Diff with optimally tuned stepsizes. The plots show the average and standard deviation over 10 initialization of the algorithms.}
    \label{fig:robustness_nonsmooth}
\end{figure*}
\begin{proof}[Proof of Theorem~\ref{thm_smooth}]
We first address the rate of convergence \eqref{eq:rate2}.
Similarly to the proof of Theorem~\ref{thm_nonsmooth} we need to control the second and third terms in the bound \eqref{extra}. To that purpose it is critical to use the results in Proposition~\ref{proposition5}
and Proposition~\ref{prop:Fejer} which ensure that $\sum_{\iter=1}^\infty \lVert \buu^{\iter+1} - \buu^\iter \rVert^2<+\infty$ and that the sequence $(\bx^\iter)_{\iter\in \N}$, being quasi-Fej\'er, is bounded, so that
$(\Delta^{\iter-1}_{\max})_{\iter\in \N}$ is bounded too. Note that as for last term in \eqref{extra}, 
%considering that $1/\w_i^{\ite-1}\leq 1/\varepsilon$, 
we have
\begin{align*}
\sum_{i=1}^\m\sum_{\ite=2}^{\iter} &\left(\frac{1}{\w_i^{\ite-1}} - \frac{1}{\w_i^\ite}\right)\norm{\uu_i^\ite - \uu_i^{\ite-1}}^2 \leq \max_{1\leq \ite\leq \iter} \norm{\buu^{\ite}-\buu^{\ite-1}}^2\sum_{i=1}^\m \frac{1}{\w^1_i}.
%\leq \frac 1 \varepsilon \sum_{\ite=2}^{\iter}\norm{\buu^\ite - \buu^{\ite-1}}^2.
\end{align*}
Since the sequence $(\norm{\buu^{\iter}-\buu^{\iter-1}}^2)_{\iter \in \N}$ is summable, it is bounded and hence the right hand-side of the above inequality can be bounded by a constant.
Concerning the weak convergence of the iterates, in view of Fact~\ref{fact:fejer}, it is a direct consequence of Proposition~\ref{prop:Fejer} and Proposition~\ref{prop:clusterpoints}.
\end{proof}

%If you are using \LaTeX, please use the ``algorithm'' and ``algorithmic'' environments to format pseudocode. These require the corresponding stylefiles, algorithm.sty and algorithmic.sty, which are supplied with this package. \cref{alg:example} shows an example.

\begin{figure*}[ht]
    %\vskip 0.2in
    \centering
    
    % Row 1
    \begin{subfigure}{0.33\textwidth}
        \includegraphics[width=\linewidth]{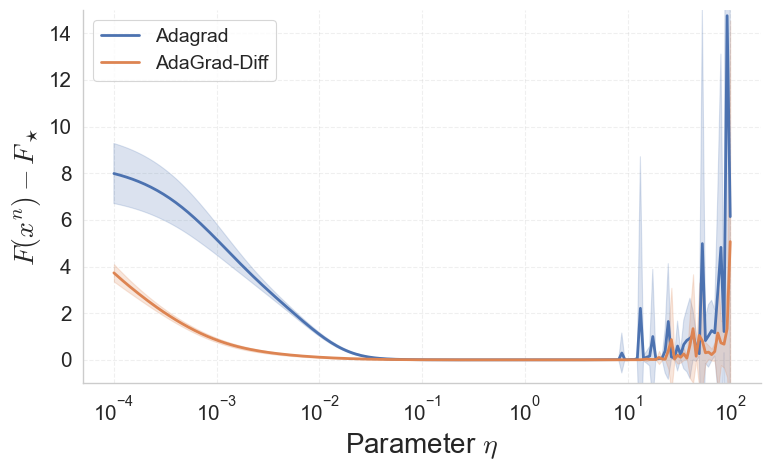}
    \end{subfigure}%
    \begin{subfigure}{0.33\textwidth}
        \includegraphics[width=\linewidth]{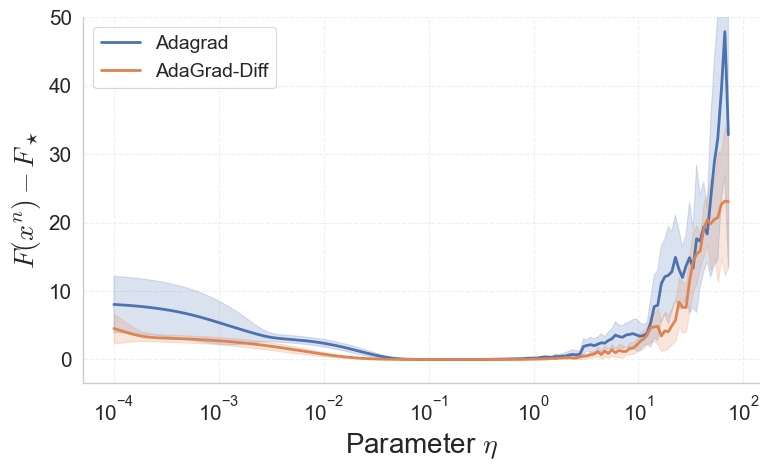}
    \end{subfigure}%
    \begin{subfigure}{0.33\textwidth}
        \includegraphics[width=\linewidth]{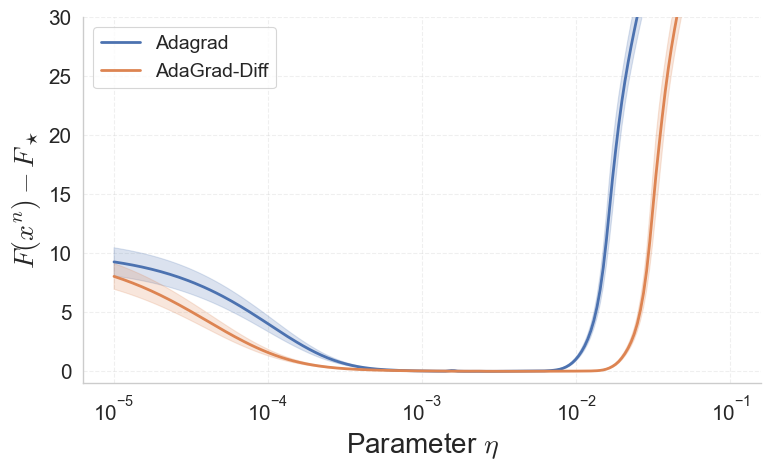}
    \end{subfigure}

    \vspace{0.3em}  % less vertical gap

    % Row 2
    \begin{subfigure}[t]{0.33\textwidth}
        \includegraphics[width=\linewidth]{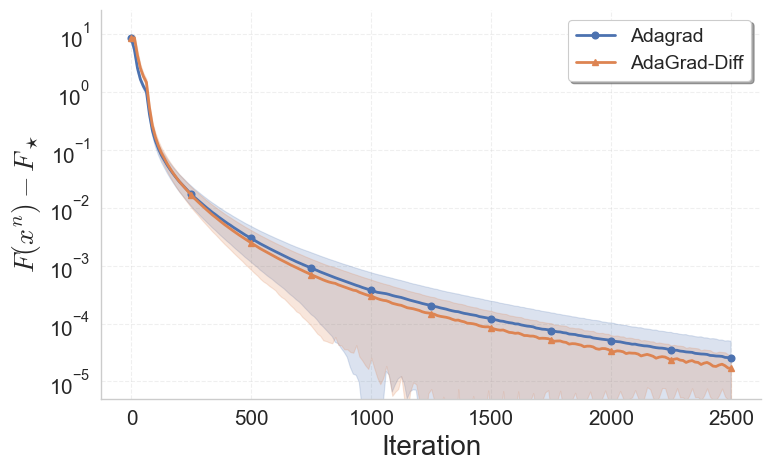}
        \caption{Log.~Regression on \texttt{news20}.}
    \end{subfigure}%
    \begin{subfigure}[t]{0.33\textwidth}
        \includegraphics[width=\linewidth]{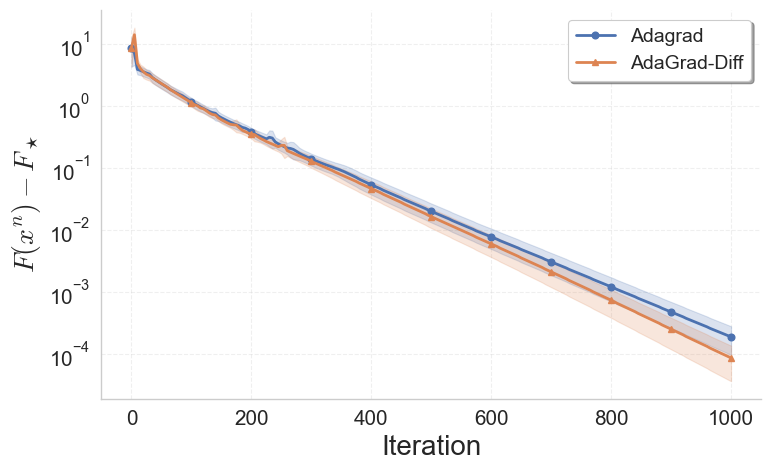}
        \caption{Log.~Regression on \texttt{splice.t}.}
    \end{subfigure}%
    \begin{subfigure}[t]{0.33\textwidth}
        \includegraphics[width=\linewidth]{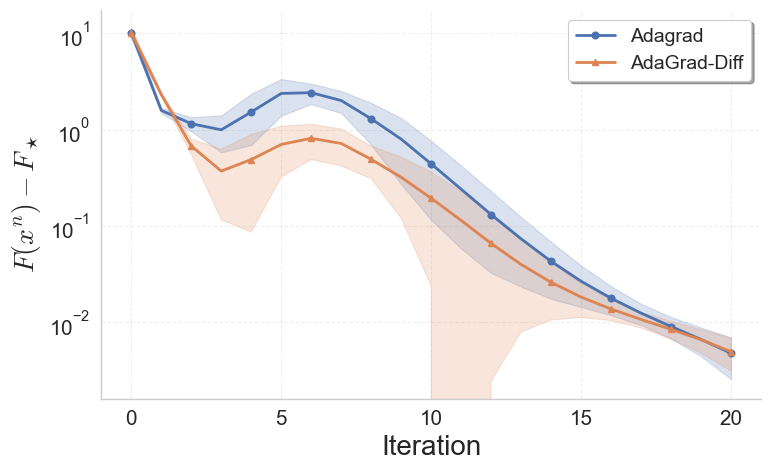}
        \caption{SVM Classification on \texttt{2moons}.}
    \end{subfigure}

    \caption{
    On the first row, final optimality gaps after $\iter$ iterations across different choices for the parameter $\eta$, illustrating the sensitivity of the two methods in the setting with smooth losses. On the second row, the minimization performance of AdaGrad and AdaGrad-Diff with optimally tuned stepsizes. The plots show the average and standard deviation over 10 initializations of the algorithms.
    }
    \label{fig:robustness_smooth}
    \vspace{-.2truecm}
\end{figure*}

\begin{figure*}[ht]
    %\vskip 0.2in
    \centering
    \begin{subfigure}[t]{\textwidth}
        \centering
        \includegraphics[width=\linewidth]{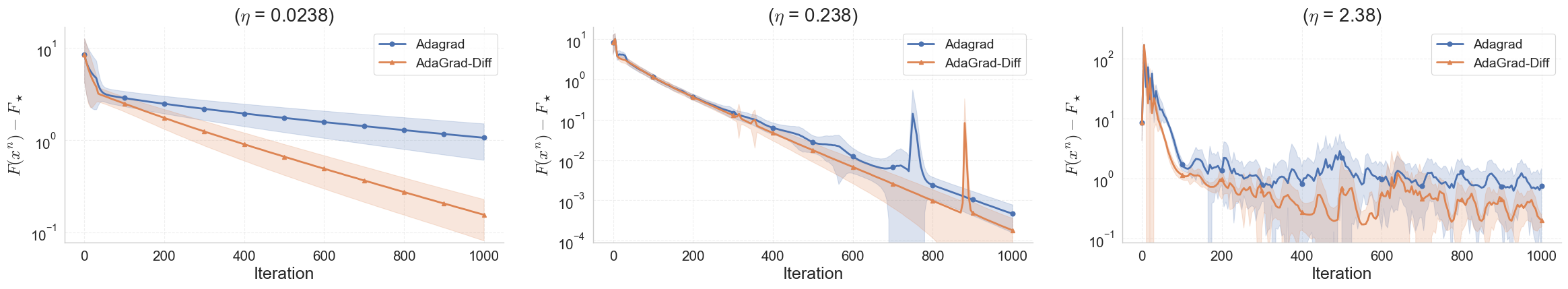}
        \caption{Optimality gap.}
        \label{fig:losses-splice}
    \end{subfigure}
    \hfill
    \begin{subfigure}[ht]{\textwidth}
        \centering
        \includegraphics[width=\linewidth]{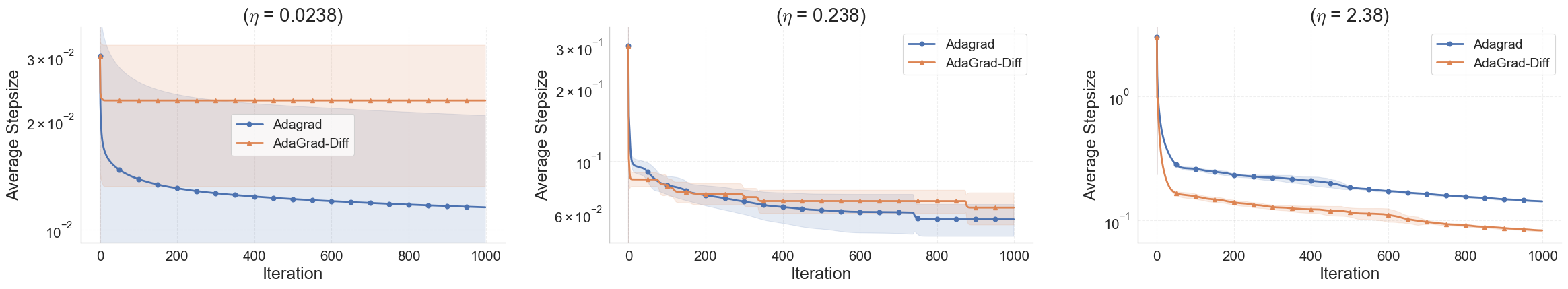}
        \caption{Stepsizes evolution.}
        \label{fig:stepsizes-splice}
    \end{subfigure}
    \caption{Comparison of AdaGrad and AdaGrad-Diff for Logistic Regression on the \texttt{splice.t} dataset.
    (a) Optimality gaps  across three stepsizes parameter settings
    ($\eta = 0.0238,\, 0.238,\, 2.38$).
    (b) Stepsize evolution across different choices for the parameter $\eta$. The plots show the average and standard deviation over 10 initializations of the algorithms.}
    \label{fig:loss-steps-splice}
\end{figure*}
\section{Numerical Experiments}\label{4}
\label{sec:exps}
To determine whether the proposed variant yields practical advantages, we consider 5 different convex optimization problems of the form \eqref{eq:main_problem}, incorporating both synthetic and real-world data. Specifically, we address
\begin{equation}
    \min_{\bx\in\R^\m} f(\bx) + \varphi(\bx) := \frac{1}{N}\sum_{j=1}^N \ell(\bx, z_j) + \varphi(\bx)
\end{equation}
where $z_j = (a_j, b_j)\in\R^\m\times\R$ represents an input-output pair drawn from an (unknown) underlying distribution $\mathcal{D}$ and $N$ is the number of samples.
Following the above considerations, we employ datasets of the form $(A, \boldsymbol{b})$ with $A \in \R^{N \times \m}$ and $\boldsymbol{b} \in \R^{N}$. We compare our algorithm with AdaGrad over 10 independent initializations. We use for both methods a small constant $\varepsilon = 10^{-8}$, for numerical stability.  
In the following, both settings provided by Assumption \ref{ass:1} and Assumption \ref{ass:2} are considered.
\subsection{The $G$-Lipschitz continuous setting}
We study two problems satisfying Assumption~\ref{ass:1}, using either the \textbf{Hinge loss} or the \textbf{Least Absolute Deviation (LAD) loss} with $\ell_1$-regularization:
\begin{align*}
    & \ell(\bx, (a, b)) = 
    \begin{cases} 
    \max(0, 1 - b \langle \bx, a \rangle) & \text{(Hinge loss)}\\
    |b - \langle \bx, a \rangle| & \text{(LAD loss)}
    \end{cases} \\
    & \reg(\bx) = \lambda \lVert\bx\rVert_1
\end{align*}
where $a \in \R^\m$ and $b \in \R$ and $\lambda > 0$ is the regularization parameter. 
We generate synthetic datasets with all components drawn independently from a standard normal distribution. Specifically, the matrix $A$ has i.i.d.\ $\mathcal{N}(0,1)$ entries, as do the vector $w \in \R^\m$ and the noise $\epsilon\in \R^N$. To induce sparsity, a mask is applied to the entries of $w$. The response vector is, namely, either
\begin{equation*}
    \boldsymbol{b} = \text{sgn}(Aw + \epsilon)\quad \text{or}\quad\boldsymbol{b} = Aw + \epsilon ,
\end{equation*}
where $\text{sgn}(\cdot)$ denotes the sign function. We set $N = 500$, $\m = 100$, and $\lambda = 10^{-2}$, and the vector $w$ has 20 nonzero entries. A budget of 1000 iterations is considered for both problems.
\subsection{The $L$-Lipschitz smooth setting}
We consider three binary classification problems satisfying Assumption \ref{ass:2}. \\

\noindent
\textbf{Logistic Regression with $\ell_2$-penalty}. In this first problem, we incorporate the $\ell_2$ term into the smooth term $f$. Namely, we set 
\begin{equation*}
    \begin{cases}
        f(\bx) = \frac{1}{N}\sum_{j=1}^N\log(1 + \exp(-b_j\scalarp{a_j,\bx})) + \frac{\sigma}{2}\lVert\bx\rVert_2^2 \\
        \reg \equiv 0
    \end{cases}  
\end{equation*}
with $\sigma > 0$ the regularization parameter, $\bx \in \R^\m, b_j \in \{+1, -1\}$. 
The dataset used in this experiment is the \texttt{news20.binary}\footnote{ \url{https://www.csie.ntu.edu.tw/~cjlin/libsvmtools/datasets/} \label{libsvm}} dataset from the LIBSVM collection, which consists of $N = 15935$ samples in dimension $\m = 62061$. The regularization parameter is set to $\sigma = 10^{-4}$, and a budget of 2500 iterations is considered.\\

\noindent
\textbf{Logistic Regression with $\ell_1$-penalty}.
In this problem, we employ an $\ell_1$-regularization, so that the nonsmooth term is captured by $\reg$. Specifically, 
\begin{equation*}
\begin{cases}
    \ell(\bx, (a, b)) = \log(1 + \exp(-b\scalarp{a,\bx})) \\
    \reg(\bx) = \lambda \lVert \bx \rVert_1.
\end{cases}  
\end{equation*}
The dataset we consider now is the \texttt{splice.t} dataset\footref{libsvm} with $N = 2175$, $\m = 60$.
We set $\lambda = 10^{-2}$, and we consider a budget of 1000 iterations.\\

\noindent
\textbf{Support Vector Machines (SVM) for Classification.} We consider a well-known classification problem based on the hinge loss, Gaussian kernels, and $\ell_2$-regularization. Let $\Lambda: \R^\m \to \mathcal{H}$ denote the feature map associated with the Gaussian kernel, where $\mathcal{H}$ is the corresponding reproducing kernel Hilbert space (RKHS). The learning task consists of minimizing a regularized empirical risk functional in the primal formulation, namely
\begin{equation}
\min_{h\in \mathcal{H}} \;
\frac{1}{N} \sum_{j=1}^N \max\left(0, 1 -b_j\scalarp{h,\Lambda(a_j)}\right)  
+ \frac{\lambda}{2}\lVert h\rVert_\mathcal{H}^2
\end{equation}
We focus on the dual formulation of this problem. Denoting by $\balpha \in \mathbb{R}^N$ the vector of dual variables, the dual problem is defined by setting
\begin{equation*}
\begin{cases}
    f(\balpha) = \frac{1}{2\lambda}\balpha^\top K \balpha - \langle \balpha  , \mathbf{b} \rangle  \\
    \reg(\balpha) = \iota_{[0,1/N]^N}(\mathbf{b} \odot \balpha).
\end{cases}
\end{equation*}
Here, $K \in \R^{N \times N}$ denotes the kernel matrix, and $\odot$ is the Hadamard product. The indicator function $\iota_{[0,1/N]^N}$ enforces the box constraints $0 \le b_j\alpha_j \le 1/N$ for all $j \in [N]$.
We consider the 2-dimensional synthetic dataset \texttt{2moons}, which is available via the scikit learn library \cite{scikit}. We choose $N = 300$.
Moreover, we set $\lambda = 10^{-3}$ and the Gaussian width equal to 1. A budget of 20 iterations is considered.

\subsection{Effect of the stepsize parameter $\eta$}
\textbf{Robustness to $\eta$.}
%with respect to the stepsize parameter $\eta$.} 
We study the influence of the stepsize parameter $\eta$ on the performance of both methods. The results reported in the first row of Figures \ref{fig:robustness_nonsmooth} and \ref{fig:robustness_smooth} show that AdaGrad-Diff converges reliably across a much wider range of choices for $\eta$ compared to standard AdaGrad. In particular, for moderate choices of $\eta$, the method consistently achieves faster convergence. Even in extreme regimes, when $\eta$ is either very small or very large, AdaGrad-Diff maintains meaningful progress along the optimization trajectory, reaching lower objective values than its counterpart. This demonstrates that the proposed accumulation of gradient differences stabilizes the dynamics and enhances robustness to hyperparameter selection, reducing the need for careful tuning.\\

\noindent
Figure \ref{fig:loss-steps-splice} further shows that, when $\eta$ is close to the optimally tuned ones, the two methods exhibit comparable performance. However, for suboptimal choices of $\eta$, AdaGrad-Diff consistently yields significantly improved performance compared to AdaGrad. These observations were consistently reproduced across all experiments considered in the section, suggesting a robust empirical trend. Due to space constraints, these results are provided in Appendix \ref{B}.\\

\noindent
\textbf{Minimization performance.}
We also evaluate the minimization performance of AdaGrad and AdaGrad-Diff using optimally tuned values of the stepsize parameter $\eta$ for the experiments considered in this section. As shown in the second row of Figures \ref{fig:robustness_nonsmooth} and \ref{fig:robustness_smooth}, AdaGrad-Diff achieves convergence behavior that is comparable to, and in several cases better than, that of AdaGrad.

\section{Discussion}
We proposed an AdaGrad-style adaptive variant, where the adaptation is driven not by the cumulative (sub)gradient squared norms but by the cumulative squared norms of successive (sub)gradient differences. 
Empirically, we showed across several problems that the proposed method results in increased robustness with respect to the stepsize parameter $\eta$ when compared to AdaGrad, reducing sensitivity to hyperparameter tuning, a highly desirable property in practice. 

% We note that our derivation is of independent interest for advancing parameter-free optimization methods.
% In both \cite{ivgi2023dog} and \cite{defazio2023learning}, the authors aim to construct a parameter-free variant of AdaGrad in which the sequence $\eta_\iter$ serves as an online estimate of the distance between the initial point $\bx^1$ and a minimizer $\bx_\star$. However, this estimate is non-increasing, and consequently, the term $\sum_{\ite=1}^\iter \eta_\ite \bigl( \reg(\bx^{\ite+1}) - \reg(\bx^\ite) \bigr)$ obtained when summing over the iterates in \eqref{AdaGradBound}, is not necessarily non-positive, leaving its control unclear, unlike in Remark \ref{rmk:adagrad_bound}. To circumvent this issue, they omit the regularization term altogether and develop their theory in the unregularized setting.

\subsection{Limitations}
Even if standard in AdaGrad's analysis, a limitation of our result in the nonsmooth case is that the iterates are required to be bounded. This is not a problem in the context of a bounded constraint set, but in general, it is a condition that cannot be ensured a priori.
 A further technical aspect concerns the form of the final bound. Specifically, in the last inequality of the proofs of Theorems~\ref{thm_nonsmooth} and  \ref{thm_smooth}, a term appears that depends on the inverse of the initial gradient components. While this dependence may seem counterintuitive, it is not specific to our analysis and also arises in the Dual Averaging variant of AdaGrad \cite{duchi2011adaptive} as well as in more recent work \cite{defazio2023learning}. 

% In particular, since implementations typically choose a small parameter $\varepsilon \ll 1$ to avoid stalling at early iterations, the resulting bounds may involve very large constants when some entries of the initial gradient are extremely small or zero. While well-known in the literature, we regard this sensitivity as an unsatisfactory aspect of the current analysis.

% Our theoretical analysis is restricted to deterministic convex optimization. Extending the results to either nonconvex and/or stochastic settings remains an important open direction. Moreover, a technical limitation in the nonsmooth setting is the need to assume bounded iterates, an assumption that is standard in AdaGrad analyses and shared by our approach. Finally, a minor issue arises in the form of the final bound. Indeed, in the last inequality in the proof of Theorem \ref{thm_nonsmooth}, we obtain a term that depends on the inverse of the initial gradient components. This appears counterintuitive, although this issue is not new, as it also appears in the Dual Averaging variant of AdaGrad \cite{duchi2011adaptive} and in more recent analyses \cite{defazio2023learning}.  

\subsection{Future Work}
We conclude by outlining several directions that we consider both important and challenging.
%\textbf{Nonconvex optimization.}  
%Extending the analysis of AdaGrad-
First, extending the analysis of AdaGrad-Diff to nonconvex objectives is a natural and practically relevant, as many learning problems are inherently nonconvex. Second, while our results focus on deterministic optimization, extending them to stochastic settings under weaker and more realistic assumptions than almost-sure gradient boundedness remains an open challenge. Third, given the central role of Adam and related methods, exploring AdaGrad-Diff–style modifications incorporating exponential moving averages and momentum is a promising direction. Finally, investigating parameter-free variants may reveal additional benefits of difference-based stepsize mechanisms.

%\textbf{Stochastic optimization.} The results presented in this work focus on deterministic optimization. Existing approaches, such as those in \cite{carmon2022making,ivgi2023dog, defazio2023learning}, can be used to extend our nonsmooth convex analyses to stochastic settings with high-probability guarantees, but such arguments rely on almost-sure boundedness of the stochastic gradients.  Developing guarantees under weaker and more realistic assumptions remains an open problem.
%\textbf{Exponential moving averages and momentum.} Given the central role of Adam and its variants in modern machine learning, a promising direction is to explore AdaGrad-Diff–style modifications with exponential moving average and momentum.
%\textbf{Parameter-free optimization.} A final direction for future work is to investigate parameter-free strategies, in the spirit of \cite{ivgi2023dog} and \cite{defazio2023learning}, to assess whether the proposed difference-based stepsize mechanism can yield additional benefits in such settings.
\label{sec:disc}

\bibliographystyle{plainnat}
\bibliography{main_icml}

%%%%%%%%%%%%%%%%%%%%%%%%%%%%%%%%%%%%%%%%%%%%%%%%%%%%%%%%%%%%%%%%%%%%%%%%%%%%%%%
%%%%%%%%%%%%%%%%%%%%%%%%%%%%%%%%%%%%%%%%%%%%%%%%%%%%%%%%%%%%%%%%%%%%%%%%%%%%%%%
% APPENDIX
%%%%%%%%%%%%%%%%%%%%%%%%%%%%%%%%%%%%%%%%%%%%%%%%%%%%%%%%%%%%%%%%%%%%%%%%%%%%%%%
%%%%%%%%%%%%%%%%%%%%%%%%%%%%%%%%%%%%%%%%%%%%%%%%%%%%%%%%%%%%%%%%%%%%%%%%%%%%%%%

\newtheorem{innercustomthm}{{\bf Theorem}}
\newenvironment{customthm}[1]
  {\renewcommand\theinnercustomthm{{\bf #1}}\innercustomthm}
  {\endinnercustomthm}

\newtheorem{innercustomlem}{{\bf Lemma}}
\newenvironment{customlem}[1]
  {\renewcommand\theinnercustomlem{{\bf #1}}\innercustomlem}
  {\endinnercustomlem}
  
  \newtheorem{innercustomcor}{{\bf Corollary}}
\newenvironment{customcor}[1]
  {\renewcommand\theinnercustomcor{{\bf #1}}\innercustomcor}
  {\endinnercustomcor}

\newtheorem{innercustomprop}{{\bf Proposition}}
\newenvironment{customprop}[1]
  {\renewcommand\theinnercustomprop{{\bf #1}}\innercustomprop}
  {\endinnercustomprop}

% \newpage
\appendix
% \onecolumn
\begin{center}
{\Large \bf Appendices} 
\end{center}

\section{Proofs of Section~\ref{sec:analysis}} 
\label{A}

In this section we set
\begin{equation*}
    (\forall\,\iter\in \N)(\forall\,i\in \{1,\dots, m\})\quad
    v_i^\iter:= \sqrt{\sum_{\ite=1}^\iter\norm{\uu_i^\ite- \uu_i^{\ite-1}}},
\end{equation*}
so that, $\w_i^\iter= \varepsilon + v_i^\iter$.

\begin{remark}
\label{rmk1}
Let $\HH$ be a Hilbert space and let $\W\colon \HH\to \HH$ be a positive self-adjoint operator which is bounded from below, meaning that $\W \succeq \alpha\Id$ for some $\alpha>0$. We define
\begin{equation*}
\scalarp{\,\cdot\,,\,\cdot\,}_\W\colon \HH\times\HH\to \R\colon (\x,\y)\mapsto \scalarp{\W\x,\y}.
\end{equation*}
We have that $\scalarp{\,\cdot\,,\,\cdot\,}_\W$ is a scalar product on $\HH$ and $(\HH, \scalarp{\,\cdot\,,\,\cdot\,}_\W)$ is also a Hilbert space. For the corresponding norm $\norm{\x}_\W = (\scalarp{\W\x,\x}_\W)^{1/2}$ we have
\begin{equation}
\label{eq:20260124a}
(\forall\,\x\in \HH)\quad \frac{1}{\norm{\W^{-1}}} \norm{\x}^2 \leq \norm{\x}_\W^2 \leq \norm{\W}\norm{\x}^2.
\end{equation}
Indeed $\norm{\x}_\W^2 = \scalarp{\W\x,\x} \leq \norm{\W} \norm{\x}^2$
and $\norm{\x}^2 = \scalarp{\W^{-1}\W \x, \x}\leq \norm{\W^{-1}} \norm{\x}_\W^2$.
Note that if $\HH = \bigoplus_{i=1}^m \HH_i$ and $\W = \bigoplus_{i=1}^m \w_i \Id_i$, with $w_i>0$ for every $i\in [m]$, then
\begin{equation*}
\norm{\W} = \sup_{\norm{\x}^2\leq 1} \scalarp{\W\x, \x} =
\sup \bigg\{ \sum_{i=1}^m \w_i \norm{\x_i}^2 \,\Big\vert\, \x\in \HH\ \text{and}\ \sum_{i=1}^m \norm{\x_i}^2\leq 1\bigg\} = \max_{1\leq i\leq m}{\w_i}.
\end{equation*}
Similarly $\norm{\W^{-1}} = \max_{1\leq i\leq m}\w_i^{-1} = (\min_{1\leq i\leq m} \w_i)^{-1}$.
Moreover, suppose that $f\in \Gamma_0(\HH)$ and let $\x\in \HH$. Then, for every $\uu\in \HH$,
\begin{equation*}
\uu\in \partial f(x)\ \Leftrightarrow\ (\forall\, \y\in \HH)\ f(\y) \geq f(\x)+\scalarp{\y-\x, \uu}
\ \Leftrightarrow\ (\forall\, \y\in \HH)\ f(\y) \geq f(\x)+\scalarp{\y-\x, \W^{-1}\uu}_{\W}.
\end{equation*}
Thus, we have $ \uu\in \partial f(\x)\ \Leftrightarrow\ \W^{-1}\uu\in \partial^{\W}\!f(\x)$, where $\partial^{\W}\!f(\x)$ denotes the subdifferential of $f$ at $\x$ in the Hilbert space $(\HH, \scalarp{\,\cdot\,, \,\cdot\,}_{\W})$. 
%Finally we note that if $f\colon \HH\to \R$ is Lipschitz smooth with constant $L$ in the Hilbert space $\HH$, then, using \eqref{eq:20260124a}, we have
%\begin{align*}
%(\forall\,\x,\y\in \HH)\quad
%\norm{\nabla^\W\! f(\x)- \nabla^\W\! f(\y)}_{\W}^2
%&= \norm{\W^{-1}\nabla f(\x)- \W^{-1}\nabla f(\y)}_{\W}^2 \\[1ex]
%&= \scalarp{\nabla f(\x)- \nabla f(\y), \W^{-1}(\nabla f(\x)- \nabla f(\y))}\\[1ex]
%&\leq \norm{\W^{-1}}\norm{\nabla f(\x)- \nabla f(\y)}^2\\[1ex]
%&\leq L^2\norm{\W^{-1}} \norm{\x-\y}^2\\[1ex]
%&\leq L^2\norm{\W^{-1}}^2 \norm{\x-\y}_{\W}^2.
%\end{align*}
%Thus, $\nabla^\W\! f$ is Lipschitz continuous with constant $L \norm{\W^{-1}}$ in the Hilbert space
%$(\HH, \scalarp{\,\cdot\,, \,\cdot\,}_{\W})$.
\end{remark}

\begin{lemma}
\label{lem:basicPGstep}
Let $X$ be a real Hilbert space,
let $\W\colon \HH\to \HH$ be a positive self-adjoint operator bounded from below and set
$\scalarp{\x,\y}_{\W} = \scalarp{\W\x,\y}$, for every $\x,\y\in \HH$.
Let $f,\reg\in \Gamma_0(X)$ and set $F = f+\reg$.
Let $x \in X$, $\uu \in \partial f(x)$, $\eta>0$, and set 
%$\x_+ = \argmin_{\y\in \HH} \big\{ \scalarp{\y-\x, \uu} + \reg(\y)+ (2\eta)^{-1}\norm{\y-\x}_{\W}^2 \big\}$.
$x_+ = \prox^\W_{\eta g}(x - \eta \W^{-1}\uu) = \argmin_{\y\in \HH} \big\{ \scalarp{\y-\x, \uu} + \reg(\y)+ (2\eta)^{-1}\norm{\y-\x}_{\W}^2 \big\}$. 
Then, for every $z\in \HH$
\begin{equation}
\label{eq:basicPGstep}
2\eta (F(x_+)- F(z)) \leq \norm{x-z}_{\W}^2 - \norm{x_+-z}_{\W}^2 - \norm{x_+-x}_{\W}^2 +2\eta \big[f(x_+)-f(x)- \pair{x_+-x}{\uu}\big].
\end{equation}
%In particular the following hold.
%\begin{enumerate}[label={\rm (\roman*)}]
%\item\label{lem:basicPGstep_i} Suppose that $\uu_+\in \partial f(x_+)$. Then
%\begin{equation*}
%(\forall\,z\in X)\quad 2\eta (F(x_+)- F(z)) \leq \norm{x-z}_{\W}^2 - \norm{x_+-z}_{\W}^2 +\eta^2 \norm{\uu - \uu_+}_{\W^{-1}}^2.
%\end{equation*}
%\item\label{lem:basicPGstep_ii} Suppose that $f\colon X\to \R$ is Lipschitz smooth with constant $L$. Then
%\begin{equation*}
%(\forall\,z\in X)\quad 2\eta (F(x_+)- F(z)) \leq \norm{x-z}_{\W}^2 - \norm{x_+-z}_{\W}^2 + \eta^2 \norm{\nabla f(x_+)-\nabla f(x)}^2
%- \frac{\eta}{L} \norm{\nabla f(x)-\nabla f(x_+)}^2.
%\end{equation*}
%\end{enumerate}
\end{lemma}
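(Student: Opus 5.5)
The inequality is the standard three-point estimate for a proximal step in the Hilbert space $(\HH,\scalarp{\cdot,\cdot}_\W)$, combined with the subgradient inequality for $f$ at $x$. I will prove it in three steps.

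\textbf{Step 1 (optimality condition).} The objective $\y\mapsto \scalarp{\y-\x,\uu}+\reg(\y)+(2\eta)^{-1}\norm{\y-\x}_\W^2$ is $(1/\eta)$-strongly convex with respect to $\norm{\cdot}_\W$. Hence $x_+$ exists and is unique, and it minimizes this objective. Using Remark~\ref{rmk1}, I rewrite the linear term as $\scalarp{\y-\x,\W^{-1}\uu}_\W$. The optimality condition in the $\W$-geometry then reads
\[
\tfrac{1}{\eta}(x-x_+)-\W^{-1}\uu\in\partial^\W\reg(x_+).
\]

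\textbf{Step 2 (bound on $\reg$).} The subgradient inequality in $(\HH,\scalarp{\cdot,\cdot}_\W)$ gives, for every $z$,
\[
\reg(z)\ge \reg(x_+)+\tfrac1\eta\scalarp{x-x_+,z-x_+}_\W-\scalarp{z-x_+,\uu}.
\]
Multiplying by $2\eta$, I apply the polarization identity
\[
2\scalarp{x-x_+,z-x_+}_\W=\norm{x-x_+}_\W^2+\norm{z-x_+}_\W^2-\norm{x-z}_\W^2 .
\]
This yields
\[
2\eta(\reg(x_+)-\reg(z))\le \norm{x-z}_\W^2-\norm{x_+-z}_\W^2-\norm{x_+-x}_\W^2+2\eta\scalarp{z-x_+,\uu}.
\]

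\textbf{Step 3 (bound on $f$).} Convexity of $f$ and $\uu\in\partial f(x)$ give $f(z)\ge f(x)+\scalarp{z-x,\uu}$. I write
\[
f(x_+)-f(z)\le f(x_+)-f(x)-\scalarp{z-x,\uu}.
\]
Adding $2\eta$ times this to the bound from Step 2, the inner products combine as
\[
\scalarp{z-x_+,\uu}-\scalarp{z-x,\uu}=-\scalarp{x_+-x,\uu}.
\]
This gives exactly the bracket $f(x_+)-f(x)-\pair{x_+-x}{\uu}$ in \eqref{eq:basicPGstep}.

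\textbf{Main obstacle.} There is no real obstacle; the only point needing care is Step 1. One must translate the subdifferential of $\reg$ correctly into the $\W$-metric via Remark~\ref{rmk1}, so that the factor $\W^{-1}$ on $\uu$ cancels properly. Equivalently, one can avoid $\partial^\W$ altogether by using the ordinary optimality condition $\W(x-x_+)/\eta-\uu\in\partial\reg(x_+)$ directly. A minor additional check is finiteness: if $F(z)=+\infty$ the inequality is trivial, so we may assume $z\in\mathrm{dom}\,\reg$; and $x_+\in\mathrm{dom}\,\reg$ automatically, being a minimizer of a proper function.
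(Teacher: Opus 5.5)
Your proof is correct and follows essentially the same route as the paper. The paper likewise uses Fermat's rule to get $\W(x-x_+)/\eta-\uu\in\partial\reg(x_+)$, applies the subgradient inequality for $\reg$ together with the three-point identity in the $\W$-norm, and then uses $\uu\in\partial f(x)$ to produce the bracket $f(x_+)-f(x)-\pair{x_+-x}{\uu}$; your remarks on existence and uniqueness of $x_+$ and on the case $F(z)=+\infty$ are harmless details the paper leaves implicit.
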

\begin{proof}
%As a preliminary observation we note that $(\HH, \scalarp{\,\cdot\,,\,\cdot\,}_\W)$ is also a Hilbert space.
%We first prove \eqref{eq:basicPGstep}. 
Let $z\in X$.
By definition of $x_+$ and Fermat's rule, $0\in \uu + \W(\x_+-\x)/\eta +  \partial \reg(\x_+)$
we have $\W(x-x_+)/\eta - \uu\in \partial \reg(x_+)$. Thus, by definition of subdifferential,
\begin{equation*}
\reg(z)-\reg(x_+) \geq \pair{z-x_+}{\W(x-x_+)/\eta - \uu} = \eta^{-1}\pair{z-x_+}{\W(x-x_+)}
-\pair{z-x_+}{\uu}.
\end{equation*}
Multiplying both side by $-2\eta$ we have
\begin{equation*}
2\eta(\reg(x_+)-\reg(z)) \leq  2\pair{x_+-z}{\W(x-x_+)}
+2\eta\pair{z-x_+}{\uu}.
\end{equation*}
Now we note that
\begin{equation*}
\norm{\x- z}_{\W}^2 = \norm{\x - x_+}_{\W}^2 + \norm{\x_+-z}_{\W}^2 +2\pair{\W(\x-x_+)}{\x_+-z}.
\end{equation*}
Hence
\begin{equation}
\label{eq:20250829a}
2\eta(\reg(\x_+)-\reg(z)) \leq  \norm{\x- z}_{\W}^2 - \norm{\x_+-z}_{\W}^2 -\norm{\x - \x_+}_{\W}^2 
+2\eta\pair{z-x_+}{\uu}.
\end{equation}
On the other hand, using the fact that $\uu \in \partial f(\x)$, it holds
\begin{equation*}
\pair{z-\x_+}{\uu} = \pair{z-\x}{\uu} + \pair{\x-\x_+}{\uu}
\leq f(z)-f(\x_+) + f(\x_+)- f(\x) - \pair{\x_+-\x}{\uu}.
\end{equation*}
Thus, 
\begin{equation}
\label{eq:20250829b}
2\eta \pair{z-\x_+}{\uu} \leq 2\eta (f(z)-f(\x_+)) + 2 \eta \big[f(\x_+)- f(\x) - \pair{\x_+-\x}{\uu}\big].
\end{equation}
Combining \eqref{eq:20250829a} and \eqref{eq:20250829b}, inequality \eqref{eq:basicPGstep} follows.
\end{proof}

%\begin{remark}
%\label{rmk2}
%Recalling Remark~\ref{rmk1}, let $f,\reg \in \Gamma_0(\HH)$, $\x\in \HH$, $\uu\in \partial f(\x)$, and set
%\begin{equation*}
%\x_+ = \prox^{\W}_{\eta \reg} (\x- \eta \W^{-1}\uu) 
%= \argmin_{\y\in \HH}\Big\{ \scalarp{\y-\x, \uu} + \reg(\y) + \frac{1}{2\eta} \norm{\y-\x}^2_{\W}\Big\}.
%\end{equation*}
%Applying Lemma~\ref{lem:basicPGstep} in the Hilbert space $(\HH, \scalarp{\,\cdot\,, \,\cdot\,}_{\W})$ the following hold
%\begin{enumerate}[label={\rm (\roman*)}]
%\item\label{rmk2_i} If $\uu_+\in \partial f(x_+)$, then, for every $z\in \HH$
%\begin{align}
%\label{eq:rmk2_i}
%\nonumber 2\eta (F(x_+)- F(z)) &\leq \norm{x-z}_\W^2 - \norm{x_+-z}_\W^2 +\eta^2 \norm{\W^{-1}(\uu - \uu_+)}_W^2\\[1ex]
%&= \norm{x-z}_\W^2 - \norm{x_+-z}_\W^2 +\eta^2 \norm{\uu - \uu_+}_{\W^{-1}}^2.
%\end{align}
%\item\label{lem:basicPGstep_ii} If $f\colon X\to \R$ is Lipschitz smooth with constant $L$ (in the Hilbert space $\HH$), then for every $z\in \HH$,
%\begin{align}
%\label{eq:rmk2_ii}
%\nonumber 2\eta (F(x_+)- F(z)) &\leq \norm{x-z}_\W^2 - \norm{x_+-z}_\W^2 + \eta^2 \norm{\W^{-1}(\nabla f(x_+)-\nabla f(x))}_\W^2\\[1ex]
%\nonumber&\hspace{15ex} - \frac{\eta}{L\norm{\W^{-1}}} \norm{\W^{-1}(\nabla f(x)-\nabla f(x_+))}_{\W}^2\\
%&\leq \norm{x-z}_\W^2 - \norm{x_+-z}_\W^2 + \eta^2 \norm{\nabla f(x_+)-\nabla f(x)}_{\W^{-1}}^2 - \frac{\eta}{L} \norm{\nabla f(x)-\nabla f(x_+)}^2.
%\end{align}
%\end{enumerate}
%\end{remark}
\allowdisplaybreaks

\begin{customlem}{\ref{lemma1}}[\textbf{The basic inequalities}]
Let $\bx\in \bHH$ and $\iter \in \N$. Then
\begin{equation}%\label{lemma1_bound_app}
\tag{\ref{lemma1_bound}}    2\eta_\iter(F(\bx^{\iter+1}) - F(\bx))  
    \le \norm{\bx^\iter - \bx}_{\W_\iter}^2- \norm{\bx^{\iter+1} - \bx}_{\W_\iter}^2+ \eta_\iter^2 \norm{\buu^{\iter+1} - \buu^{\iter}}_{\W_\iter^{-1}}^2.
    \end{equation}
Moreover, under Assumption~\ref{ass:2}, it holds
\begin{equation}%\label{lemma1_bound2_app}
\tag{\ref{lemma1_bound2}}    2\eta_\iter(F(\bx^{\iter+1}) - F(\bx))  
    \le \norm{\bx^\iter - \bx}_{\W_\iter}^2- \norm{\bx^{\iter+1} - \bx}_{\W_\iter}^2 + \eta_\iter^2 \norm{\buu^{\iter+1} - \buu^{\iter}}_{\W_\iter^{-1}}^2 + \frac{\eta_\iter}{L}\norm{\buu^{\iter+1} - \buu^{\iter}}^2.
    \end{equation}
\end{customlem}
\begin{proof}
We note that
\begin{equation*}
\bx^{\iter+1} = \prox^{\W_\iter}_{\eta_\iter \reg} (\bx_\iter- \eta_\iter \W_\iter^{-1}\buu_\iter) 
= \argmin_{\bx\in \bHH}\Big\{ \scalarp{\bx-\bx^\iter_\iter, \buu_\iter} + \reg(\bx) + \frac{1}{2\eta_\iter} \norm{\bx-\bx_\iter}^2_{\W_\iter}\Big\}.
\end{equation*}
Thus, 
%recalling Remark~\ref{rmk1}, and
applying Lemma~\ref{lem:basicPGstep} with $\W= \W_\iter$,
%in the Hilbert space $(\HH, \scalarp{\,\cdot\,, \,\cdot\,}_{\W_\iter})$, 
we have, for every $\bx\in \bHH$,
\begin{align}
\label{eq:20260124d}
    \nonumber 2\eta_\iter (F(\bx^{\iter+1})- F(\bx)) & \leq \norm{\bx^\iter\!-\bx}_{\W_\iter}^2\! - \norm{\bx^{\iter+1}-\bx}_{\W_\iter}^2 \!- \norm{\bx^{\iter+1}\!-\bx}_{\W_\iter}^2 \\ & \qquad +2\eta_\iter \big[f(\bx^{\iter+1})-f(\bx^\iter)- \pair{\bx^{\iter+1}\!-\bx^\iter}{\buu^\iter}\big].
\end{align}
Moreover, the following hold. 
\begin{enumerate}[label={\rm (\roman*)}]
\item\label{rmk2_i} Referring to inequality \eqref{eq:20260124d}, we note that, being $\buu^{\iter+1}\in \partial f(\bx^{\iter+1})$,
\begin{align*}
f(\bx^{\iter+1})-f(\bx^\iter)- \pair{\bx^{\iter+1}-\bx^\iter}{\buu^n} & \leq \pair{\bx^{\iter+1}-\bx^\iter}{\uu^{\iter+1}}-\pair{\bx^{\iter+1}-\bx^\iter}{\buu^{\iter}} \\
&= \pair{\bx^{\iter+1}-\bx^\iter}{\buu^{\iter+1}-\buu^\iter}.
\end{align*}
Thus, for every $\bx\in \bHH$
\begin{align*}
2\eta_\iter (F(\bx^{\iter+1})- F(\bx)) & \leq \norm{\bx^\iter-\bx}_{\W_\iter}^2 - \norm{\bx^{\iter+1}-\bx}_{\W_\iter}^2 - \norm{\bx^\iter-\bx^{\iter+1}}_{\W_\iter}^2 \\
& \qquad +2\eta_\iter 
\pair{\bx^\iter-\bx^{\iter+1}}{\buu^\iter - \buu^{\iter+1}}.
\end{align*}
On the other hand using Young-Fenchel inequality we have
\begin{align*}
2\eta_{\iter} \pair{\bx^\iter-\bx^{\iter+1}}{\buu^\iter - \buu^{\iter+1}}&-\norm{\bx^\iter-\bx^{\iter+1}}_{\W_\iter}^2 \\[1ex]
& = 2\eta_\iter \pair{\bx^\iter-\bx^{\iter+1}}{\W_\iter^{-1}(\buu^\iter - \buu^{\iter+1})}_{\W_\iter}-\norm{\bx^\iter-\bx^{\iter+1}}_{\W_\iter}^2\\[1ex]
& =2 \bigg(\pair{\bx^\iter-\bx^{\iter+1}}{\eta_\iter\W_{\iter}^{-1}(\buu^\iter - \buu^{\iter+1})}_{\W_\iter} 
-\frac{\norm{\bx^\iter-\bx^{\iter+1}}_{\W_\iter}^2}{2}\bigg)\\
&\leq 2 \frac{\norm{\eta_\iter\W_\iter^{-1}(\buu^\iter -\buu^{\iter+1})}_{\W_\iter}^2}{2} = \eta_\iter^2\norm{\buu^\iter- \buu^{\iter+1}}_{\W_\iter^{-1}}^2
\end{align*}
and \eqref{lemma1_bound} follows.
%Since $\uu_{\iter+1}\in \partial f(\x_\iter)$, then, for every $\x\in \HH$
%\begin{align}
%\label{eq:rmk2_i}
%\nonumber 2\eta_{\iter} (F(x^{\iter+1})- F(\x)) &\leq \norm{\x^\iter-\x}_{\W_\iter}^2 - \norm{\x^{\iter+1}-\x}_{\W_\iter}^2 +\eta_{\iter}^2 \norm{\W_\iter^{-1}(\uu^{\iter} - \uu^{\iter+1})}_{\W_\iter}^2\\[1ex]
%&= \norm{\x^\iter-\x}_{\W_\iter}^2 - \norm{\x^{\iter+1}-\x}_{\W_\iter}^2 +\eta_{\iter}^2 \norm{\uu^{\iter} - \uu^{\iter+1}}_{\W_\iter^{-1}}^2.
%\end{align}
\item\label{lem:basicPGstep_ii} 
Concerning the second part of the statement, suppose that $f:\bHH\to \R$ is Lipschitz smooth with constant $L$ (in the Hilbert space $\bHH$). 
The Lipschitz smoothness of $f$ implies that
\begin{equation*}
\frac{1}{2L}\norm{\nabla f(\bx^{\iter+1})-f(\bx^\iter)}^2\leq f(\bx^\iter)-f(\bx^{\iter+1})- \pair{\bx^\iter-\bx^{\iter+1}}{\nabla f(\bx^{\iter+1})}. 
\end{equation*}
Thus, from \eqref{eq:20260124d} (with $\buu^\iter= \nabla f(\bx^\iter)$) we have
\begin{align*}
2\eta_\iter (F(\bx^{\iter+1})- F(\bx)) &\leq \norm{\bx^\iter-\bx}_{\W_\iter}^2 - \norm{\bx^{\iter+1}-\bx}_{\W_\iter}^2 - \norm{\bx^{\iter+1}-\bx^{\iter}}_{\W_\iter}^2\\[1ex]
&\qquad +2\eta_\iter \big[f(\bx^{\iter+1})-f(\bx^{\iter})\big] - 2\eta_\iter\pair{\bx^{\iter+1}-\bx^{\iter}}{\nabla f(\bx^{\iter})}\\[1ex]
&\leq\norm{\bx^{\iter}-\bx}_{\W_\iter}^2 - \norm{\bx^{\iter+1}-\bx}_{\W_\iter}^2 - \norm{\bx^{\iter+1}-\bx^{\iter}}_{\W_\iter}^2\\
&\qquad +2\eta_\iter \Big[\pair{\bx^{\iter+1}-\bx^{\iter}}{\nabla f(\bx^{\iter+1})}- \frac{1}{2L} \norm{\nabla f(\bx^{\iter})-\nabla f(\bx^{\iter+1})}^2\Big] \\
&\qquad- 2\eta_\iter\pair{\bx^{\iter+1}-\bx^{\iter}}{\nabla f(\bx^{\iter})}\\[1ex]
& =\norm{\bx^{\iter}-\bx}_{\W_\iter}^2 - \norm{\bx^{\iter+1}-\bx}_{\W_\iter}^2\\
&\qquad + 2\pair{\bx^{\iter+1}-\bx^{\iter}}{\eta_\iter\W_\iter^{-1}(\nabla f(\bx^{\iter+1})-\nabla f(\bx^{\iter}))}_{\W_\iter} - \norm{\bx^{\iter+1}-\bx^{\iter}}_{\W_\iter}^2\\
&\qquad
- \frac{\eta_\iter}{L} \norm{\nabla f(\bx^{\iter})-\nabla f(\bx^{\iter+1})}^2\\[1ex]
&\leq \norm{\bx^{\iter}-\bx}_{\W_\iter}^2 - \norm{\bx^{\iter+1}-\bx}_{\W_\iter}^2 +  \norm{\eta_\iter\W_\iter^{-1}(\nabla f(\bx^{\iter+1})-\nabla f(\bx^{\iter}))}_{\W_\iter}^2\\[1ex]
&\qquad- \frac{\eta_\iter}{L} \norm{\nabla f(\bx^{\iter})-\nabla f(\bx^{\iter+1})}^2\\[1ex]
&\leq \norm{\bx^{\iter}-\bx}_{\W_\iter}^2 - \norm{\bx^{\iter+1}-\bx}_{\W_\iter}^2 +  \eta_\iter^2\norm{\nabla f(\bx^{\iter+1})-\nabla f(\bx^{\iter})}_{\W_\iter^{-1}}^2
\\
& \qquad -\frac{\eta_\iter}{L} \norm{\nabla f(\bx^{\iter})-\nabla f(\bx^{\iter+1})}^2,
\end{align*}
where in the penultimate equation we used Young-Fenchel inequality.
\end{enumerate}
%Then for every $\x\in \HH$,
%\begin{align}
%\label{eq:rmk2_ii}
%\nonumber 2\eta_\iter (F(\x^{\iter+1})- F(\x)) &\leq \norm{\x^{\iter}-\x}_{\W_\iter}^2 - \norm{\x^{\iter+1}-\x}_{\W_\iter}^2 + \eta_\iter^2 \norm{\W_\iter^{-1}(\nabla f(\x^{\iter+1})-\nabla f(\x^\iter))}_{\W_\iter}^2\\[1ex]
%\nonumber&\hspace{15ex} - \frac{\eta_\iter}{L\norm{\W_\iter^{-1}}} \norm{\W_\iter^{-1}(\nabla f(\x^\iter)-\nabla f(\x^{\iter+1}))}_{\W}^2\\
%\nonumber&\leq \norm{\x^\iter-\x}_{\W_\iter}^2 - \norm{\x^{\iter+1}-\x}_{\W_\iter}^2 + \eta_\iter^2 \norm{\nabla f(\x^{\iter+1})-\nabla f(\x^\iter)}_{\W_\iter^{-1}}^2\\
%\nonumber&\hspace{15ex} - \frac{\eta_\iter}{L\norm{\W_\iter^{-1}}} \norm{\nabla f(\x^\iter)-\nabla f(\x^{\iter+1})}_{\W^{-1}}^2\\
%\nonumber&\leq \norm{\x^\iter-\x}_{\W_\iter}^2 - \norm{\x^{\iter+1}-\x}_{\W_\iter}^2 + \eta_\iter^2 \norm{\nabla f(\x^{\iter+1})-\nabla f(\x^\iter)}_{\W_\iter^{-1}}^2\\
%&\hspace{15ex} - \frac{\eta_\iter}{L} \norm{\nabla f(\x^\iter)-\nabla f(\x^{\iter+1})}^2,
%\end{align}
%\end{enumerate}
%where in the last inequality we used that, for every $z\in \HH$, $\norm{z}^2_{\W^{-1}} = \scalarp{\W^{-1}z,z}  \geq (\min_{1\leq i\leq m} \w_i^{-1}) \norm{\x}^2$ and that $\norm{\W^{-1}} = \max_{1\leq i\leq m} \w_i^{-1} = (\min_{1\leq i\leq m} \w_i)^{-1}$.
The statement follows.
\end{proof}

\begin{customcor}{\ref{Proposition1}}
Let $\bx\in \bHH$ and $\iter \in \N$ and set
%Under the same assumptions of Lemma~\ref{lemma1}, set 
$\Delta^{\iter-1}_{\max} = \max_{1\leq \ite\leq \iter-1} \max_{1\leq i\leq m} \norm{\x^\ite_i - x_i}^2$. Then
\begin{equation}%\label{Proposition1_bound_app}
\tag{\ref{Proposition1_bound}}
\sum_{\ite=2}^\iter\eta_{\ite-1}(F(\bx^\ite) - F(\bx)) \le \frac{\varepsilon}{2}\lVert \bx^1 - \bx\rVert^2 + \frac{1}{2}\Delta_{\max}^{\iter-1} \sum_{i=1}^m \w_i^{\iter-1} + \frac{1}{2}\sum_{\ite=2}^\iter\eta_{\ite-1}^2 \lVert \buu^{\ite} - \buu^{\ite-1} \rVert_{\W_{\ite-1}^{-1}}^2. 
%\label{sum_lemma2}
\end{equation}
\end{customcor}
\begin{proof}
Let $n\in \N$, with $n\geq 2$. It follows from Lemma~\ref{lemma1} that
\begin{equation}
\label{eq:20260122a}
    2\sum_{\ite=1}^{\iter-1} \eta_\ite (F(\bx^{\ite+1})- F(\bx))
    \leq \sum_{\ite=1}^{\iter-1} \norm{\bx^\ite-\bx}_{\W_\ite}^2 - \norm{\bx^{\ite+1}-\bx}_{\W_\ite}^2 + \sum_{\ite=1}^{\iter-1} \eta_\ite^2\norm{\buu^{\ite+1}-\buu^\ite}_{\W_\ite^{-1}}^2.
\end{equation}
The first term on the right-hand side can be bounded as follows
\begin{align*}
    \sum_{\ite=1}^{\iter-1} \norm{\bx^\ite-\bx}_{\W_\ite}^2 - \norm{\bx^{\ite+1}-\bx}_{\W_\ite}^2 
    &= \norm{\bx^1-\bx}_{\W_1}^2 + \sum_{\ite=2}^{\iter-1} \norm{\bx^\ite-\bx}_{\W_\ite}^2 - \sum_{\ite=1}^{\iter-2} \norm{\bx^{\ite+1}-\bx}_{\W_\ite}^2\\
    &\qquad-\norm{\bx^{\iter}-\bx}_{\W_{\iter-1}}^2\\
    &\leq \norm{\bx^1-\bx}_{\W_1}^2 + \sum_{\ite=1}^{\iter-2} \norm{\bx^{\ite+1}-\bx}_{\W_{\ite+1}}^2 - \norm{\bx^{\ite+1}-\bx}_{\W_\ite}^2\\
    &= \norm{\bx^1-\bx}_{\W_1}^2 + \sum_{\ite=1}^{\iter-2} \norm{\bx^{\ite+1}-\bx}_{\W_{\ite+1}-\W_\ite}^2\\
    &= \varepsilon \norm{\bx^1-\bx}^2+ \sum_{i=1}^m v_i^1\norm{\x_i^1 -\x_i}^2 \\
    &\qquad + \sum_{\ite=1}^{\iter-2}\sum_{i=1}^m(\w_i^{\ite+1}-\w_i^\ite)\norm{\x_i^{\ite+1}-\x_i}^2\\
    &\leq \varepsilon \norm{\bx^1-\bx}^2+ \Delta_{\max}^{\iter-1}\sum_{i=1}^m v_i^1 + \Delta_{\max}^{\iter-1}\sum_{i=1}^m\sum_{\ite=1}^{\iter-2}(v_i^{\ite+1}-v_i^\ite)\\
    &= \varepsilon \norm{\bx^1-\bx}^2+ \Delta_{\max}^{\iter-1}\sum_{i=1}^m v_i^{\iter-1}
\end{align*}
Combining the latter inequality and \eqref{eq:20260122a} the statement follows.
\end{proof}
\noindent
In order to provide the convergence rate, \citet{duchi2011adaptive} makes use of the following result.
\begin{lemma}\label{lemma2}
Let $(a_\iter)_{\iter\in \N}\in \R_+^\N$ and $\varepsilon > 0$. Then,
\begin{equation*}
    \sum_{\ite=1}^{\iter} \frac{a_\ite}{\varepsilon +\sqrt{ \sum_{j=1}^{\ite} a_j}} \le 2\sqrt{\sum_{\ite=1}^{\iter} a_\ite}.
\end{equation*}
\end{lemma}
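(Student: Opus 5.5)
The plan is to first drop the $\varepsilon$ in the denominator and then telescope. Write $S_\ite := \sum_{j=1}^{\ite} a_j$ with $S_0:=0$. Since $\varepsilon>0$, each term satisfies
\begin{equation*}
\frac{a_\ite}{\varepsilon+\sqrt{S_\ite}} \le \frac{a_\ite}{\sqrt{S_\ite}}
\end{equation*}
whenever $S_\ite>0$. If $S_\ite=0$, then $a_\ite=0$ and the term vanishes, so such indices can be discarded. It therefore suffices to prove the cleaner inequality
\begin{equation*}
\sum_{\ite=1}^{\iter} \frac{a_\ite}{\sqrt{S_\ite}} \le 2\sqrt{S_\iter},
\end{equation*}
where terms with $S_\ite=0$ are interpreted as $0$.

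The key step is an elementary per-term telescoping bound. Since $a_\ite = S_\ite - S_{\ite-1}$, we have
\begin{equation*}
a_\ite = \big(\sqrt{S_\ite}-\sqrt{S_{\ite-1}}\big)\big(\sqrt{S_\ite}+\sqrt{S_{\ite-1}}\big) \le 2\sqrt{S_\ite}\big(\sqrt{S_\ite}-\sqrt{S_{\ite-1}}\big),
\end{equation*}
using $\sqrt{S_{\ite-1}}\le\sqrt{S_\ite}$, which holds because $a_\ite\ge 0$. Dividing by $\sqrt{S_\ite}>0$ gives
\begin{equation*}
\frac{a_\ite}{\sqrt{S_\ite}} \le 2\big(\sqrt{S_\ite}-\sqrt{S_{\ite-1}}\big).
\end{equation*}
This bound also holds trivially when $S_\ite=0$, since both sides are then zero.

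Summing over $\ite=1,\dots,\iter$ telescopes to $2(\sqrt{S_\iter}-\sqrt{S_0}) = 2\sqrt{S_\iter}$, which is the claim. An alternative is induction on $\iter$ using the same inequality, but the telescoping route is more direct. The only mild point needing care is the degenerate case of zero partial sums, which the convention above handles. No step is a real obstacle; the whole content lies in the factorization bound $a_\ite \le 2\sqrt{S_\ite}\,(\sqrt{S_\ite}-\sqrt{S_{\ite-1}})$.
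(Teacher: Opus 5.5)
Your proof is correct and is essentially the standard one. The paper does not prove the lemma itself but attributes it to \citet{duchi2011adaptive}, whose argument (stated without $\varepsilon$) is an induction on $n$ resting on the concavity bound $\sqrt{S_{k-1}}\le\sqrt{S_k}-a_k/(2\sqrt{S_k})$; this is exactly your per-term inequality, so telescoping it directly instead of inducting, and dropping $\varepsilon$ first, are only cosmetic differences.
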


\begin{customprop}{\ref{main_theorem}}
%\begin{proposition}\label{main_theorem}
%Under Assumption~\ref{ass:0}, 
%let $(\x^\iter)_{\iter\in \N}$ be the iterates generated Algorithm~\ref{adadiff}, 
%Let $\x\in \HH$. Then for every $\iter \in \N$
Let $(\bx^\iter)_{\iter\in \N}$ be generated by Algorithm~\ref{adadiff}. Let $\bx\in \bHH$, $\iter \in \N$ and set
$\Delta^{\iter-1}_{\max} = \max_{1\leq \ite\leq \iter-1} \max_{1\leq i\leq \m} \norm{\x^\ite_i - x_i}^2$.
Then
\begin{multline}
\tag{\ref{extra}}
 F(\bar{\bx}^{\iter}) - F(\bx)  \le \frac{\varepsilon}{2\eta(\iter-1)}\lVert \bx^1 - \bx\rVert^2
+\frac{1}{\iter-1}\left[\frac{1}{2\eta}\Delta^{\iter-1}_{\max} + \eta \right]\sum_{i=1}^m \sqrt{\sum_{\ite=1}^{\iter} \norm{\uu_i^\ite - \uu_{i}^{\ite-1}}^2} \\
+ \frac{\eta}{2(\iter-1)} \sum_{i=1}^m\sum_{\ite=2}^{\iter} \left(\frac{1}{\w_i^{\ite-1}} - \frac{1}{\w_i^\ite}\right)\norm{\uu_i^\ite - \uu_i^{\ite-1}}^2.
\end{multline}
\end{customprop}
\begin{proof}
We assumed that $\eta_\iter\equiv \eta$.
It follows from Corollary~\ref{Proposition1} and Lemma~\ref{lemma2} that
\begin{align*}
\sum_{\ite=2}^\iter(F(\bx^\ite) - F(\bx)) 
    &\leq \frac{\varepsilon}{2\eta}\lVert \bx^1 - \bx\rVert^2 + \frac{1}{2\eta}\Delta_{\max}^{\iter-1} \sum_{i=1}^m v_i^{\iter-1} + \frac{\eta}{2}\sum_{\ite=2}^\iter \lVert \buu^{\ite} - \buu^{\ite-1} \rVert_{\W_{\ite-1}^{-1}}^2 \\
    &= \frac{\varepsilon}{2\eta}\lVert \bx^1 - \bx\rVert^2 + \frac{1}{2\eta}\Delta_{\max}^{\iter-1} \sum_{i=1}^m v_i^{\iter-1} + \frac{\eta}{2}\sum_{\ite=2}^\iter \lVert \buu^{\ite} - \buu^{\ite-1} \rVert_{\W_{\ite}^{-1}}^2\\
    &\hspace{15ex}+ \frac{\eta}{2}\sum_{\ite=2}^\iter \lVert \buu^{\ite} - \buu^{\ite-1} \rVert_{\W_{\ite-1}^{-1}- \W_{\ite}^{-1}}^2\\
    &= \frac{\varepsilon}{2\eta}\lVert \bx^1 - \bx\rVert^2 + \frac{1}{2\eta}\Delta_{\max}^{\iter-1} \sum_{i=1}^m v_i^{\iter-1} + \frac{\eta}{2}\sum_{\ite=2}^\iter \sum_{i=1}^m \frac{1}{\varepsilon+ v_i^\ite}\lVert \uu_i^{\ite} - \uu_i^{\ite-1} \rVert^2\\
    &\hspace{15ex}+ \frac{\eta}{2}\sum_{\ite=2}^\iter \sum_{i=1}^m \bigg(\frac{1}{\varepsilon+ v_i^{\ite-1}}-\frac{1}{\varepsilon + v_i^\ite}\bigg)\norm{\uu_i^{\ite} - \uu_i^{\ite-1}}^2\\
    &\leq \frac{\varepsilon}{2\eta}\lVert \bx^1 - \bx\rVert^2 + \frac{1}{2\eta}\Delta_{\max}^{\iter-1} \sum_{i=1}^m \sqrt{\sum_{k=1}^{\iter-1}\norm{\uu_i^\ite-\uu_i^{\ite-1}}^2} + \eta \sum_{i=1}^m \sqrt{\sum_{\ite=1}^\iter \lVert \uu_i^{\ite} - \uu_i^{\ite-1}\rVert^2}\\
    &\hspace{15ex}+ \frac{\eta}{2}\sum_{\ite=2}^\iter \sum_{i=1}^m \bigg(\frac{1}{\varepsilon+ v_i^{\ite-1}}-\frac{1}{\varepsilon + v_i^\ite}\bigg)\norm{\uu_i^{\ite} - \uu_i^{\ite-1}}^2
    \end{align*}
    where in the last inequality we used Lemma~\ref{lemma2} with $a_k = \norm{\uu_i^\ite- \uu_i^{\ite-1}}^2$. Defining $\bar{\bx}^\iter = (\iter-1)^{-1}\sum_{\ite=2}^\iter \bx^\ite$, Jensen inequality yields
    \begin{equation*}
    F(\bar{\bx}^\iter)- F(\bx) \leq \frac{1}{\iter-1} \sum_{\ite=2}^\iter F(\bx^\ite)-F(\bx)
    \end{equation*}
    and the statement follows.
\end{proof}

\subsection{Proofs of subsection~\ref{sec:analysis_b}}

\begin{customprop}{\ref{proposition5}}
 We have that $\sum_{\iter=1}^\infty \lVert \buu^{\iter+1} - \buu^\iter \rVert^2<+\infty$.
\end{customprop}
\begin{proof}
Let 
\begin{equation}
I = \bigg\{ i\in [m]\,\Big\vert\, \exists\,\iter\in \N\ \text{s.t.}\ \frac{\eta}{\varepsilon+ v^\iter_i} \leq \frac{1}{2L}\bigg\}.
\end{equation}
Then, for every $i\in [m]\setminus I$ and every $n\in \N$, $\eta/(\varepsilon+ v^\iter_i)>1/(2L)$ and hence
\begin{equation*}
\sum_{\ite=1}^{\iter} \norm{\uu_i^{\ite+1} - \uu^{\ite}_i}^2 \leq 
\sum_{\ite=2}^{\iter+1} \norm{\uu_i^{\ite} - \uu^{\ite-1}_i}^2 \leq (v_i^{\iter+1})^2 < (2\eta L- \varepsilon)^2.
\end{equation*}
Therefore,
\begin{equation}
\label{eq:20260124b}
(\forall\,\iter\in \N)\quad
\sum_{\ite=1}^\iter \sum_{i\in [m]\setminus I} \norm{\uu_i^{\ite+1} - \uu^{\ite}_i}^2 =
\sum_{i\in [m]\setminus I}\sum_{\ite=1}^\iter \norm{\uu_i^{\ite+1} - \uu^{\ite}_i}^2 < m(2\eta L- \varepsilon)^2.
\end{equation}
Thus, it follows from \eqref{eq:20260124b} that if $I= \varnothing$,  
\begin{equation*}
(\forall\,\iter\in \N)\quad
\sum_{\ite=1}^\iter \norm{\buu^{\ite} - \buu^{\ite-1}}^2 = \sum_{\ite=1}^\iter \sum_{i\in [m]}
\norm{\uu_i^{\ite} - \uu^{\ite-1}_i}^2 \leq m(2\eta L- \varepsilon)^2
\end{equation*}
and the statement follows.
On the other hand, suppose that $I\neq\varnothing$ and set
\begin{equation*}
(\forall\,i\in I)\quad \iter_i = \min\bigg\{ \iter\in \N\,\Big\vert\, \frac{\eta}{\varepsilon+ v^\iter_i} \leq 
\frac{1}{2L}\bigg\}\quad\text{and}\quad \bar{\iter} = \max_{i\in I} \iter_i.
\end{equation*}
Let $\ite \in \N$ with $\ite\geq \bar{\iter}$.
By inequality \eqref{lemma1_bound2} in Lemma~\ref{lemma1} with $\iter=\ite$ and $\bx= \bx^{\ite}$ we have
\begin{align*}
    2(F(\bx^{\ite+1}) - F(\bx^\ite))  
    &\leq \eta \norm{\buu^{\ite+1} - \buu^{\ite}}_{\W_\ite^{-1}}^2 - \frac{1}{L}\norm{\buu^{\ite+1} - \buu^{\ite}}^2\\
    &=\sum_{i=1}^m \bigg(\frac{\eta}{\varepsilon + v^\ite_i} - \frac1 L \bigg)\norm{\uu^{\ite+1}_i - \uu^{\ite}_i}^2 \\
        &=\sum_{i\in I} \bigg(\frac{\eta}{\varepsilon + v^\ite_i} - \frac1 L \bigg)\norm{\uu^{\ite+1}_i - \uu^{\iter}_i}^2 + \sum_{i\in [m]\setminus I} \bigg(\frac{\eta}{\varepsilon + v^\ite_i} - \frac1 L \bigg)\norm{\uu^{\ite+1}_i - \uu^{\ite}_i}^2.
\end{align*}
Thus, for every $\iter\in \N$, with $\iter\geq \bar{\iter}$
\begin{align}
\label{eq:20260124c}
\nonumber\frac{1}{2L}\sum_{\ite= \bar{\iter}}^{\iter}\sum_{i\in I} \norm{\uu^{\ite+1}_i - \uu^{\ite}_i}^2 &\leq
\sum_{\ite= \bar{\iter}}^{\iter}\sum_{i\in I} \bigg(\frac1 L - \frac{\eta}{\varepsilon + v^\ite_i} \bigg)\norm{\uu^{\ite+1}_i - \uu^{\ite}_i}^2\\
\nonumber &\leq \sum_{\ite = \bar{\iter}}^{\iter} 2(F(\bx^{\ite}) - F(\bx^{\ite+1}))  
+ \sum_{\ite=\bar{\iter}}^{\iter}\sum_{i\in [m]\setminus I} \bigg(\frac{\eta}{\varepsilon + v^\ite_i} - \frac1 L \bigg)\norm{\uu^{\ite+1}_i - \uu^{\ite}_i}^2\\
\nonumber&\leq 2(F(\bx^{\bar{\iter}}) - F_*)  
+ \sum_{\ite=\bar{\iter}}^{\iter}\sum_{i\in [m]\setminus I} \frac{\eta}{\varepsilon + v^\ite_i} \norm{\uu^{\ite+1}_i - \uu^{\ite}_i}^2\\
\nonumber&\leq 2(F(\bx^{\bar{\iter}}) - F_*)  
+ \max_{1\leq i\leq m} \frac{\eta}{\varepsilon + v^{\bar{\iter}}_i} \sum_{\ite=\bar{\iter}}^{\iter} \sum_{i\in [m]\setminus I} \norm{\uu^{\ite+1}_i - \uu^{\ite}_i}^2\\
&\leq 2(F(\bx^{\bar{\iter}}) - F_*)  
+ \max_{1\leq i\leq m} \frac{\eta}{\varepsilon + v^{\bar{\iter}}_i} m(2\eta L- \varepsilon)^2,
\end{align}
where in the last inequality we used \eqref{eq:20260124b}. Thus, combining \eqref{eq:20260124c}
and \eqref{eq:20260124b} we get, for every integer $\iter\geq \bar{\iter}$,
\begin{align*}
\sum_{\ite=\bar{\iter}}^{\iter} \norm{\buu^{\ite+1}-\buu^{\ite}}^2 &= 
\sum_{\ite=\bar{\iter}}^{\iter} \sum_{i\in I} \norm{\uu_i^{\ite+1}-\uu_i^{\ite}}^2 
+\sum_{\ite=\bar{\iter}}^{\iter} \sum_{i\in [m]\setminus I} \norm{\uu_i^{\ite+1}-\uu_i^{\ite}}^2\\
&\leq 4L (F(\bx^{\bar{\iter}}) - F_*)  
+ 2L \max_{1\leq i\leq m} \frac{\eta}{\varepsilon + v^{\bar{\iter}}_i} m(2\eta L- \varepsilon)^2
+ m(2\eta L- \varepsilon)^2
\end{align*}
and the statement follows.
\end{proof}

%\begin{customcor}{\ref{cor:20260126a}}
%Let $(\x^\iter)_{\iter\in \N}$ be generated by Algorithm~\ref{adadiff}.
% with $\varepsilon>0$. 
% Then,
%for every $\iter\in \N$, $\W_\iter\succeq \varepsilon \Id$ and $\sup_{\iter\in \N}\norm{\W_\iter}<+\infty$.
%\end{customcor}
\begin{corollary}
\label{cor:20260126a}
Let $(\bx^\iter)_{\iter\in \N}$ be generated by Algorithm~\ref{adadiff}.
% with $\varepsilon>0$. 
 Then,
for all $\iter\in \N$, $\W_\iter\succeq \varepsilon \Id$ and $\sup_{\iter\in \N}\norm{\W_\iter}<+\infty$.
\end{corollary}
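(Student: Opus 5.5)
The claim has two parts, and both follow from the definition of $\w_i^\iter$ together with Proposition~\ref{proposition5}. The setting is the same as in the preceding subsection: Assumptions~\ref{ass:0} and \ref{ass:2} are in force.

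For the lower bound, we use the explicit form of $\w_i^\iter$. For every $\iter\in\N$ and $i\in[\m]$ we have $\w_i^\iter=\varepsilon+v_i^\iter\geq\varepsilon$, because $v_i^\iter$ is the square root of a sum of nonnegative terms. Therefore, for every $\bx\in\bHH$,
\begin{equation*}
\scalarp{\W_\iter\bx,\bx}=\sum_{i=1}^\m \w_i^\iter\norm{\x_i}^2\geq\varepsilon\sum_{i=1}^\m\norm{\x_i}^2=\varepsilon\norm{\bx}^2,
\end{equation*}
which is exactly $\W_\iter\succeq\varepsilon\Id$.

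For the upper bound, we first recall from Remark~\ref{rmk1} that $\norm{\W_\iter}=\max_{1\leq i\leq\m}\w_i^\iter$. Each quantity $v_i^\iter=\bigl(\sum_{\ite=1}^\iter\norm{\uu_i^\ite-\uu_i^{\ite-1}}^2\bigr)^{1/2}$ is nondecreasing in $\iter$. It is also bounded above by
\begin{equation*}
S:=\Bigl(\norm{\buu^1}^2+\sum_{\ite=1}^\infty\norm{\buu^{\ite+1}-\buu^\ite}^2\Bigr)^{1/2}.
\end{equation*}
This holds because $\norm{\uu_i^\ite-\uu_i^{\ite-1}}\leq\norm{\buu^\ite-\buu^{\ite-1}}$, and the $\ite=1$ term reduces to $\norm{\uu_i^1}^2\leq\norm{\buu^1}^2$ since $\buu^0=0$. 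By Proposition~\ref{proposition5}, the series in $S$ is finite, so $S<+\infty$. Consequently $\sup_\iter\norm{\W_\iter}\leq\varepsilon+S<+\infty$.

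There is no real obstacle here. The only points to handle carefully are the index shift between $\sum_\ite\norm{\buu^\ite-\buu^{\ite-1}}^2$ and the series in Proposition~\ref{proposition5}, which differ only by the first term $\norm{\buu^1}^2$, and the standing use of the smooth-case hypotheses that Proposition~\ref{proposition5} requires.
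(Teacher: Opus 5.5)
Your proposal is correct and follows the paper's proof. The lower bound comes from $w_i^n=\varepsilon+v_i^n\geq\varepsilon$, and the upper bound from $\|g_i^k-g_i^{k-1}\|\leq\|\bm{g}^k-\bm{g}^{k-1}\|$ together with the summability in Proposition~\ref{proposition5}; your explicit handling of the extra term $\|\bm{g}^1\|^2$ (coming from $\bm{g}^0=0$) is a small piece of care that the paper absorbs silently by writing the series from $n=1$.
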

\begin{proof}
Recalling \eqref{AdaDiff} it is clear that, for every $\iter\in \N$ and $i\in\ [m]$, $\w_i^\iter\geq \varepsilon$, so that
$\W_\iter \succeq \varepsilon \Id$. Moreover,
\begin{equation*}
(\forall\,\iter\in \N)(\forall\,i\in [m])\quad w^\iter_i = \varepsilon + \sqrt{\sum_{\ite=1}^\iter \norm{\uu_i^\iter- \uu_i^{\iter-1}}^2} \leq \varepsilon + \sqrt{\sum_{\ite=1}^\iter \norm{\buu^\iter- \buu^{\iter-1}}^2},
\end{equation*}
so that $\sup_{\iter\in\ \N}\norm{\W^\iter}=\sup_{\iter\in\ \N}(\max_{1\leq i\leq m} \w_i^\iter) \leq \varepsilon + (\sum_{\iter=1}^{+\infty} \norm{\buu^\iter- \buu^{\iter-1}}^2)^{1/2}<+\infty$.
\end{proof}

\begin{definition}[{Definition~3.1(ii) in \cite{combettes2013variable}}]
\label{def:fejer}
Let $(V_\iter)_{\iter\in \N}$ be a sequence of positive self-adjoint bounded linear operators on the Hilbert space $\HH$,
such that, for every $\iter\in \N$, $V_n \succeq \alpha \Id$, for some $\alpha>0$.
Let $\norm{\cdot}_{V_n}$ be the Hilbert norms defined by the $V_\iter$'s
%${(\abs{\cdot}_\iter)}_{\iter \in \N}$ be a sequence of Hilbert norms on $\HH$
%such that, for some  $\nu>0$, $\nu \lVert \cdot\rVert^2 \leq \lvert \cdot\rvert ^2_\iter$
%for every $\iter \in \N$.
and let $S \subset \HH$ be a nonempty set.
A sequence ${(z^\iter)}_{\iter \in \N}$ in $\HH$ is a 
\emph{quasi-Fej\'er  sequence with respect to $S$
relative to ${(V_\iter)}_{\iter \in \N}$}  if
there exist summable sequences ${(\alpha_\iter)}_{\iter \in \N}$
and ${(\chi_\iter)}_{\iter \in \N}$ in $\R_{++}$ such that
\begin{equation}
\label{eq:fejerprop}
(\forall\, z \in S)(\forall\, \iter \in \N)\qquad \norm{z^{\iter+1} - z}_{V_{\iter+1}}^2 
\leq (1+\chi_\iter) \norm{z^\iter - z}_{V_\iter}^2 + \alpha_\iter.
\end{equation}
\end{definition}

\begin{fact}[Lemma~2.3, Proposition~3.2, and Theorem~3.3 in \cite{combettes2013variable}]
\label{fact:fejer}
Under the assumptions of Definition~\ref{def:fejer} suppose additionally that $\sup_{\iter\in \N} \norm{V_n}<+\infty$. Consider the two statements
\setlist[enumerate]{itemsep=0mm}
\begin{enumerate}[label={\rm (\alph*)}]
\vspace{-2ex}
\item\label{fact:fejer_a} either $(\forall\,\iter\in \N)\ V_{\iter+1} \succeq V_{\iter}$ or $(\forall\,\iter\in \N)\ V_{\iter+1} \preceq V_{\iter}$.
\item\label{fact:fejer_b} There exists a positive self-adjoint bounded linear operator $V\colon \HH\to \HH$ such that, for every $z\in \HH$, $V_\iter z \to Vz$.
\end{enumerate}
\vspace{-2ex}
Then \ref{fact:fejer_a} $\ \Rightarrow\ $ \ref{fact:fejer_b}. Moreover, suppose that $S\subset\HH$
is a nonempty set and let $(z^\iter)_{\iter\in \N}$ be a quasi-Fej\'er sequence in $\HH$ with respect to $S$
relative to $(V_\iter)_{\iter\in \N}$. Then if \ref{fact:fejer_b} holds, we have
\vspace{-2ex}
\begin{enumerate}[label={\rm (\roman*)}]
\item $(z^\iter)_{\iter\in \N}$ is bounded and, for every $z\in S$, $(\norm{z^\iter-z}_{V_\iter})_{\iter\in \N}$ is convergent
\item $(z^\iter)_{\iter\in \N}$ is weakly convergent to a point in $S$ iff every weak sequential cluster point of $(z^\iter)_{\iter\in \N}$ belongs to $S$.
\end{enumerate}
\end{fact}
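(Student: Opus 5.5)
The statement collects results from \cite{combettes2013variable}. I would reprove it in three steps: the operator-convergence implication \ref{fact:fejer_a}$\Rightarrow$\ref{fact:fejer_b}, then the scalar quasi-Fej\'er estimate giving (i), then an Opial-type argument giving (ii).

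\textbf{Step 1: \ref{fact:fejer_a}$\Rightarrow$\ref{fact:fejer_b}.} Assume $V_{\iter+1}\succeq V_\iter$; the decreasing case is symmetric. Set $\beta=\sup_\iter\norm{V_\iter}$. For each $z$ the scalars $\scalarp{V_\iter z,z}$ are nondecreasing and bounded by $\beta\norm{z}^2$, hence convergent. For $m\geq \iter$ the operator $V_m-V_\iter$ is positive with norm at most $\beta$. The Cauchy--Schwarz inequality for the positive form $\scalarp{(V_m-V_\iter)\cdot,\cdot}$ gives
\[
\norm{(V_m-V_\iter)z}^2\leq \norm{V_m-V_\iter}\,\scalarp{(V_m-V_\iter)z,z}\leq \beta\big(\scalarp{V_m z,z}-\scalarp{V_\iter z,z}\big).
\]
Hence $(V_\iter z)_{\iter\in\N}$ is Cauchy, and $Vz:=\lim_\iter V_\iter z$ exists. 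The limit $V$ is linear and self-adjoint, satisfies $\norm{V}\leq\beta$, and inherits $V\succeq\alpha\Id$ because $\scalarp{V_\iter z,z}\geq\alpha\norm{z}^2$ for every $\iter$.

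\textbf{Step 2: item (i).} Fix $z\in S$ and set $a_\iter=\norm{z^\iter-z}^2_{V_\iter}$, so that $a_{\iter+1}\leq(1+\chi_\iter)a_\iter+\alpha_\iter$. Put $P=\prod_\iter(1+\chi_\iter)$, which is finite since $(\chi_\iter)$ is summable. Induction gives $a_\iter\leq P\big(a_1+\sum_\ite\alpha_\ite\big)$, so $(a_\iter)$ is bounded. Boundedness of $(z^\iter)$ then follows from $\alpha\norm{z^\iter-z}^2\leq a_\iter$.

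For convergence, set $b_\iter=a_\iter\prod_{\ite<\iter}(1+\chi_\ite)^{-1}$ and $c_\iter=\alpha_\iter\prod_{\ite\leq\iter}(1+\chi_\ite)^{-1}$. Then $b_{\iter+1}\leq b_\iter+c_\iter$ with $(c_\iter)$ summable. It follows that $b_\iter-\sum_{\ite<\iter}c_\ite$ is nonincreasing and bounded below, hence convergent. Since the products converge to a positive limit, $(a_\iter)$ converges.

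\textbf{Step 3: item (ii).} The ``only if'' direction is immediate, because weak limits of subsequences of a weakly convergent sequence coincide with its limit. For the converse, the sequence is bounded by (i), so it has weak cluster points, and it suffices to show there is only one. Let $y_1,y_2\in S$ be weak cluster points. Expanding the squares and using self-adjointness of $V_\iter$ gives
\[
\norm{z^\iter-y_1}^2_{V_\iter}-\norm{z^\iter-y_2}^2_{V_\iter}=2\scalarp{z^\iter,V_\iter(y_2-y_1)}+\norm{y_1}^2_{V_\iter}-\norm{y_2}^2_{V_\iter}.
\]
By (i) the left-hand side converges. By \ref{fact:fejer_b} the last two terms converge. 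Since $V_\iter(y_2-y_1)\to V(y_2-y_1)$ strongly and $(z^\iter)$ is bounded, we have $\scalarp{z^\iter,V_\iter(y_2-y_1)}-\scalarp{z^\iter,V(y_2-y_1)}\to0$. Hence $\scalarp{z^\iter,V(y_2-y_1)}$ converges to some limit $\ell$.

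Evaluating this limit along subsequences converging weakly to $y_1$ and to $y_2$ gives $\scalarp{y_1,V(y_2-y_1)}=\ell=\scalarp{y_2,V(y_2-y_1)}$. Subtracting, $\scalarp{V(y_2-y_1),y_2-y_1}=0$, and $V\succeq\alpha\Id$ then forces $y_1=y_2$. A bounded sequence with a unique weak sequential cluster point converges weakly to it, which completes (ii).

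The main obstacle is Step 1, namely upgrading the monotone scalar convergence to strong convergence of $V_\iter z$. The Cauchy--Schwarz bound for positive operators shown above is the key tool there. The same strong convergence is exactly what Step 3 needs to pass to the limit in the cross term.
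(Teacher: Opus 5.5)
Your proof is correct. The paper gives no argument of its own for this Fact; it only points to \cite{combettes2013variable}, so the relevant comparison is with the cited source, and your three steps reconstruct the standard arguments behind it.

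\begin{itemize}
\item Step~2 is the usual Robbins--Siegmund/Polyak-type lemma, proved by normalizing with the partial products of $(1+\chi_n)$.
\item Step~3 is the standard Opial-type argument. You use the identity for $\|z^n-y_1\|_{V_n}^2-\|z^n-y_2\|_{V_n}^2$, then strong pointwise convergence $V_n(y_2-y_1)\to V(y_2-y_1)$ to remove $V_n$ from the cross term, then evaluate the limit along the two subsequences.
\item Step~1 is where you genuinely do more than the citation. Instead of invoking a monotone convergence theorem for bounded self-adjoint operators, you prove the needed case directly from $\|Tz\|^2\le\|T\|\langle Tz,z\rangle$ for $T\succeq 0$. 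This makes the argument self-contained.
\end{itemize}

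The cited results, in exchange, are stated in greater generality than the Fact requires. Your write-up also shows that item (i) uses neither (b) nor $\sup_n\|V_n\|<+\infty$.

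One small point to tidy. In Step~3 you conclude $y_1=y_2$ from $V\succeq\alpha\Id$, but you derived that bound in Step~1 under hypothesis (a). Item (ii) is stated under (b), and with the paper's convention ``positive'' there only means $V\succeq 0$. Semidefiniteness alone would not force $y_1=y_2$, so you should re-derive definiteness from (b) by the same limit: $\langle Vz,z\rangle=\lim_n\langle V_nz,z\rangle\ge\alpha\|z\|^2$.
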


\begin{customprop}{\ref{prop:Fejer}}
Under Assumptions~\ref{ass:0} and \ref{ass:2},
the sequence $(\bx^\iter)_{\iter\in \N}$ generated by Algorithm~\ref{adadiff} is a quasi-Fej\'er sequence with respect to $\argmin F$ relative to $(\W_\iter)_{\iter\in \N}$. More explicitly the following hold
\begin{equation*}
(\forall\,\iter\in \N)(\forall\,\bx\in \argmin F)\quad \norm{\bx^{\iter+1}-\bx}^2_{\W_{\iter+1}} \leq (1+\chi_\iter) \norm{\bx^{\iter}-\bx}^2_{\W_{\iter}} + \alpha_\iter
\end{equation*}
where
\begin{equation*}
\chi_\iter :=\max_{1\leq i\leq m}\frac{\varepsilon + v_i^{\iter}}{\varepsilon + v_i^{\iter-1}}- 1
\quad\text{and}\quad \alpha_\iter := \eta_\iter^2 \norm{\buu^{\iter+1} - \buu^{\iter}}_{\W_\iter^{-1}}^2
\end{equation*}
and $(\chi_\iter)_{\iter\in \N}$ and $(\alpha_\iter)_{\iter\in \N}$ are summable.
\end{customprop}
\begin{proof}
Let $\bx\in \argmin F$. Then it follows from inequality \eqref{lemma1_bound} in Lemma~\ref{lemma1} that
\begin{equation*}
(\forall\,\iter\in \N)\quad
\norm{\bx^{\iter+1} - \bx}_{\W_\iter}^2\leq \norm{\bx^\iter - \bx}_{\W_\iter}^2+ \eta_\iter^2 \norm{\buu^{\iter+1} - \buu^{\iter}}_{\W_\iter^{-1}}^2.
\end{equation*}
On the other hand
\begin{align*}
\norm{\bx^\iter - \bx}_{\W_\iter}^2 &= \sum_{i=1}^m (\varepsilon + v_i^\iter) \norm{\x_i^\iter- \x_i}^2
\leq \max_{1\leq i\leq m}\frac{\varepsilon + v_i^{\iter}}{\varepsilon + v_i^{\iter-1}}\sum_{i=1}^m (\varepsilon + v_i^\iter) \norm{\x_i^{\iter-1}- \x_i}^2\\[1ex]
& = \max_{1\leq i\leq m}\frac{\varepsilon + v_i^{\iter}}{\varepsilon + v_i^{\iter-1}}\norm{\bx^\iter - \bx}_{\W_{\iter-1}}^2 = (1+\chi_\iter)\norm{\bx^\iter - \bx}_{\W_{\iter-1}}^2,
\end{align*}
where, for every $\iter\in \N$, 
\begin{equation*}
0\leq \chi_\iter := \max_{1\leq i\leq m}\frac{\varepsilon + v_i^{\iter}}{\varepsilon + v_i^{\iter-1}}- 1
= \max_{1\leq i\leq m} \frac{v_i^{\iter} - v_i^{\iter-1}}{\varepsilon + v_i^{\iter-1}} \leq \max_{1\leq i\leq m} \frac{v_i^{\iter} - v_i^{\iter-1}}{\varepsilon} \leq \frac{1}{\varepsilon}\sum_{i=1}^m (v_i^{\iter} - v_i^{\iter-1}),
\end{equation*}
so that, 
\begin{equation*}
\sum_{\ite=1}^{\iter} \chi_\ite \leq \frac{1}{\varepsilon}\sum_{\ite=1}^{\iter} \sum_{i=1}^m(v_i^{\ite} - v_i^{\ite-1}) = \frac{1}{\varepsilon}\sum_{i=1}^m \sum_{\ite=1}^{\iter} (v_i^{\ite} - v_i^{\ite-1})\leq \frac{1}{\varepsilon} \sum_{i=1}^m v_i^\iter = \frac{1}{\varepsilon}\sum_{i=1}^{m} \sqrt{\sum_{\ite=1}^{\iter} \norm{\uu_i^{\ite}-\uu_i^{\ite-1}}^2}
\end{equation*}
and hence
$\sum_{\iter=1}^{+\infty} \chi_\iter \leq \varepsilon^{-1}m (\sum_{\iter=1}^{+\infty} \norm{\buu^{\iter}-\buu^{\iter-1}}^2)^{1/2}<+\infty$, by Proposition~\ref{proposition5}. Moreover,
\begin{equation*}
\alpha_\iter:=\eta_\iter^2 \norm{\buu^{\iter+1} - \buu^{\iter}}_{\W_\iter^{-1}}^2 = \eta^2 \norm{\buu^{\iter+1} - \buu^{\iter}}_{\W_\iter^{-1}}^2 \leq \frac{\eta^2}{\varepsilon} \norm{\buu^{\iter+1} - \buu^{\iter}}^2
\end{equation*}
and the statement follows again from Proposition~\ref{proposition5}.
\end{proof}

\begin{customprop}{\ref{prop:clusterpoints}}
Every weak sequential cluster point of $(\bx^{\iter})_{\iter\in \N}$ belongs to $\argmin F$.
\end{customprop}
\begin{proof}
First of all we show that there exists $(\bm{u}^{\iter})_{\iter\in \N}$ such that, for every $n\in \N$, $\bm{u}^\iter\in \partial \reg(\bx^\iter)$ and
 $\bm{u}^\iter\to 0$.
Let $\iter\in \N$.
As we saw in the proof of Lemma~\ref{lem:basicPGstep}, we have
\begin{equation*}
\frac{1}{\eta}\W_\iter(\bx^{\iter}-\bx^{\iter+1})- \buu^\iter\in \partial \reg(\bx^{\iter+1})
\end{equation*}
Therefore,
\begin{equation*}
\bm{u}^{\iter+1}:= \frac{1}{\eta}\W_\iter(\bx^{\iter}-\bx^{\iter+1}) + \buu^{\iter+1} - \buu^\iter \in \partial f(\bx^{\iter+1})+\partial \varphi(\bx^{\iter+1}) = \partial F(\bx^{\iter+1})
\end{equation*}
Now we observe that, setting $\beta= \sup_{\iter\in \N}\norm{\W_\iter}$ (recall Corollary~\ref{cor:20260126a}), we have
\begin{equation}
\label{eq:20260126b}
\norm{\bm{u}^{\iter+1}} \leq \frac{\beta}{\eta}\norm{\bx^{\iter+1}-\bx^\iter} + \norm{\buu^{\iter+1} - \buu^\iter}.
\end{equation}
Moreover, inequality \eqref{lemma1_bound} in Lemma~\ref{lemma1} with $\bx= \bx^\iter$ yields
\begin{equation*}
 \norm{\bx^{\iter+1} - \bx^\iter}_{\W_\iter}^2\leq 
2\eta(F(\bx^\iter)- F(\bx^{\iter+1}))  
+ \eta^2 \norm{\buu^{\iter+1} - \buu^{\iter}}_{\W_\iter^{-1}}^2
\end{equation*}
and since $\W_\iter \succeq \varepsilon \Id$ (by Corollary~\ref{cor:20260126a}) we have
\begin{equation*}
 \varepsilon \norm{\bx^{\iter+1} - \bx^\iter}^2\leq 
2\eta(F(\bx^\iter)- F(\bx^{\iter+1}))  
+ \frac{\eta^2}{\varepsilon} \norm{\buu^{\iter+1} - \buu^{\iter}}^2.
\end{equation*}
 In the end it follows from the above inequality and \eqref{eq:20260126b} that
 \begin{equation*}
 \norm{\bm{u}^{\iter+1}} \leq \frac{2\beta}{\varepsilon} (F(\bx^\iter)- F(\bx^{\iter+1}))  +
\bigg( \frac{\eta^2}{\varepsilon^2} + 1\bigg)
\norm{\buu^{\iter+1} - \buu^{\iter}}^2. 
 \end{equation*}
Thus, by Proposition~\ref{proposition5},
 \begin{equation*}
 \sum_{\iter=1}^{+\infty} \norm{\bm{u}^{\iter+1}} \leq \frac{2\beta}{\varepsilon}(F(\bx^1)- F_*)
 + \bigg( \frac{\eta^2}{\varepsilon^2} + 1\bigg)\sum_{\iter=1}^{+\infty} \norm{\buu^{\iter+1} - \buu^{\iter}}^2<+\infty
 \end{equation*}
 and hence $\bm{u}^{\iter}\to 0$. Now, let $(\bx^{\iter_k})_{k\in \N}$ be a subsequence of
 $(\bx^\iter)_{\iter\in \N}$ such that $\bx^{\iter_k} \rightharpoonup \bar{\bx}$. Then we have
 \begin{equation*}
 (\bx^{\iter_k}, \bm{u}^{\iter_k}) \in \mathrm{gra}(\partial F)\quad\text{and}\quad
 \bx^{\iter_k} \rightharpoonup \bar{\bx},  \bm{u}^{\iter_k} \to 0.
 \end{equation*}
 Since $F\in \Gamma_0(\bHH)$, the graph of $\partial F$ is weakly-strongly closed, meaning closed in the product of the topological spaces $\bHH_w\times \bHH$, where $\bHH_w$ is endowed with the weak topology and $\bHH$ is endowed with the strong topology.  Therefore $(\bar{\bx},0)\in \mathrm{gra}(\partial F)$ and hence $0\in \partial F(\bar{\bx})$. The statement follows.
\end{proof}
\section{Details on the Experiments and Additional Plots}\label{B}
\textbf{Details on the experiments.}
In the robustness experiments (first row of Figures \ref{fig:robustness_nonsmooth} and \ref{fig:robustness_smooth}), the horizontal axis corresponds to a grid of 200 candidate values of the stepsize parameter $\eta$.
In  Figures \ref{fig:loss-steps-splice}, \ref{fig:loss-steps-hinge}, \ref{fig:loss-steps-LAD}, \ref{fig:loss-steps-news20}, \ref{fig:loss-steps-2moons}, we report the optimality gap and corresponding stepsize (average over the coordinates) for three representative choices of $\eta.$ The central plot corresponds to a reference parameter $\eta$ obtained by averaging the best-performing values of $\eta$ (in terms of average performance) for AdaGrad and AdaGrad-Diff across the 10 random seeds.  The remaining plots use parameters obtained by scaling this reference value by factors of $10^{-1}$ and $10^{1}$, respectively, providing a simple heuristic for probing different operating regimes.\\

\noindent
When plotting the objective gap $F(\bx^\iter) - F_\star$, in general, the true optimal value $F_\star$ is unknown. We approximate it by keeping track of the smallest objective value observed across all methods, choices for $\eta$ in the grid considered, and iterations, over the 10 seeds. Additionally, the method with the fastest decay is run for 10 times the original budget to refine this estimate. All objective gaps are reported relative to this empirical $F_\star$.\\

\noindent
\textbf{Additional plots.} For completeness, we provide the same plots as those shown in the main body for all problem instances that were omitted there.
\begin{figure*}[ht]
    %\vskip 0.2in
    \centering
    \begin{subfigure}[t]{\textwidth}
        \centering
        \includegraphics[width=\linewidth]{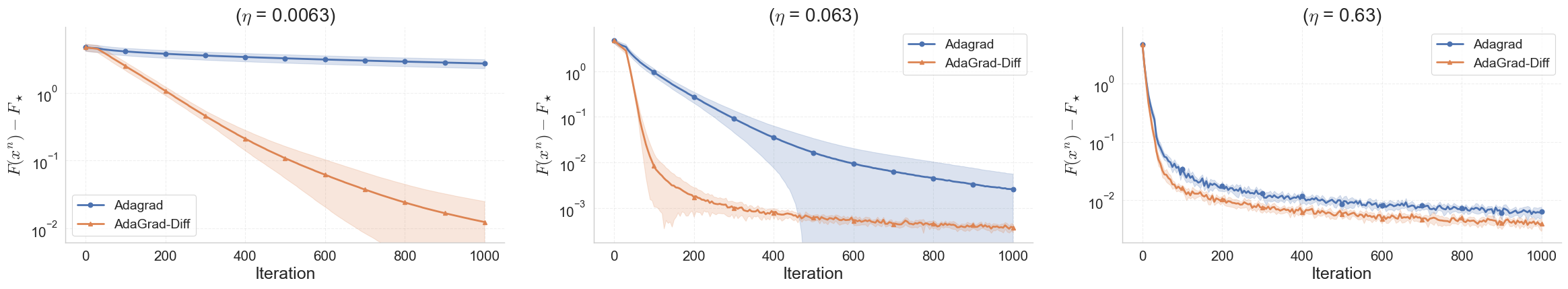}
        \caption{Optimality gap.}
        \label{fig:losses-hinge}
    \end{subfigure}
    \hfill
    \begin{subfigure}[t]{\textwidth}
        \centering
        \includegraphics[width=\linewidth]{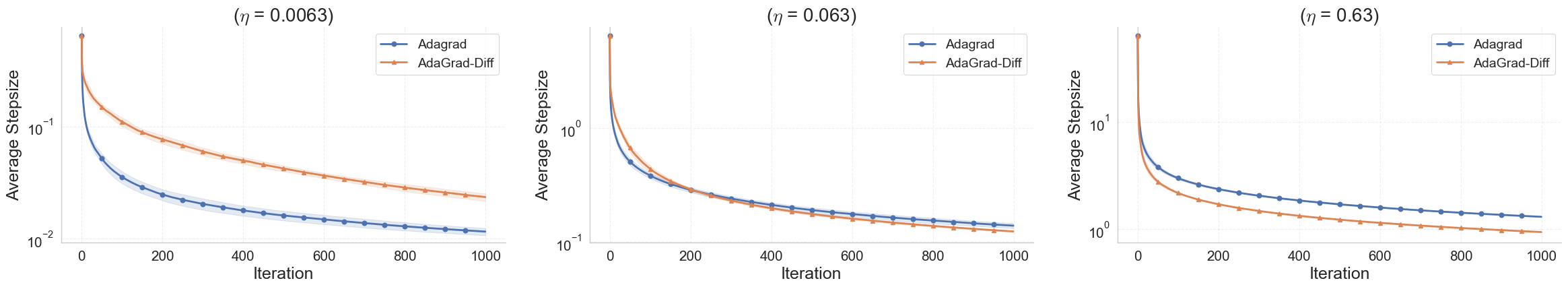}
        \caption{Stepsizes evolution.}
        \label{fig:stepsizes-hinge}
    \end{subfigure}
    \caption{Comparison of AdaGrad and AdaGrad-Diff for the Hinge Loss on the synthetic dataset.
    (a) Optimality gaps  across three stepsizes parameter settings
    ($\eta = 0.0063,\, 0.063,\, 0.63$).
    (b) Stepsize evolution across different choices for the parameter $\eta$. The plots show the average and standard deviation over 10 initializations of the algorithms.}
    \label{fig:loss-steps-hinge}
\end{figure*}
\begin{figure*}[ht]
    %\vskip 0.2in
    \centering
    \begin{subfigure}[t]{\textwidth}
        \centering
        \includegraphics[width=\linewidth]{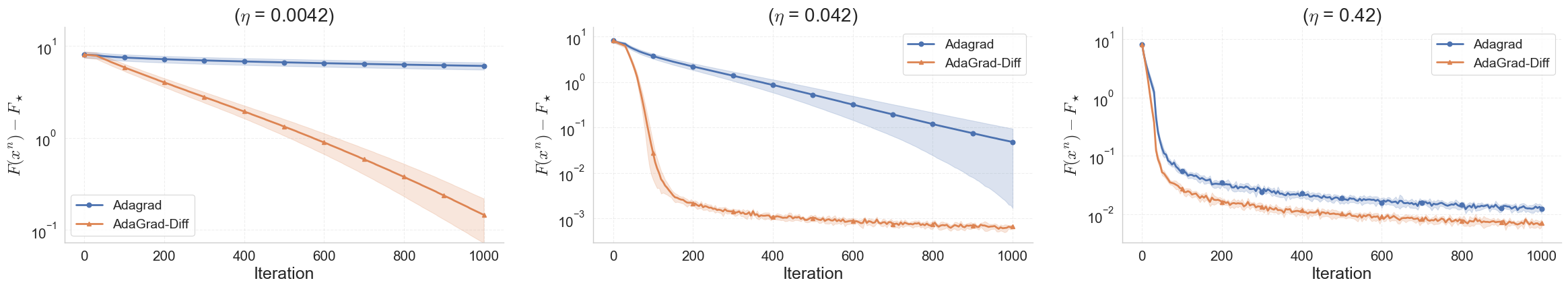}
        \caption{Optimality gap.}
        \label{fig:losses-splice}
    \end{subfigure}
    \hfill
    \begin{subfigure}[t]{\textwidth}
        \centering
        \includegraphics[width=\linewidth]{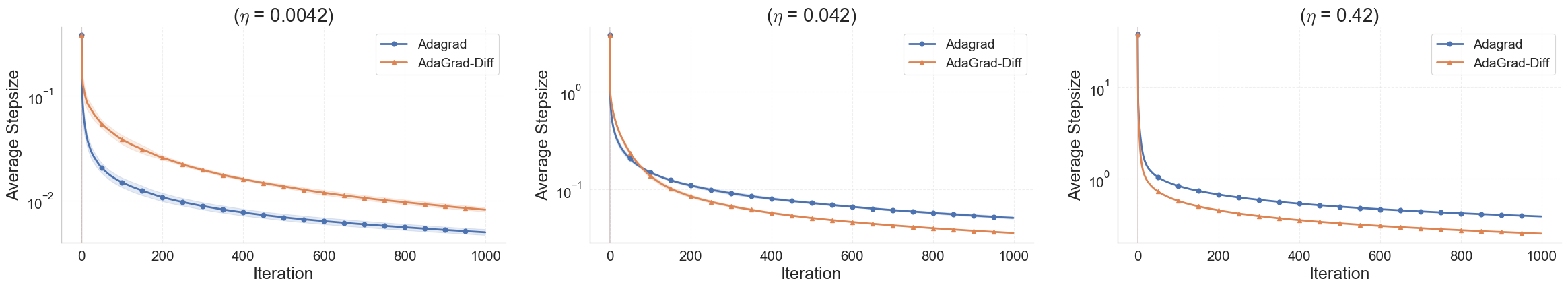}
        \caption{Stepsizes evolution.}
        \label{fig:stepsizes-LAD}
    \end{subfigure}
    \caption{Comparison of AdaGrad and AdaGrad-Diff for LAD Regression on the synthetic dataset.
    (a) Optimality gaps  across three stepsizes parameter settings
    ($\eta = 0.0042,\, 0.042,\, 0.42$).
    (b) Stepsizes evolution across different choices for the parameter $\eta$. The plots show the average and standard deviation over 10 initializations of the algorithms.}
    \label{fig:loss-steps-LAD}
\end{figure*}
\begin{figure*}[ht]
    %\vskip 0.2in
    \centering
    \begin{subfigure}[t]{\textwidth}
        \centering
        \includegraphics[width=\linewidth]{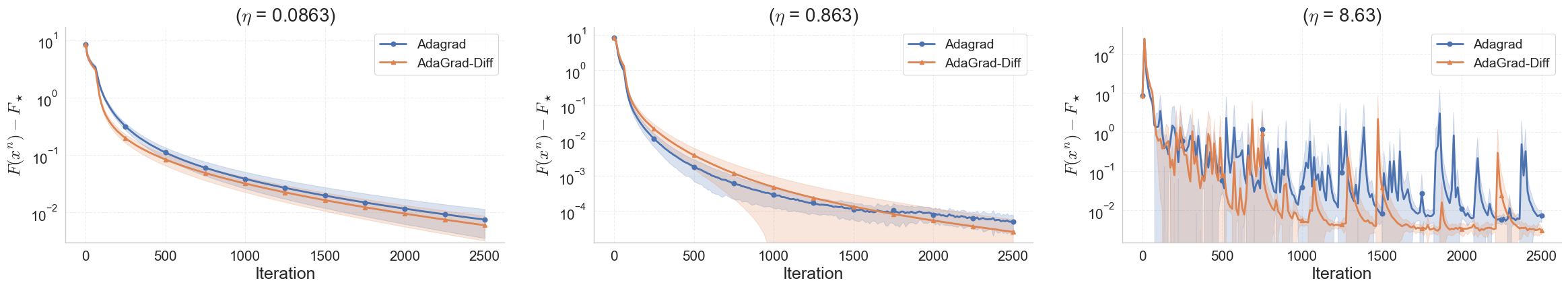}
        \caption{Optimality gap.}
        \label{fig:losses-news20}
    \end{subfigure}
    \hfill
    \begin{subfigure}[t]{\textwidth}
        \centering
        \includegraphics[width=\linewidth]{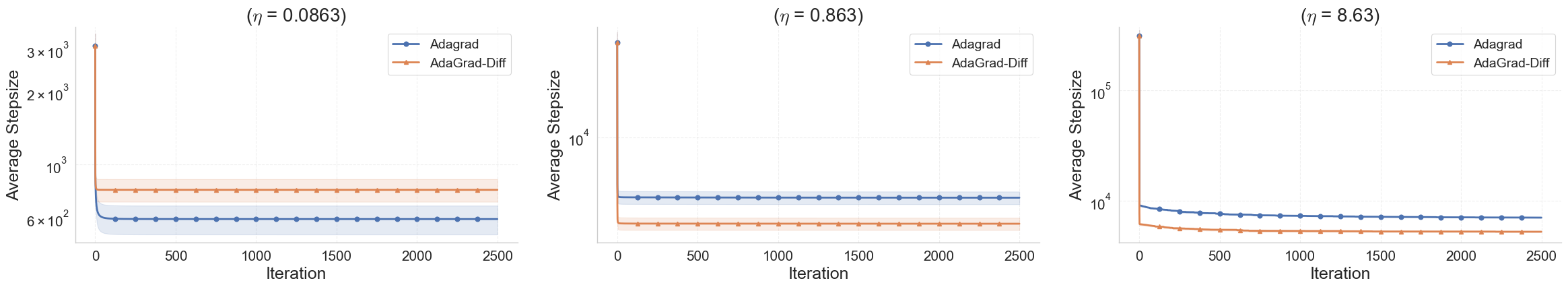}
        \caption{Stepsizes evolution.}
        \label{fig:stepsizes-news20}
    \end{subfigure}
    \caption{Comparison of AdaGrad and AdaGrad-Diff for Logistic Regression on the \texttt{news20.binary} dataset.
    (a) Optimality gaps  across three stepsizes parameter settings
    ($\eta = 0.0863,\, 0.863,\, 8.63$).
    (b) Stepsize evolution across different choices for the parameter $\eta$. The plots show the average and standard deviation over 10 initializations of the algorithms.}
    \label{fig:loss-steps-news20}
\end{figure*}
\begin{figure*}[ht]
    %\vskip 0.2in
    \centering
    \begin{subfigure}[t]{\textwidth}
        \centering
        \includegraphics[width=\linewidth]{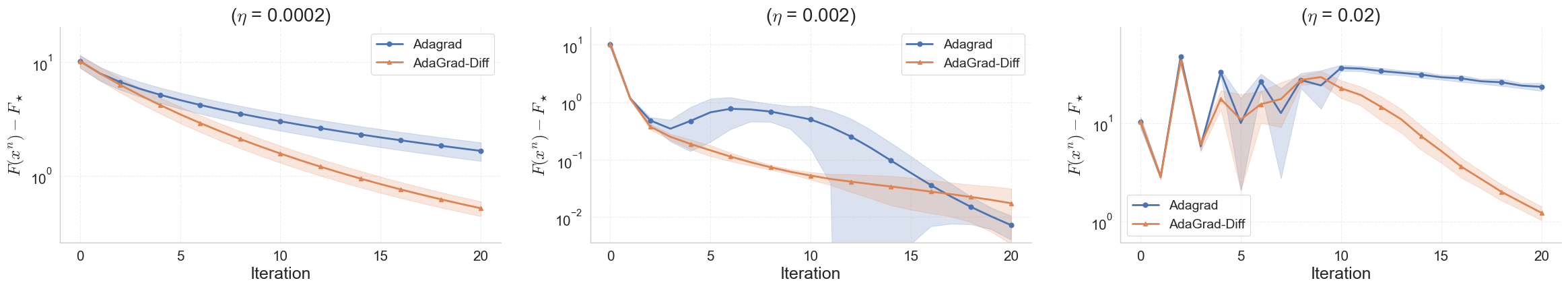}
        \caption{Optimality gap.}
    \end{subfigure}
    \hfill
    \begin{subfigure}[t]{\textwidth}
        \centering
        \includegraphics[width=\linewidth]{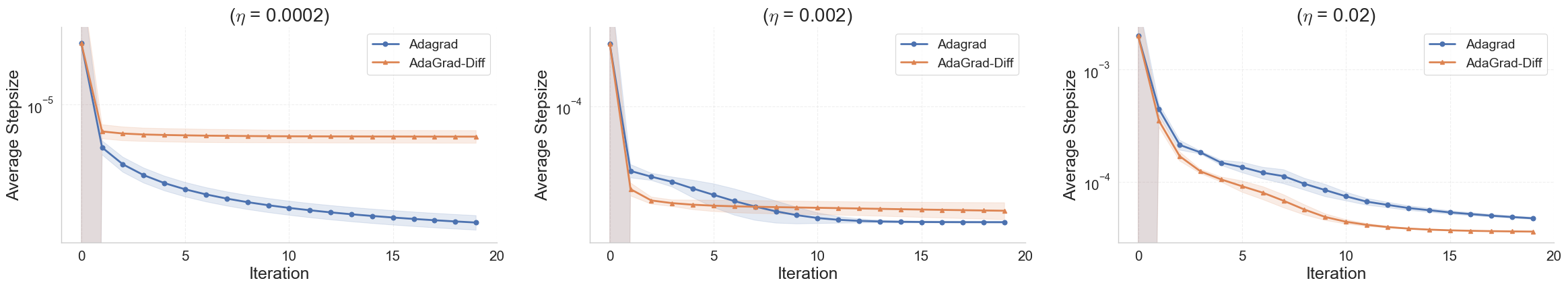}
        \caption{Stepsizes evolution.}
    \end{subfigure}
    \caption{Comparison of AdaGrad and AdaGrad-Diff for SVM Classification on the \texttt{2moons} dataset.
    (a) Optimality gaps  across three stepsizes parameter settings
    ($\eta = 0.0002,\, 0.002,\, 0.02$).
    (b) Stepsize evolution across different choices for the parameter $\eta$. The plots show the average and standard deviation over 10 initializations of the algorithms.}
    \label{fig:loss-steps-2moons}
\end{figure*}

%You can have as much text here as you want. The main body must be at most $8$ pages long. For the final version, one more page can be added. If you want, you can use an appendix like this one.

%The $\mathtt{\backslash onecolumn}$ command above can be kept in place if you prefer a one-column appendix, or can be removed if you prefer a two-column appendix.  Apart from this possible change, the style (font size, spacing, margins, page numbering, etc.) should be kept the same as the main body.
%%%%%%%%%%%%%%%%%%%%%%%%%%%%%%%%%%%%%%%%%%%%%%%%%%%%%%%%%%%%%%%%%%%%%%%%%%%%%%%
%%%%%%%%%%%%%%%%%%%%%%%%%%%%%%%%%%%%%%%%%%%%%%%%%%%%%%%%%%%%%%%%%%%%%%%%%%%%%%%

\end{document}